\theoremstyle{plain}
\newtheorem{theorem}{Theorem}
\newtheorem*{theorem*}{Theorem}
\newtheorem{lemma}{Lemma}
\newtheorem*{lemma*}{Lemma}
\theoremstyle{definition}
\theoremstyle{remark}
\newcommand{\method}{GradMix}
\begin{document}

\title{GradMix: Gradient-based Selective Mixup for Robust Data Augmentation in Class-Incremental Learning}

\author{Minsu Kim}
\affiliation{%
  \institution{Korea Advanced Institute of Science and Technology}
  \city{Daejeon}
  \country{Republic of Korea}
}
\email{ms716@kaist.ac.kr}

\author{Seong-Hyeon Hwang}
\affiliation{%
  \institution{Korea Advanced Institute of Science and Technology}
  \city{Daejeon}
  \country{Republic of Korea}
}
\email{sh.hwang@kaist.ac.kr}

\author{Steven Euijong Whang}
\authornote{Corresponding author.}
\affiliation{%
  \institution{Korea Advanced Institute of Science and Technology}
  \city{Daejeon}
  \country{Republic of Korea}
}
\email{swhang@kaist.ac.kr}


\begin{abstract}

In the context of continual learning, acquiring new knowledge while maintaining previous knowledge presents a significant challenge. Existing methods often use experience replay techniques that store a small portion of previous task data for training. In experience replay approaches, data augmentation has emerged as a promising strategy to further improve the model performance by mixing limited previous task data with sufficient current task data. However, we theoretically and empirically analyze that training with mixed samples from random sample pairs may harm the knowledge of previous tasks and cause greater catastrophic forgetting. We then propose GradMix, a robust data augmentation method specifically designed for mitigating catastrophic forgetting in class-incremental learning. GradMix performs gradient-based selective mixup using a class-based criterion that mixes only samples from helpful class pairs and not from detrimental class pairs for reducing catastrophic forgetting. Our experiments on various real datasets show that GradMix outperforms data augmentation baselines in accuracy by minimizing the forgetting of previous knowledge. The source code is available at: https://github.com/minsu716-kim/GradMix.
\end{abstract}

\begin{CCSXML}
<ccs2012>
 <concept>
  <concept_id>00000000.0000000.0000000</concept_id>
  <concept_desc>Computing methodologies~Machine learning</concept_desc>
  <concept_significance>500</concept_significance>
 </concept>
 <concept>
  <concept_id>00000000.00000000.00000000</concept_id>
  <concept_desc>Computing methodologies~Continual learning</concept_desc>
  <concept_significance>300</concept_significance>
 </concept>
 <concept>
  <concept_id>00000000.00000000.00000000</concept_id>
  <concept_desc>Computing methodologies~Lifelong machine learning</concept_desc>
  <concept_significance>100</concept_significance>
 </concept>
 <concept>
  <concept_id>00000000.00000000.00000000</concept_id>
  <concept_desc>Computing methodologies~Supervised learning</concept_desc>
  <concept_significance>100</concept_significance>
 </concept>
</ccs2012>
\end{CCSXML}

\ccsdesc[500]{Computing methodologies~Machine learning}

\keywords{Continual learning, Class-incremental learning, Data augmentation}



\maketitle


\section{Introduction}
\label{intro}

Humans can learn incrementally and accumulate knowledge as they encounter new experiences throughout their lifetime. In contrast, traditional machine learning models do not have the ability to update themselves after learning from the given data once. Continual learning, also known as lifelong learning\,\cite{DBLP:series/synthesis/2016Chen}, refers to a paradigm in artificial intelligence where a model continuously learns new tasks over time while retaining the knowledge of previously-acquired tasks. Continual learning is becoming crucial for real-world applications where the environment or tasks change over time, such as robotics\,\cite{DBLP:journals/inffus/LesortLSMFR20}, autonomous driving\,\cite{DBLP:journals/nn/VerwimpYPHMPLT23}, personalized recommendations\,\cite{DBLP:conf/sc/XieRLYXWLAXS20, DBLP:conf/sigir/CaiZDD0TZ22}, and more.

There are three main scenarios of continual learning\,\cite{DBLP:journals/corr/abs-1904-07734}: task-incremental, domain-incremental, and class-incremental learning. In this paper, we focus on class-incremental learning, which is the most challenging scenario where a model incrementally learns new classes of data over tasks. A key challenge for class-incremental learning is catastrophic forgetting\,\cite{MCCLOSKEY1989109}, where a model forgets knowledge of previously-learned classes while learning new classes. Hence, balancing the trade-off between retaining previous knowledge and adapting to new information is important to achieve effective class-incremental learning\,\cite{ABRAHAM200573, 10.3389/fpsyg.2013.00504, DBLP:conf/cvpr/KimH23}.

Several approaches have been proposed to mitigate catastrophic forgetting in class-incremental learning. One simple and effective approach is experience replay\,\cite{DBLP:journals/corr/abs-1902-10486, DBLP:conf/nips/RolnickASLW19}, which stores a subset of data from previous tasks in a buffer and uses it for training with the recently-arrived current task data. As the performance of experience replay is better than other buffer-free approaches\,\cite{DBLP:conf/eccv/LiH16, DBLP:journals/corr/KirkpatrickPRVD16, DBLP:conf/icml/ZenkePG17}, it is now a standard basis method and widely used with other approaches in the framework of class-incremental learning\,\cite{DBLP:conf/cvpr/RebuffiKSL17, DBLP:conf/nips/BuzzegaBPAC20, DBLP:conf/cvpr/YanX021}.

However, experience replay does not fully solve the problem of catastrophic forgetting. Since there is a constraint on the buffer size in the continual learning literature due to memory capacity, computational efficiency, and privacy issues\,\cite{DBLP:journals/pami/LangeAMPJLST22, DBLP:journals/pami/WangZSZ24}, we can only store a small portion of data for previous tasks in a buffer. This memory constraint leads to an imbalance between the previous task data and the current task data, where there is a large amount of current task data compared to a small amount of buffer data for previous tasks. After training with the imbalanced data at each task, the model performance of previous tasks is much lower than that of the current task.

One solution to address the problem of limited buffer data is data augmentation. For image datasets, transformations like random rotation, flipping, and cropping can be used to increase the proportion of buffer data. However, transforming images still has limitations in improving the diversity of buffer data because the source buffer size is already too small. Here mixup-based data augmentation methods may be better to improve the diversity of buffer data by mixing them with a large amount of current task data. In practice, several works empirically show that applying mixup-based methods to class-incremental learning can further improve the model performance\,\cite{DBLP:conf/cvpr/MiKLYF20, DBLP:conf/cvpr/BangKY0C21}. However, these works overlook catastrophic forgetting while applying mixup-based methods, focusing only on their augmentation and generalization capabilities. We theoretically and empirically find that applying vanilla Mixup can worsen catastrophic forgetting, where the cause is an interference between the average gradient vectors of augmented training data and previous task data when they are in the opposite direction.

\begin{figure}[t]
    \centering
    \includegraphics[width=\linewidth]{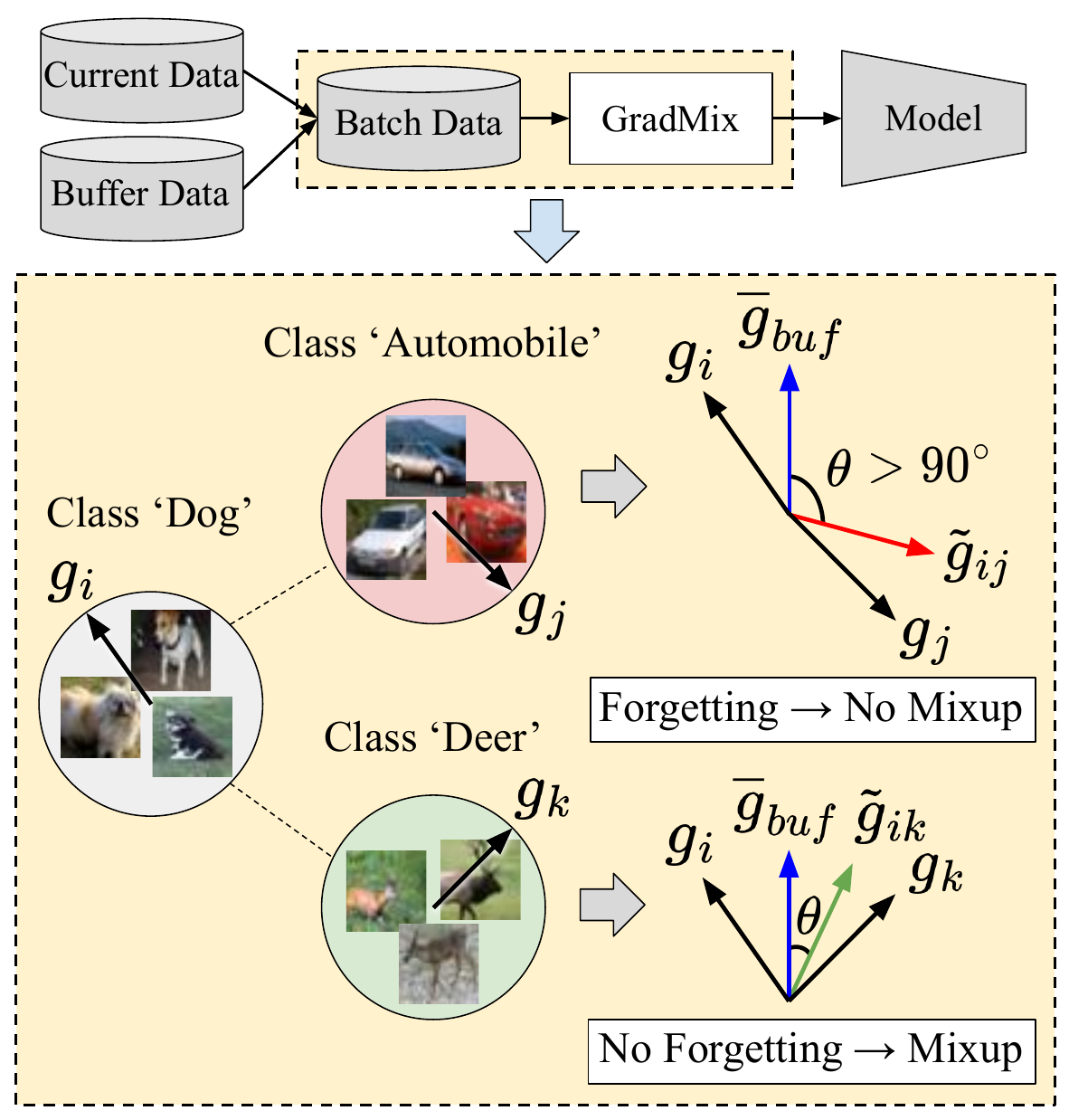}
    \caption{Overview of \method{}. The gradients of the original samples and the mixed samples are denoted by $g$ and $\tilde{g}$, respectively. Given the average gradient of buffer data (denoted as $\overline{g}_{buf}$), \method{} performs gradient-based selective mixup to mitigate catastrophic forgetting by mixing only samples from helpful class pairs (e.g., `Dog'--`Deer', where $\theta \leq 90^{\circ}$) and not from detrimental class pairs (e.g., `Dog'--`Automobile', where $\theta > 90^{\circ}$).}
    \label{fig:gradmix_overview}
    \vspace{-0.4cm}
\end{figure}

We then propose \method{}, a robust data augmentation method specifically designed for mitigating catastrophic forgetting in class-incremental learning. \method{} performs gradient-based selective mixup as shown in Fig.~\ref{fig:gradmix_overview}, instead of mixing random samples. In the experience replay-based framework, if we apply vanilla Mixup to the batch data that mixes randomly selected sample pairs, the gradients of the mixed samples can interfere (i.e., have an opposite direction) with the gradient of buffer data. This gradient interference results in catastrophic forgetting of the unstable knowledge from previous tasks due to the limited buffer data. Specifically, the magnitude of catastrophic forgetting can be approximated by the inner product of the gradient vectors of the mixed training data and the previous task data. Using this result, we propose a \textit{gradient-based selective mixup criterion} when choosing sample pairs for mixup, which only mixes helpful sample pairs and avoids mixing detrimental sample pairs to minimize catastrophic forgetting. In addition, by exploiting the observation that training data within the same class share similar gradients, we propose a class-based criterion instead of a sample-based criterion to improve efficiency. Experiments on real datasets show that \method{} obtains better accuracy results than various data augmentation baselines by using our gradient-based selective mixup algorithm.

\textbf{Summary of Contributions:} (1) We theoretically and empirically demonstrate the occurrence and cause of greater catastrophic forgetting while applying vanilla Mixup in class-incremental learning; (2) We propose \method{}, which is the first robust data augmentation method against catastrophic forgetting in class-incremental learning; (3) We show \method{} outperforms data augmentation baselines in terms of accuracy on various datasets.


\section{Related Work}
\label{related_work}


\paragraph{Class-Incremental Learning}

Class-incremental learning is a challenging scenario within the field of continual learning, focusing on incrementally learning new classes of data over time. The main problem in class-incremental learning is catastrophic forgetting\,\cite{MCCLOSKEY1989109}, where a model forgets knowledge about previously-learned classes while acquiring new classes. Existing continual learning techniques can be categorized as experience replay, parameter regularization, and dynamic networks. Experience replay methods store and replay a subset of previous data to explicitly mitigate forgetting\,\cite{DBLP:journals/corr/abs-1902-10486, DBLP:conf/nips/AljundiBTCCLP19, DBLP:conf/aaai/ShimMJSKJ21, DBLP:conf/cvpr/MaiLKS21}. Parameter regularization methods add constraints on model parameters to prevent drastic changes that could lead to forgetting\,\cite{DBLP:journals/corr/KirkpatrickPRVD16, DBLP:conf/icml/ZenkePG17, DBLP:conf/eccv/AljundiBERT18, DBLP:conf/iclr/KolouriKSP20}. Dynamic network methods dynamically expand or adapt the network architecture to learn new tasks without overwriting prior knowledge\,\cite{DBLP:conf/cvpr/YanX021, DBLP:conf/eccv/WangZYZ22, DBLP:conf/iclr/0001WYZ23}. There are also other lines of research including knowledge distillation\,\cite{DBLP:conf/eccv/LiH16, DBLP:conf/cvpr/RebuffiKSL17, DBLP:conf/cvpr/HouPLWL19} and model rectification\,\cite{DBLP:conf/cvpr/WuCWYLGF19, DBLP:conf/cvpr/ZhaoXGZX20}. Among these categories, experience replay is widely used in the class-incremental learning literature, and it is often combined with other techniques from different categories\,\cite{DBLP:conf/nips/BuzzegaBPAC20, DBLP:conf/cvpr/YanX021}. In this paper, we focus on the limitation of experience replay methods where the amount of stored buffer data is limited and propose a robust data augmentation method that minimizes the forgetting of previous tasks.


\paragraph{Data Augmentation}

Data augmentation is an effective method to enhance the generalization and robustness of models by generating additional training data. Traditional data augmentation techniques include image transformations such as random rotation, flipping, and cropping, which introduce variations to the input images\,\cite{DBLP:journals/jbd/ShortenK19, DBLP:journals/corr/abs-2204-08610}. Recent works propose an automatic strategy to find an effective data augmentation policy\,\cite{DBLP:conf/cvpr/CubukZMVL19, DBLP:conf/nips/CubukZS020, DBLP:conf/iccv/MullerH21}. However, these policy-based approaches do not consider continual learning scenarios, which may result in severe catastrophic forgetting as we show in experiments.

Another line of research is Mixup\,\cite{DBLP:conf/iclr/ZhangCDL18}, which generates new synthetic data by linearly interpolating between pairs of features and their corresponding labels, and there are variant mixup-based methods\,\cite{DBLP:conf/icml/VermaLBNMLB19, DBLP:conf/iccv/YunHCOYC19, harris2021fmixenhancingmixedsample}. In addition, imbalance-aware mixup methods have been proposed to improve model robustness on imbalanced datasets\,\cite{DBLP:conf/eccv/ChouCPWJ20, DBLP:conf/miccai/GaldranCB21}. Furthermore, selective mixup is introduced by selecting sample pairs for mixup based on specific criteria instead of randomly, and it improves the model's ability for domain adaptation\,\cite{DBLP:conf/aaai/XuZNLWTZ20} and out-of-distribution (OOD) generalization\,\cite{DBLP:conf/icml/Yao0LZL0F22}.

A recent line of research applies existing data augmentation methods in class-incremental learning to further improve accuracy. Most existing works simply combine experience replay methods with vanilla Mixup\,\cite{DBLP:conf/cvpr/MiKLYF20, DBLP:conf/nips/KumariW0B22}, RandAugment\,\cite{DBLP:conf/nips/ZhangPFBLJ22}, or a combination of existing data augmentation methods\,\cite{DBLP:conf/cvpr/BangKY0C21}. Extending this line of work,\,\cite{DBLP:conf/aaai/QiangH0L0Z23} improves the sampling and mixing factor strategies in Mixup to address distribution discrepancies between the imbalanced training set and the balanced test set in continual learning, which is similar to existing imbalance-aware mixup methods\,\cite{DBLP:conf/eccv/ChouCPWJ20, DBLP:conf/miccai/GaldranCB21}. Another work\,\cite{DBLP:conf/nips/KimXKK022} addresses class-incremental learning by using out-of-distribution detection methods that utilize image rotation as a data augmentation technique. Another line of research is using generative models to generate data for previous tasks\,\cite{DBLP:conf/nips/ShinLKK17, DBLP:conf/icml/GaoL23a, DBLP:conf/cvpr/KimCKTB24}, but this approach incurs additional costs for continuously training generative models across tasks. In addition, there is a work\,\cite{DBLP:conf/nips/ZhuCZL21} to achieve comparable performance with experience replay methods without storing buffer data by utilizing class augmentation and semantic augmentation techniques. However, all these works simply use existing data augmentation techniques for further improvements and do not propose any novel data augmentation methods specifically designed for continual learning scenarios. In comparison, \method{} is a robust data augmentation method designed to minimize catastrophic forgetting in class-incremental learning. We leave more detailed comparisons with these works in Sec.~\ref{appendix:related_work}.


\section{Preliminaries}


\paragraph{Notations}

We set notations for class-incremental learning. We denote each task as $T_l = \{d_i=(x_i, y_i)\}_{i=1}^{k}$, where $x$, $y$, and $k$ represent the feature, the true label, and the number of samples, respectively. We assume disjoint sets of classes as tasks, where there are no overlapping classes between different tasks. The sequence of tasks is given by $T = \{T_1, \ldots , T_L\}$, where $L$ is the total number of tasks. At the $l^{th}$ step, we consider $T_l$ as the current task and $\{T_1, \ldots , T_{l-1}\}$ as previous tasks. Let the set of previous classes be $\mathbb{Y}_p$ and the set of current classes be $\mathbb{Y}_c$. We store a small subset of data for the previous tasks in a buffer to employ experience replay during training. We denote the subset of each previous task as $\mathcal{M}_t = \{d_j=(x_j, y_j)\}_{j=1}^{m}$, where $m$ is the number of samples. The buffer data for the previous tasks is then represented as $\mathcal{M} = \{\mathcal{M}_1, \ldots , \mathcal{M}_{l-1}\}$. Since we cannot store large buffer data due to the possible problems of memory, efficiency, and privacy during continual learning, we assume the buffer size is small (i.e., $m \ll k$)\,\cite{DBLP:journals/corr/abs-1902-10486}. 


\paragraph{Catastrophic Forgetting}

Catastrophic forgetting is the main challenge of continual learning, where the performance on previously-learned tasks degrades while adapting to the current task. The condition for the occurrence of catastrophic forgetting is theoretically derived using the gradients of previous and current data with respect to the model\,\cite{DBLP:conf/nips/Lopez-PazR17, DBLP:conf/iclr/ChaudhryRRE19, DBLP:conf/nips/AljundiLGB19}. At the $l^{th}$ step, let $G_y$ be a group of samples whose label is $y \in \mathbb{Y}_p$, and $T_l$ be the current task data used for training. We can approximate the average loss of $G_y$ after model training with $T_l$ by employing first-order Taylor series approximation as follows:
\begin{equation} \label{eq:loss}
\tilde{\ell}(f_{\theta}, G_y) = \ell(f_{\theta}^{l-1}, G_y) - \eta \nabla_{\theta} \ell(f_{\theta}^{l-1}, G_y)^\top \nabla_{\theta} \ell(f_{\theta}^{l-1}, T_l),
\end{equation}
where $\tilde{\ell}(\cdot,\cdot)$ is the approximated average cross-entropy loss, $\ell(\cdot,\cdot)$ is the exact average cross-entropy loss, $f_{\theta}^{l-1}$ is a previous model, $f_{\theta}$ is an updated model, and $\eta$ is a learning rate. Hence, the inner product of the two average gradient vectors of the previous and current data determines whether catastrophic forgetting occurs or not. If the inner product value is negative, it leads to the increase of the loss for $G_y$ after training, which implies that catastrophic forgetting occurs for $G_y$ while learning $T_l$, and vice versa.


\paragraph{Mixup}

Mixup\,\cite{DBLP:conf/iclr/ZhangCDL18} is a data augmentation method that generates new training samples by combining random pairs of existing samples of features and their labels. Given two arbitrary samples $d_i=(x_i, y_i)$ and $d_j=(x_j, y_j)$ from the dataset, a new mixed sample $\tilde{d}_{ij} = (\tilde{x}_{ij}, \tilde{y}_{ij})$ is generated from the two original samples as follows:
\begin{equation} \label{eq:mixup}
\tilde{x}_{ij} = \lambda x_i + (1 - \lambda) x_j, \ \tilde{y}_{ij} = \lambda y_i + (1 - \lambda) y_j,
\end{equation}
where $\lambda \sim \text{Beta}(\alpha, \alpha)$, with a parameter $\alpha > 0$. Mixup often uses the value of $\lambda$ close to 0 or 1 to prevent the generation of out-of-distribution samples, and it has been shown to improve model generalization and robustness across various tasks.


\section{Catastrophic Forgetting in Mixup}


In this section, we demonstrate that simply applying Mixup to experience replay methods in class-incremental learning can lead to greater catastrophic forgetting of previously learned classes. In this paper, we use a simple yet representative experience replay method, ER\,\cite{DBLP:journals/corr/abs-1902-10486}, as a basis method for class-incremental learning, although other methods could also be used. 



\subsection{Theoretical Analysis}
\label{mixup_theoretical}

We first provide a theoretical analysis of how applying Mixup to ER may result in greater catastrophic forgetting. In the training procedure of ER, batch data consisting of both buffer data and current task data are constructed and used for training. If we apply Mixup to ER, we randomly select two samples in the batch data and mix them using Eq.~\ref{eq:mixup} to generate mixed samples for training. By employing the first-order Taylor series approximation as in Eq.~\ref{eq:loss}, we compute and compare the approximated average losses of previous tasks after training with either the original sample or the mixed sample. The following theorem provides a sufficient condition under which training with the mixed sample may lead to greater catastrophic forgetting compared to training with the original sample.
\begin{theorem} \label{th1}
Let $\tilde{d}_{ij}$ be a mixed sample by mixing an original training sample $d_i$ and a randomly selected sample $d_j$. If the gradient of the mixed sample satisfies the following condition: 
\begin{align*}
\sum_{y \in \mathbb{Y}_p} 
\nabla_{\theta} \ell(f_{\theta}^{l-1}, G_y)^\top 
\left( \nabla_{\theta} \ell(f_{\theta}^{l-1}, d_i) 
- \nabla_{\theta} \ell(f_{\theta}^{l-1}, \tilde{d}_{ij}) \right)
> 0,
\end{align*}
then training with the mixed sample leads to a higher average loss for previous tasks and worsens catastrophic forgetting compared to training with the original sample.
\end{theorem}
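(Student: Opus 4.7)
The plan is to directly apply the first-order Taylor approximation of Eq.~\ref{eq:loss} twice, once for a gradient step taken on the original sample $d_i$ and once for a step taken on the mixed sample $\tilde{d}_{ij}$, and then compare the resulting average losses across all previous class groups. The observation that Eq.~\ref{eq:loss} was derived for an arbitrary training batch means we can specialize it to a single training datum (either $d_i$ or $\tilde{d}_{ij}$) without additional work, since the derivation only uses that the update direction is $-\eta \nabla_\theta \ell(f_\theta^{l-1}, \cdot)$.

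First, I would fix an arbitrary previous class $y \in \mathbb{Y}_p$ and write down two instantiations of Eq.~\ref{eq:loss}: one with the training signal $d_i$, giving $\tilde{\ell}_{d_i}(f_\theta, G_y) = \ell(f_\theta^{l-1}, G_y) - \eta\, \nabla_\theta \ell(f_\theta^{l-1}, G_y)^\top \nabla_\theta \ell(f_\theta^{l-1}, d_i)$, and one with the training signal $\tilde{d}_{ij}$, giving the analogous expression with $\tilde{d}_{ij}$ in place of $d_i$. Next, I would subtract these to cancel the unperturbed loss term $\ell(f_\theta^{l-1}, G_y)$, leaving a single inner product of $\nabla_\theta \ell(f_\theta^{l-1}, G_y)$ against the gradient difference $\nabla_\theta \ell(f_\theta^{l-1}, d_i) - \nabla_\theta \ell(f_\theta^{l-1}, \tilde{d}_{ij})$, scaled by $\eta$.

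Then I would sum this per-class difference over all $y \in \mathbb{Y}_p$, obtaining that the total approximated loss on previous tasks after a mixed-sample update exceeds that after an original-sample update by exactly $\eta \sum_{y \in \mathbb{Y}_p} \nabla_{\theta} \ell(f_{\theta}^{l-1}, G_y)^\top \bigl(\nabla_{\theta} \ell(f_{\theta}^{l-1}, d_i) - \nabla_{\theta} \ell(f_{\theta}^{l-1}, \tilde{d}_{ij})\bigr)$. Since $\eta > 0$, the stated hypothesis makes this quantity strictly positive, so training with $\tilde{d}_{ij}$ yields a strictly larger approximated average loss over $\mathbb{Y}_p$, which by the characterization in the Catastrophic Forgetting paragraph means greater forgetting.

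The argument is essentially a bookkeeping exercise on top of Eq.~\ref{eq:loss}, so I do not expect a serious technical obstacle. The only subtle points worth stating carefully are: (i) justifying that Eq.~\ref{eq:loss} applies to a single datum used as the training signal, by noting that the Taylor expansion only depends on the update direction; (ii) being explicit that the comparison is between two hypothetical updates from the \emph{same} starting parameters $f_\theta^{l-1}$, so that the loss and gradient evaluations on $G_y$ are identical across the two cases and cleanly cancel under subtraction; and (iii) flagging that the conclusion is a statement about the first-order approximation $\tilde{\ell}$ rather than the exact post-update loss, consistent with how forgetting is characterized earlier in the paper.
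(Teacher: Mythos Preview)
Your proposal is correct and mirrors the paper's own proof almost exactly: the paper also instantiates Eq.~\ref{eq:loss} once with $d_i$ and once with $\tilde{d}_{ij}$, subtracts to cancel $\ell(f_\theta^{l-1}, G_y)$, and observes that the hypothesis together with $\eta>0$ forces $L_{\text{ER}}-L_{\text{Mixup}}<0$. The only cosmetic difference is that the paper carries the normalization factor $\tfrac{1}{|\mathbb{Y}_p|}$ throughout (average rather than total loss), which does not affect the sign argument.
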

The proof is in Sec.~\ref{appendix:theoretical_analysis_mixup}. Although Mixup has the advantage of augmenting limited buffer data, an excessive number of mixed samples satisfying the sufficient condition may reduce the average accuracy of previous tasks during training.


\subsection{Empirical Analysis}
\label{emp_analysis}

We next perform an empirical analysis to show that mixing random pairs of samples may induce greater forgetting of previous tasks. In the experiments, we compare the model performance between two methods: ER and ER applied with Mixup. We consider not only the average task-wise accuracy, which is commonly used in evaluations, but also the individual class-wise accuracy for each task on the MNIST and CIFAR-10 datasets as shown in Fig.~\ref{fig:mnist_cifar10_forget_main}. Although MNIST is not widely used in the Mixup literature, we include it because it is a common benchmark in the continual learning literature. We configure a class-incremental learning setup, where a total of 10 classes are distributed across 5 tasks, with 2 classes per task. For the class-wise accuracy, we present the results on the last task without loss of generality.

\begin{figure}[t]
    \centering
    \begin{minipage}[t]{0.24\textwidth}
        \centering
        \includegraphics[width=\linewidth]{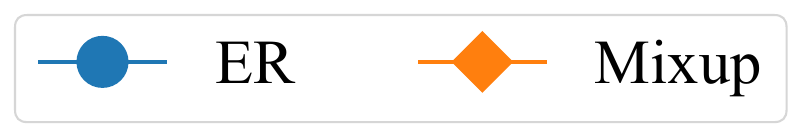}
    \end{minipage}
    \begin{minipage}[t]{0.48\textwidth}
        \centering
        \includegraphics[width=0.494\linewidth]{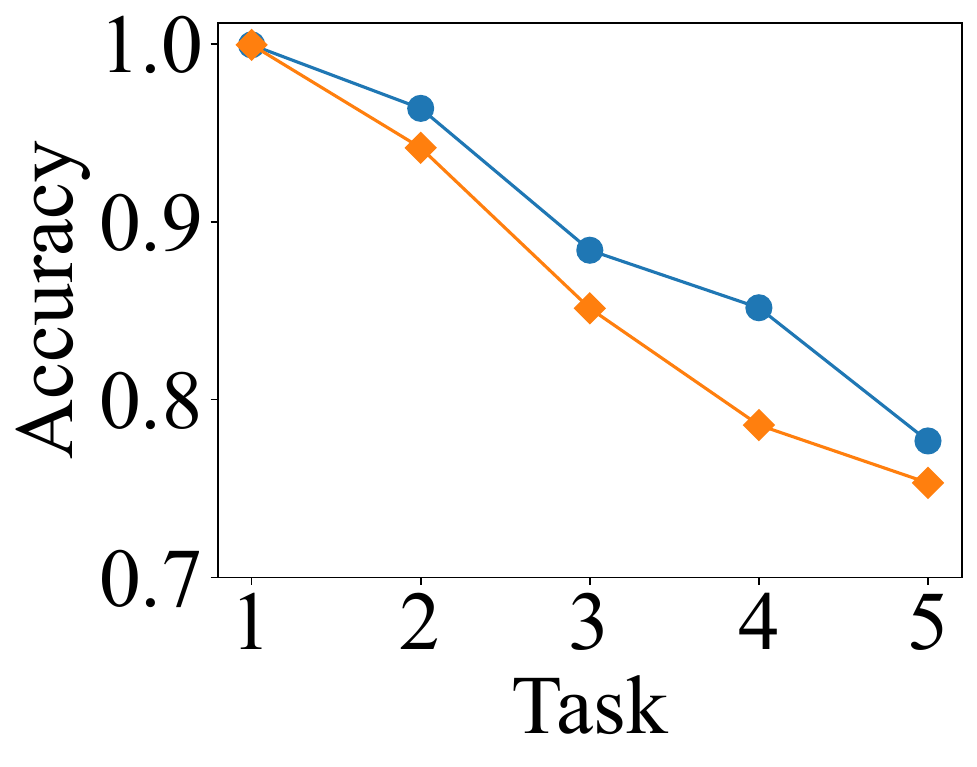}
        \includegraphics[width=0.494\linewidth]{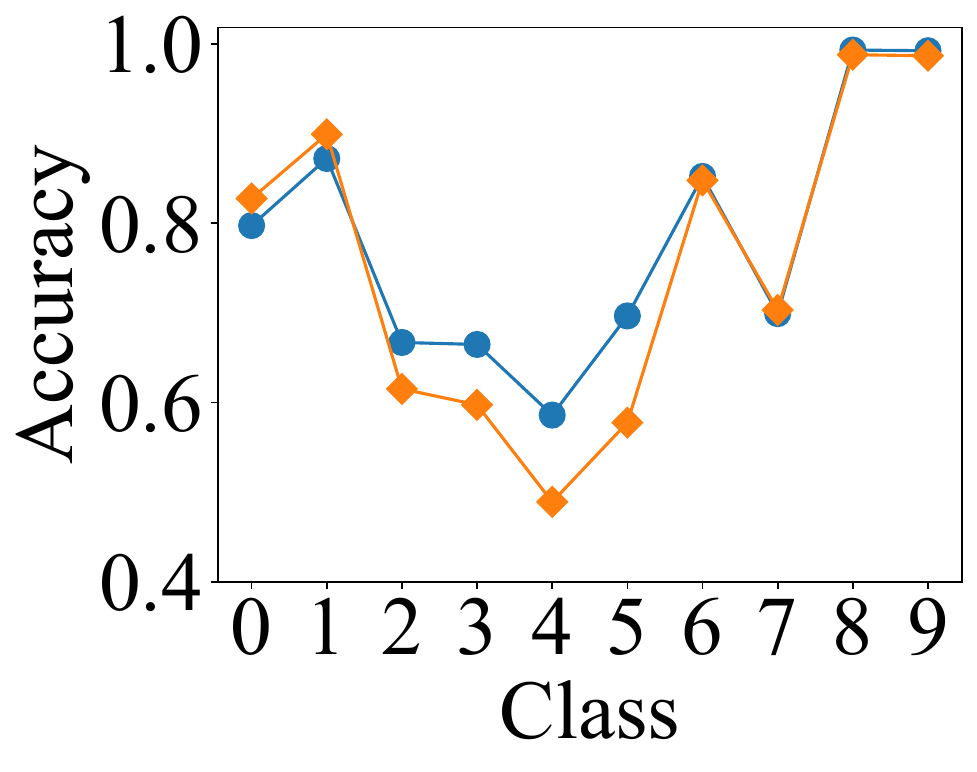}
        \vskip -0.1in
        \caption*{(a) Task-wise and class-wise accuracy (MNIST).}
        \vspace{+0.2cm}
        \label{fig:mnist_forget_main}
    \end{minipage}
    \begin{minipage}[t]{0.48\textwidth}
        \centering
        \includegraphics[width=0.494\linewidth]{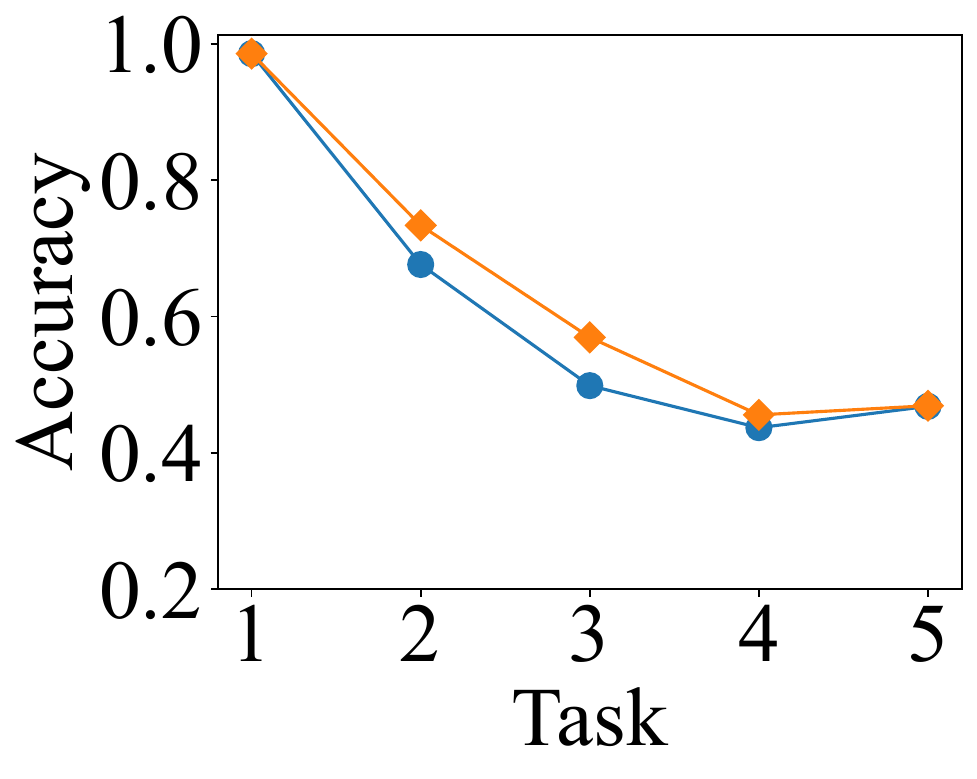}
        \includegraphics[width=0.494\linewidth]{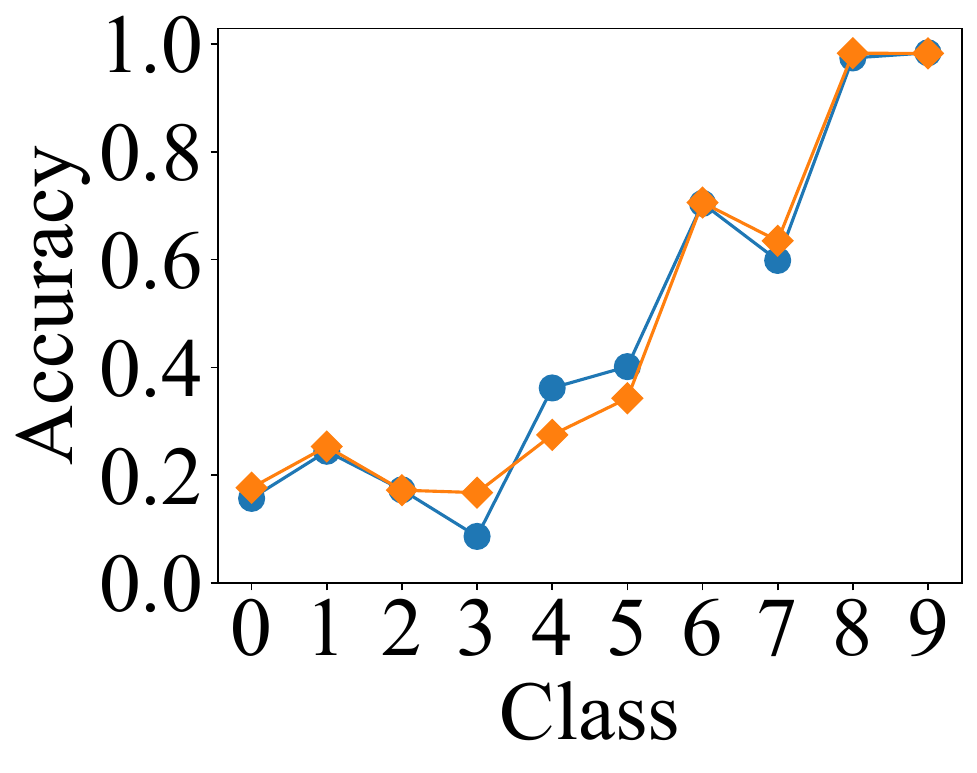}
        \vskip -0.1in
        \caption*{(b) Task-wise and class-wise accuracy (CIFAR-10).}
        \label{fig:cifar10_forget_main}
    \end{minipage}
    \vskip -0.05in
    \caption{Comparison of task-wise accuracy over all tasks and class-wise accuracy on the last task between ER and Mixup on the MNIST and CIFAR-10 datasets.}
    \label{fig:mnist_cifar10_forget_main}
    \vspace{-0.35cm}
\end{figure}

For MNIST, Mixup degrades the accuracy both in task-wise and class-wise. For CIFAR-10, Mixup helps improve task-wise accuracy, but it reduces the accuracy for some previous classes (4th and 5th) at the last task. The accuracy of the current classes (8th and 9th) is not affected by Mixup due to the sufficient amount of current task data compared to the limited buffer data from previous tasks. We find that 51.4\% and 47.2\% of mixed samples update the model with gradients that satisfy the sufficient condition for greater catastrophic forgetting (in Theorem~\ref{th1}) on the MNIST and CIFAR-10 datasets, respectively. Although Mixup sometimes improves the accuracy, we believe that the gain primarily comes from augmenting the limited buffer data, rather than from mitigating catastrophic forgetting. The results for the Fashion-MNIST, CIFAR-100, and Tiny ImageNet datasets are similar and shown in Sec.~\ref{appendix:exp-forgetting-mixup}. Based on our empirical analysis, we believe that greater catastrophic forgetting may be caused by the unpredictable gradients of randomly mixed samples, and the accuracy can be further improved by using a selective mixup. In the next section, we theoretically derive the gradients of mixed samples and propose a selective mixup algorithm to mitigate catastrophic forgetting using these gradients.

\section{GradMix}


\subsection{Problem Formulation}


To minimize the updated average loss of previous tasks by mitigating catastrophic forgetting, we formulate a data selection optimization problem that selects the optimal mixing sample corresponding to each training sample $d_i$ as follows:
\begin{equation}
\underset{d_j \in T_l \cup \mathcal{M}}{\arg\min} \frac{1}{|\mathbb{Y}_p|} \sum_{y \in \mathbb{Y}_p} \tilde{\ell}(f_{\theta}, G_y),
\end{equation}
where the updated loss after training with the mixed sample $\tilde{d}_{ij}$ is approximated as
\begin{equation}
\tilde{\ell}(f_{\theta}, G_y) = \ell(f_{\theta}^{l-1}, G_y) - \eta \nabla_{\theta} \ell(f_{\theta}^{l-1}, G_y)^\top \nabla_{\theta} \ell(f_{\theta}^{l-1}, \tilde{d}_{ij})
\end{equation}
for a group of data $G_y$ by using Eq.~\ref{eq:loss}. Since $\ell(f_{\theta}^{l-1}, G_y)$ and $\eta$ are constant values, we can ignore them and change the argmin to an argmax:
\begin{equation} \label{optimization}
\underset{d_j \in T_l \cup \mathcal{M}}{\arg\max} \biggl(\frac{1}{|\mathbb{Y}_p|} \sum_{y \in \mathbb{Y}_p} \nabla_{\theta} \ell(f_{\theta}^{l-1}, G_y)\biggr)^\top \nabla_{\theta} \ell(f_{\theta}^{l-1}, \tilde{d}_{ij}),
\end{equation}
where the first term is the average gradient of buffer data with previous classes, and the second term is the gradient of the mixed sample. To solve the optimization problem, we derive the gradient of the mixed sample in the next section.

\subsection{Gradients of Mixed Samples}
\label{gradient_mixup}

We first compute the approximated gradient of the original sample by employing the last layer approximation\,\cite{DBLP:conf/icml/KatharopoulosF18, DBLP:conf/iclr/AshZK0A20, DBLP:conf/icml/MirzasoleimanBL20} that only considers the last layer gradients, which is known to be reasonable and efficient\,\cite{DBLP:conf/aaai/KillamsettySRI21, DBLP:conf/icml/KillamsettySRDI21, DBLP:conf/aaai/KimHW24}. The following Lemma derives the approximated gradient vector of the original sample.

\begin{lemma}
Let $w$ and $b$ denote the weights and biases of the last layer of the model. By using the last layer approximation, the gradient of the original sample $d = (x, y)$ is approximated as follows:
\begin{equation} \label{original_grad}
g = (\nabla_{b} \ell(f_{\theta}^{l-1}, d), \nabla_{w} \ell(f_{\theta}^{l-1}, d))^\top = (\hat{y} - y, (\hat{y} - y)^\top X)^\top,
\end{equation}
where $X$ is the feature embedding before the last layer, and $\hat{y}$ is the model output of $x$ with respect to the model $f_{\theta}^{l-1}$.
\end{lemma}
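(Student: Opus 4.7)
The plan is to prove the lemma by direct computation of the two gradients, using the standard softmax / cross-entropy simplification together with the last-layer approximation that fixes the pre-last-layer feature embedding $X$.

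First I would unfold the forward pass restricted to the last layer: writing the pre-softmax logits as $z = wX + b$ and the model output as $\hat{y} = \mathrm{softmax}(z)$, so that the cross-entropy loss on the sample $d = (x, y)$ with one-hot label $y$ becomes $\ell(f_{\theta}^{l-1}, d) = -\sum_{k} y_k \log \hat{y}_k$. Here $X$ is treated as a constant because the last-layer approximation only differentiates with respect to $w$ and $b$; everything earlier in the network is frozen.

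Next I would compute $\partial \ell / \partial z$. This is the classical softmax-plus-cross-entropy identity: the Jacobian of the softmax combined with the $-y_k/\hat{y}_k$ terms from the loss collapses, giving $\partial \ell / \partial z = \hat{y} - y$. This is the only computation with any real substance in the proof, and it is the step I would write out carefully so that the reader does not have to take it on faith.

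From there the chain rule finishes everything. Since $z = wX + b$ with $X$ held fixed, $\partial z / \partial b = I$ gives $\nabla_b \ell(f_{\theta}^{l-1}, d) = \hat{y} - y$, and $\partial z / \partial w$ just contributes $X$, giving $\nabla_w \ell(f_{\theta}^{l-1}, d) = (\hat{y} - y)^\top X$ in the indexing convention used by the paper. Stacking the bias-gradient and weight-gradient entries into a single column vector then yields exactly $g = (\hat{y} - y, (\hat{y} - y)^\top X)^\top$, as claimed.

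The main obstacle, such as it is, is mostly bookkeeping rather than mathematics: one has to be careful about the orientation of $X$ (row vector vs.\ column vector) and about how the weight matrix $w$ is flattened into a gradient vector so that the outer product $(\hat{y} - y)^\top X$ lines up with the expression in the lemma. Once a consistent convention is fixed at the start, the proof is a short chain-rule computation anchored on the softmax/cross-entropy identity.
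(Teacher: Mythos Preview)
Your proposal is correct and follows essentially the same approach as the paper: both set up $z = wX + b$, $\hat{y} = \mathrm{softmax}(z)$, establish the softmax/cross-entropy identity $\partial \ell/\partial z = \hat{y} - y$, and then apply the chain rule to obtain the bias and weight gradients. The paper simply writes out the derivation of $\partial \ell/\partial z_j = \hat{y}_j - y_j$ in more explicit componentwise detail, whereas you invoke it as the ``classical'' identity and flag the vector-orientation bookkeeping, but the argument is the same.
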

The proof is in Sec.~\ref{appendix:theoretical_analysis_gradient}. We next derive the approximated gradient vector of a mixed sample. Similar to the works that theoretically analyze Mixup\,\cite{DBLP:conf/iclr/ZhangDKG021, DBLP:conf/icml/Yao0LZL0F22, DBLP:conf/icml/ZhangDK022}, suppose we obtain a linear model from the training data. Specifically, the model follows the Fisher’s linear discriminant analysis\,\cite{10.5555/59551, 0ceee346-049e-3934-a519-92e72b194bcf, 10.1214/20-AOS2012} where the optimal classification rule is linear. Under these assumptions, we present the following theorem that derives the approximated gradient vector of a mixed sample.

\begin{theorem}
If the model is linear, the gradient of the mixed sample $\tilde{d}_{ij} = (\tilde{x}_{ij}, \tilde{y}_{ij})$, generated by mixing two samples $d_i = (x_i, y_i)$ and $d_j = (x_j, y_j)$ with a mixing ratio $\lambda$, is approximated as follows:
\begin{equation} \label{mixed_grad}
\begin{aligned}
\tilde{g}_{ij} &= (\nabla_{b} \ell(f_{\theta}^{l-1}, \tilde{d}_{ij}), 
                    \nabla_{w} \ell(f_{\theta}^{l-1}, \tilde{d}_{ij}))^\top \\
&= \Bigl(
    \lambda (\hat{y}_{ij} - y_i) + (1 - \lambda) (\hat{y}_{ij} - y_j), \\
&\hspace{1.3em} (\lambda (\hat{y}_{ij} - y_i) + (1 - \lambda) (\hat{y}_{ij} - y_j))^\top X_{ij}
\Bigr)^\top
\end{aligned}
\end{equation}
where $\hat{y}_{ij} = \biggl[\frac{p_1^{\lambda} q_1^{1 - \lambda}}{p_1^{\lambda} q_1^{1 - \lambda} + \cdots + p_c^{\lambda} q_c^{1 - \lambda}}, \ldots , \frac{p_c^{\lambda} q_c^{1 - \lambda}}{p_1^{\lambda} q_1^{1 - \lambda} + \cdots + p_c^{\lambda} q_c^{1 - \lambda}} \biggr]$ with $\hat{y}_i = [p_1, \ldots , p_c]$ and $\hat{y}_j = [q_1, \ldots , q_c]$ for $c$ classes. $X_{ij} = \lambda X_i + (1 - \lambda) X_j$, where $X_i$ and $X_j$ are the feature embeddings of $x_i$ and $x_j$ before the last layer.
\end{theorem}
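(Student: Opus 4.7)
The plan is to derive the mixed-sample gradient by plugging $\tilde{d}_{ij}$ directly into the single-sample gradient formula of the preceding Lemma (Eq.~\ref{original_grad}) and then computing the softmax output $\hat{y}_{ij}$ explicitly using the linearity assumption on the model. Since the Lemma already states that any sample $d=(x,y)$ contributes a gradient of the form $(\hat{y}-y,\,(\hat{y}-y)^\top X)^\top$, the task is reduced to two sub-claims: (i) under a linear model, the feature embedding behaves linearly so that $X_{ij}=\lambda X_i+(1-\lambda)X_j$; and (ii) the softmax output at $\tilde{x}_{ij}$ has the geometric-mean/normalized form stated for $\hat{y}_{ij}$.

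First I would set up notation. Writing the last layer as $z=w^\top X+b$ with softmax $\hat{y}_k = e^{z_k}/\sum_l e^{z_l}$, the linearity of the model gives $\tilde{z}_k \;=\; w_k^\top(\lambda X_i+(1-\lambda)X_j)+b_k \;=\; \lambda z_k^{(i)}+(1-\lambda)z_k^{(j)}$, where I absorb $b_k = \lambda b_k+(1-\lambda)b_k$ to split cleanly. Exponentiating yields $e^{\tilde{z}_k}=(e^{z_k^{(i)}})^{\lambda}(e^{z_k^{(j)}})^{1-\lambda}$. Substituting $e^{z_k^{(i)}}=p_k Z_i$ and $e^{z_k^{(j)}}=q_k Z_j$ with $Z_i=\sum_l e^{z_l^{(i)}}$ and $Z_j=\sum_l e^{z_l^{(j)}}$ factors the shared normalizer $Z_i^\lambda Z_j^{1-\lambda}$ out of both numerator and denominator of the softmax, leaving exactly the claimed expression $\hat{y}_{ij,k}=p_k^\lambda q_k^{1-\lambda}/\sum_l p_l^\lambda q_l^{1-\lambda}$. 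This is the main technical step, and I expect it to be the hardest part because the normalization bookkeeping has to be done carefully; everything else is algebra on top of the Lemma.

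Next I would compute the residual $\hat{y}_{ij}-\tilde{y}_{ij}$. Using $\tilde{y}_{ij}=\lambda y_i+(1-\lambda)y_j$ and the partition of unity $1=\lambda+(1-\lambda)$, I rewrite
\begin{equation*}
\hat{y}_{ij}-\tilde{y}_{ij} \;=\; \lambda(\hat{y}_{ij}-y_i)+(1-\lambda)(\hat{y}_{ij}-y_j),
\end{equation*}
which yields the bias-gradient component on the right-hand side of Eq.~\ref{mixed_grad}. For the weight-gradient component, I multiply this residual by $X_{ij}=\lambda X_i+(1-\lambda)X_j$, which is justified by sub-claim (i): for a linear model the embedding fed into the last layer is the input itself (or is linear in it), so $\tilde{x}_{ij}=\lambda x_i+(1-\lambda)x_j$ carries through to $X_{ij}$.

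Finally I would assemble the two components into the stated gradient vector $\tilde{g}_{ij}=(\nabla_b\ell,\nabla_w\ell)^\top$, matching Eq.~\ref{mixed_grad}. The Fisher's LDA / linear classifier assumption is only invoked to justify steps (i) and (ii); once those are in hand, the derivation is just substitution into the Lemma. The main obstacle, as noted above, is cleanly deriving the geometric-mean form of $\hat{y}_{ij}$, and the secondary subtlety is keeping the roles of $X_i,X_j$ versus $x_i,x_j$ consistent with the last-layer approximation framework established earlier.
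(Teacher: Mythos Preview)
Your proposal is correct and follows essentially the same route as the paper: both apply the Lemma's last-layer gradient formula to $\tilde{d}_{ij}$, split the residual via $\hat{y}_{ij}-\tilde{y}_{ij}=\lambda(\hat{y}_{ij}-y_i)+(1-\lambda)(\hat{y}_{ij}-y_j)$, and derive the geometric-mean softmax form from the linearity of the logits in $x$. Your handling of the normalizer (writing $e^{z_k^{(i)}}=p_kZ_i$ and cancelling $Z_i^\lambda Z_j^{1-\lambda}$) is in fact slightly cleaner than the paper's shortcut of setting $z_i=[\log p_1,\dots,\log p_c]$, which silently drops the additive constant that softmax absorbs.
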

The proof is in Sec.~\ref{appendix:theoretical_analysis_gradient}. If the value of $\lambda$ is 0 or 1, $\tilde{g}_{ij}$ corresponds to the original gradient of $g_j$ or $g_i$, respectively. However, if the model is nonlinear, and the value of $\lambda$ is between 0 and 1, there may be an approximation error between the approximated gradient and the true gradient for the mixed sample. Since we use the inner product of the average gradient of buffer data and the gradient of mixed training sample as the objective function in Eq.~\ref{optimization}, we empirically compare both the inner product values and the resulting optimal sample pairs when using the approximated gradient and the true gradient for the mixed sample. In practice, the average error in the inner product values is quite small, resulting in high precision and recall for the optimal sample pairs as shown in Sec.~\ref{appendix:exp-error}.

\begin{figure}[t]
\centering
\vspace{-0.2cm}
    \begin{minipage}[t]{0.475\textwidth}
    \centering
        \begin{algorithm}[H]
        \begin{minipage}[t]{\linewidth}
        {\bfseries Input:} Current task data $T_l$, buffer data $\mathcal{M}$, current class set $\mathbb{Y}_c$, previous class set $\mathbb{Y}_p$, previous model $f_{\theta}^{l-1}$, weights $w$ and biases $b$ in the last layer of $f_{\theta}^{l-1}$, loss function $\ell$, learning rate $\eta$, and hyperparameter $\alpha$
        \end{minipage}
        \begin{algorithmic}[1]
        \FOR {each epoch}
            \STATE /* Get the sets of negative and optimal class pairs. */
            \STATE $\overline{g}_{buf} \leftarrow (\nabla_{b} \ell(f_{\theta}^{l-1}, \mathcal{M}), \nabla_{w} \ell(f_{\theta}^{l-1}, \mathcal{M}))^\top$
            \STATE Sample $\lambda \sim Beta(\alpha, \alpha)$
            \STATE $\mathbb{Y} \leftarrow \mathbb{Y}_p \cup \mathbb{Y}_c$
            \STATE $\tilde{G}_{y_i y_j} \leftarrow \lambda G_{y_i} + (1 - \lambda) G_{y_j}, \forall y_i, y_j \in \mathbb{Y}$
            \STATE $\tilde{g}_{y_i y_j} \leftarrow (\nabla_{b} \ell(f_{\theta}^{l-1}, \tilde{G}_{y_i y_j}), \nabla_{w} \ell(f_{\theta}^{l-1}, \tilde{G}_{y_i y_j}))^\top$, $\forall y_i, y_j \in \mathbb{Y}$
            \STATE $S^- = \{(y_i, y_j) \mid \overline{g}_{buf}^\top \tilde{g}_{y_i y_j} < 0, \forall y_i, y_j \in \mathbb{Y} \}$
            \STATE $S^* = \{(y_i, y_k) \mid y_k = \underset{y_j \in \mathbb{Y}}{\arg\max} \ \overline{g}_{buf}^\top \tilde{g}_{y_i y_j}, \forall y_i \in \mathbb{Y} \}$
            \STATE /* Perform selective mixup. */
            \FOR {$B_1 \sim T_l$}
                \STATE Sample $B_2 \sim \mathcal{M}$
                \STATE $D_i \leftarrow B_1 \cup B_2$
                \STATE $D_j \leftarrow$ Randomly shuffle $D_i$
                \STATE Sample $\lambda \sim Beta(\alpha, \alpha)$
                \FOR {$d_i \in D_i$ and $d_j \in D_j$}
                    \IF {$(y_i, y_j) \in S^{-}$}
                        \STATE Randomly sample $d_k (x_k, y_k) \in T_l \cup \mathcal{M}$ that satisfies $(y_i, y_k) \in S^*$ 
                        \STATE Replace $d_j$ with $d_k$ in $D_j$
                    \ENDIF
                \ENDFOR
                \STATE $\tilde{D}_{ij} \leftarrow \lambda D_i + (1 - \lambda) D_j$
                \STATE $\theta \leftarrow \theta - \eta \nabla_{\theta} \ell(f_{\theta}^{l-1}, \tilde{D}_{ij})$
            \ENDFOR
        \ENDFOR
        \STATE Sample $\mathcal{M}_l \sim T_l$
        \STATE $\mathcal{M} \leftarrow \mathcal{M} \cup \mathcal{M}_l$
        \caption{GradMix combined with ER}
        \label{alg:gradmix}
        \end{algorithmic}
        \end{algorithm}
    \end{minipage}
\vspace{-0.3cm}
\end{figure}

\subsection{Algorithm}
\label{alg}

We now design the \method{} algorithm to solve the data selection optimization problem. By substituting Eq.~\ref{original_grad} and Eq.~\ref{mixed_grad} into Eq.~\ref{optimization} and using the last layer approximation, we can efficiently compute the inner products of the average gradient of buffer data and the gradients of all mixed samples to find the optimal mixing sample. However, formulating a sample-based criterion remains practically infeasible, as it would require explicit per-sample gradient evaluations over the entire set of mixed samples, resulting in excessive computational and memory overhead. For better efficiency, we group samples that have similar gradients and reformulate the optimization problem from data selection to group selection by replacing the gradient of a mixed sample in Eq.~\ref{optimization} with the average gradient of mixed samples from two arbitrary groups. By leveraging the observation that training samples within the same class share similar gradients as described in Sec.~\ref{appendix:exp-class}, we group samples by class and propose a class-based selective mixup criterion. If the inner product of the average gradients of buffer data and mixed samples from two classes is negative, it implies that training with the mixed samples increases the average loss of previous tasks and worsens catastrophic forgetting. In this case, we define the class pair as a negative class pair and replace the mixed samples with those generated from the optimal class pair that maximizes the inner product. Conversely, if the inner product is positive, indicating that the average loss of previous tasks does not increase, and catastrophic forgetting does not occur, we retain the mixed samples as part of the training data. Since generating mixed samples solely from the optimal class pair may reduce sample diversity from a data augmentation perspective, our design choice balances the trade-off between mitigating forgetting and enhancing diversity.

We describe the overall process of \method{} in Algorithm~\ref{alg:gradmix}. At each epoch during training, we compute the average gradient of buffer data (denoted as $\overline{g}_{buf}$) using Eq.~\ref{original_grad} (Step 3). Next, we set a mixing ratio $\lambda$ and compute the average gradients of mixed samples from all class pairs (denoted as $\tilde{g}_{y_i y_j}$) using Eq.~\ref{mixed_grad} (Steps 4--7). By computing the inner products of $\overline{g}_{buf}$ and $\tilde{g}_{y_i y_j}$ for all class pairs, we get the set of negative class pairs (Step 8) and the set of optimal class pairs for each class (Step 9). The selective mixup criterion is determined using the two sets once per epoch and applied to all batches to ensure computational efficiency. For each batch consisting of both current task data and buffer data (Steps 11--13), we first randomly shuffle the batch to generate sample pairs to be mixed and set a mixing ratio $\lambda$ following the configuration in Mixup\,\cite{DBLP:conf/iclr/ZhangCDL18} (Steps 14--15). However, if the class pairs of the sample pairs belong to the negative set, we replace them with sample pairs from the optimal class pairs (Steps 16--19). After this replacement process, we form a new batch consisting of mixed samples that satisfy our selective mixup criterion (Step 22) and update the model (Step 23). Lastly, we store a random subset of the current task data to the buffer before moving onto the next task (Steps 26--27). We provide an analysis of the computational complexity and runtime of \method{} algorithm in Sec.~\ref{appendix:exp-runtime}, which shows that \method{} is scalable by only computing the last layer gradients of a small exemplar set for each class.

\section{Experiments}
\label{exp}


We perform experiments to evaluate \method{}. All evaluations are performed on separate test sets and repeated with five random seeds, and we write the mean and standard deviation of performance results. We implement \method{} using Python and PyTorch, and all experiments are run on Intel Xeon Silver 4114 CPUs and NVIDIA TITAN RTX GPUs.


\paragraph{Metrics}

We evaluate methods using average accuracy on all the tasks. We define per-task accuracy $A_l = \frac{1}{l}\sum_{t=1}^{l} a_{l, t}$ as the accuracy at the $l^{th}$ task, where $a_{l, t}$ is the accuracy of the $t^{th}$ task after learning the $l^{th}$ task. We then measure the average accuracy across all tasks, denoted as $\overline{A_l} = \frac{1}{L}\sum_{l=1}^{L} A_l$ where $L$ represents the total number of tasks.




\paragraph{Datasets}

We use a total of five datasets for the experiments as shown in Table~\ref{tbl:datasets}: MNIST\,\cite{DBLP:journals/pieee/LeCunBBH98}, Fashion-MNIST (FMNIST)\,\cite{DBLP:journals/corr/abs-1708-07747}, CIFAR-10\,\cite{Krizhevsky2009LearningML}, CIFAR-100\,\cite{Krizhevsky2009LearningML}, and Tiny ImageNet\,\cite{DBLP:journals/ijcv/RussakovskyDSKS15}. To setup experiments for class-incremental learning, we divide the datasets into sequences of tasks consisting of equal numbers of disjoint classes and assume that the task boundaries are available\,\cite{DBLP:journals/corr/abs-1904-07734, DBLP:conf/nips/MirzadehFPG20}.


\paragraph{Models}

For MNIST and FMNIST, we train a two-layer MLP each with 256 neurons using the SGD optimizer with an initial learning rate of 0.01 for 20 epochs. For CIFAR-10, CIFAR-100, and Tiny ImageNet, we train a ResNet-18\,\cite{DBLP:conf/cvpr/HeZRS16, DBLP:conf/eccv/HeZRS16} model using the SGD optimizer with an initial learning rate of 0.1 for 50, 250, and 100 epochs, respectively. During training, we reduce the learning rate by one-tenth at 100, 150, and 200 epochs for CIFAR-100, and at 30, 60, and 90 epochs for Tiny ImageNet. The last layer of the model is shared for all the tasks for a single-head evaluation\,\cite{DBLP:journals/corr/abs-1805-09733, DBLP:conf/eccv/ChaudhryDAT18}. We train all models using cross-entropy loss.

\paragraph{Baselines}
\label{baselines}

Following the experimental setups of \cite{DBLP:conf/nips/KumariW0B22, DBLP:conf/nips/ZhangPFBLJ22}, we use ER\,\cite{DBLP:journals/corr/abs-1902-10486}, MIR\,\cite{DBLP:conf/nips/AljundiBTCCLP19}, GSS\,\cite{DBLP:conf/nips/AljundiLGB19}, DER\,\cite{DBLP:conf/nips/BuzzegaBPAC20}, FOSTER\,\cite{DBLP:conf/eccv/WangZYZ22}, and MEMO\,\cite{DBLP:conf/iclr/0001WYZ23} as basis methods for class-incremental learning and apply data augmentation techniques to them. The experiments also can be extended to include recent class-incremental learning methods\,\cite{DBLP:conf/cvpr/YanX021, DBLP:conf/iccv/ChenC23, DBLP:conf/cvpr/Zheng0YZ25}. 
We compare \method{} with three types of data augmentation baselines:
\begin{itemize}[leftmargin=*]
    \setlength\itemsep{0cm}
    \item {\bf Mixup-based methods}: {\it Mixup}\,\cite{DBLP:conf/iclr/ZhangCDL18} generates synthetic training data through linear interpolations of pairs of input features and their corresponding labels; {\it Manifold Mixup}\,\cite{DBLP:conf/icml/VermaLBNMLB19} performs linear interpolations of hidden layer representations and corresponding labels of training data; {\it CutMix}\,\cite{DBLP:conf/iccv/YunHCOYC19} generates new training data by randomly cutting and pasting patches between input images, while adjusting the corresponding labels based on the proportion of each patch.
    \item {\bf Imbalance-aware mixup methods}: {\it Remix}\,\cite{DBLP:conf/eccv/ChouCPWJ20} mixes features like {\it Mixup}, but assigns labels by providing higher weights to the minority class; {\it Balanced-MixUp}\,\cite{DBLP:conf/miccai/GaldranCB21} performs both regular and balanced sampling when selecting pairs of training data for {\it Mixup}.
    \item {\bf Policy-based method}: {\it RandAugment}\,\cite{DBLP:conf/nips/CubukZS020} randomly applies different image transformations to training data with adjustable magnitude.
\end{itemize}

\paragraph{Parameters}
\label{parameters}

We store 32 samples per class for buffering and set the batch size to 64 for all experiments. For the hyperparameter $\alpha$ used in the beta distribution, we follow the setting of Mixup\,\cite{DBLP:conf/iclr/ZhangCDL18}. Since the Mixup paper does not specify $\alpha$ values for the MNIST and FMNIST datasets, we set $\alpha$ to 1.0, which corresponds to a uniform distribution. In practice, the performance of \method{} is not sensitive to the value of $\alpha$ as shown in Sec.~\ref{appendix:exp-alpha}. See Sec.~\ref{appendix:exp-details} for more experimental details.

\begin{table}[t]
  \setlength{\tabcolsep}{6.5pt}
  \caption{For each of the five datasets, we show the total training set size, the number of features, the number of classes, and the number of tasks.}
  \centering
  \begin{tabular}{lcccc}
    \toprule
    {\bf Dataset} & {\bf Size} & {\bf \#Features} & {\bf \#Classes} & {\bf \#Tasks} \\
    \midrule
    {\sf MNIST} & 50K & 28$\times$28 & 10 & 5 \\
    {\sf FMNIST} & 60K & 28$\times$28 & 10 & 5 \\
    {\sf CIFAR-10} & 50K & 3$\times$32$\times$32 & 10 & 5 \\
    {\sf CIFAR-100} & 50K & 3$\times$32$\times$32 & 100 & 10 \\
    {\sf Tiny ImageNet} & 100K & 3$\times$64$\times$64 & 200 & 10 \\
    \bottomrule
  \end{tabular}
  \label{tbl:datasets}
  \vspace{-0.3cm}
\end{table}



\begin{table*}[t]
  \setlength{\tabcolsep}{12pt}
  \caption{Average accuracy results on the MNIST, FMNIST, CIFAR-10, CIFAR-100, and Tiny ImageNet datasets. We use ER as a basis method of class-incremental learning and compare \method{} with three types of data augmentation baselines: (1) mixup-based methods: {\it Mixup}, {\it Manifold Mixup}, and {\it CutMix}; (2) imbalance-aware mixup methods: {\it Remix} and {\it Balanced-MixUp}; and (3) a policy-based method: {\it RandAugment}.} 
  \centering
  \begin{tabular}{c|l|ccccc}
  \toprule
    {Type} & {Method} & {\sf MNIST} & {\sf FMNIST} & {\sf CIFAR-10} & {\sf CIFAR-100} & {\sf Tiny ImageNet} \\
    \midrule
    {Experience Replay} & {ER} & {0.896}\tiny{$\pm$0.002} & {0.756}\tiny{$\pm$0.004} & {0.613}\tiny{$\pm$0.009} & {0.491}\tiny{$\pm$0.019} & {0.385}\tiny{$\pm$0.009} \\
    \cmidrule{1-7}
    \multirow{3}{*}{Mixup-based} & {+ Mixup} & {0.866}\tiny{$\pm$0.005} & {0.766}\tiny{$\pm$0.009} & {0.643}\tiny{$\pm$0.008} & {0.507}\tiny{$\pm$0.010} & {0.371}\tiny{$\pm$0.008} \\
    & {+ Manifold Mixup} & {0.896}\tiny{$\pm$0.004} & {0.774}\tiny{$\pm$0.004} & {0.620}\tiny{$\pm$0.011} & {0.521}\tiny{$\pm$0.008} & {0.386}\tiny{$\pm$0.009} \\
    & {+ CutMix} & {0.878}\tiny{$\pm$0.005} & {0.789}\tiny{$\pm$0.004} & {0.516}\tiny{$\pm$0.009} & {0.404}\tiny{$\pm$0.009} & {0.344}\tiny{$\pm$0.010} \\
    \cmidrule{1-7}
    \multirow{2}{*}{Imbalance-aware} & {+ Remix} & {0.905}\tiny{$\pm$0.004} & {0.789}\tiny{$\pm$0.005} & {0.639}\tiny{$\pm$0.014} & {0.528}\tiny{$\pm$0.007} & {0.391}\tiny{$\pm$0.008} \\
    & {+ Balanced-MixUp} & {0.883}\tiny{$\pm$0.006} & {0.767}\tiny{$\pm$0.006} & {0.630}\tiny{$\pm$0.014} & {0.516}\tiny{$\pm$0.009} & {0.379}\tiny{$\pm$0.008} \\
    \cmidrule{1-7}
    {Policy-based} & {+ RandAugment} & {0.765}\tiny{$\pm$0.008} & {0.640}\tiny{$\pm$0.008} & {0.636}\tiny{$\pm$0.006} & {0.484}\tiny{$\pm$0.007} & {0.368}\tiny{$\pm$0.007} \\
    \cmidrule{1-7}
    {Selective Mixup} & {\bf + GradMix (ours)} & \textbf{{0.918}\tiny{$\pm$0.004}} & \textbf{{0.802}\tiny{$\pm$0.009}} & \textbf{{0.667}\tiny{$\pm$0.025}} & \textbf{{0.546}\tiny{$\pm$0.008}} & \textbf{{0.405}\tiny{$\pm$0.006}} \\
    \bottomrule
  \end{tabular}
  \label{tbl:main_er}
  \vspace{-0.2cm}
\end{table*}

\subsection{Model Performance Results}
\label{main-results}

We compare \method{} against other data augmentation baselines on the five datasets with respect to average accuracy as shown in Table~\ref{tbl:main_er} and Sec.~\ref{appendix:exp-comp}. Overall, \method{} consistently achieves the best accuracy results compared to the baselines for all the datasets by considering catastrophic forgetting while augmenting data. In contrast, since existing data augmentation techniques are designed for static and balanced data, they have limitations in class-incremental learning setups with dynamic and imbalanced data. We also present sequential accuracy results for each task as shown in Fig.~\ref{fig:seq_exp_main} and Sec.~\ref{appendix:exp-seq} where \method{} shows better accuracy performance than other baselines at each task. In addition, \method{} helps further improvement of advanced class-incremental learning methods, including MIR, GSS, DER, FOSTER, and MEMO as shown in Table~\ref{tbl:compatibility_main}.


\begin{figure}[t]
    \centering
    \begin{minipage}[t]{0.475\textwidth}
        \centering
        \includegraphics[width=\textwidth]{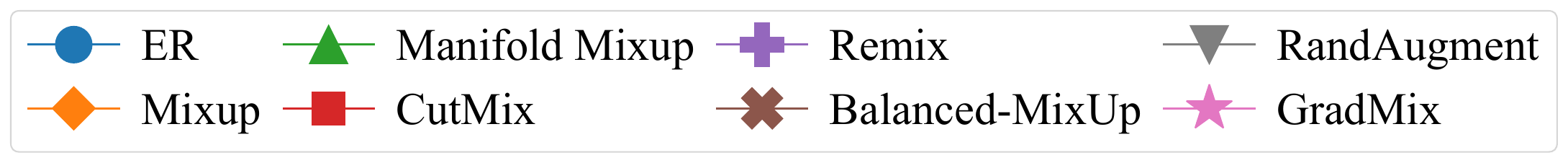}
    \end{minipage}
    \begin{minipage}[t]{0.235\textwidth}
        \centering
        \includegraphics[width=\textwidth]{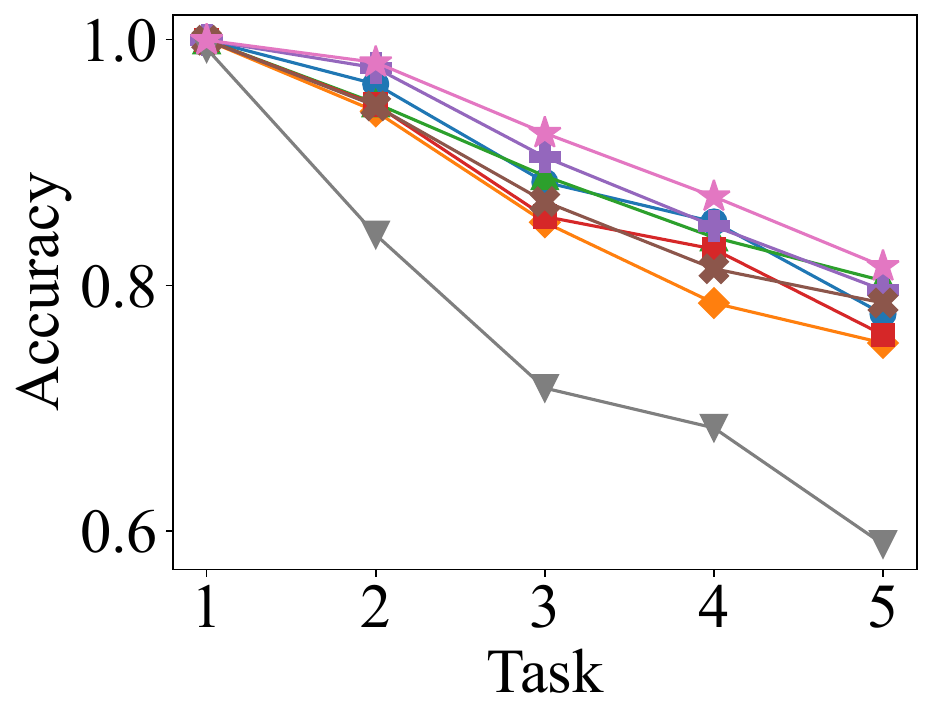}
        \vskip -0.1in
        \caption*{(a) MNIST.}
        \label{fig:seq_mnist}
    \end{minipage}
    \begin{minipage}[t]{0.235\textwidth}
        \centering
        \includegraphics[width=\textwidth]{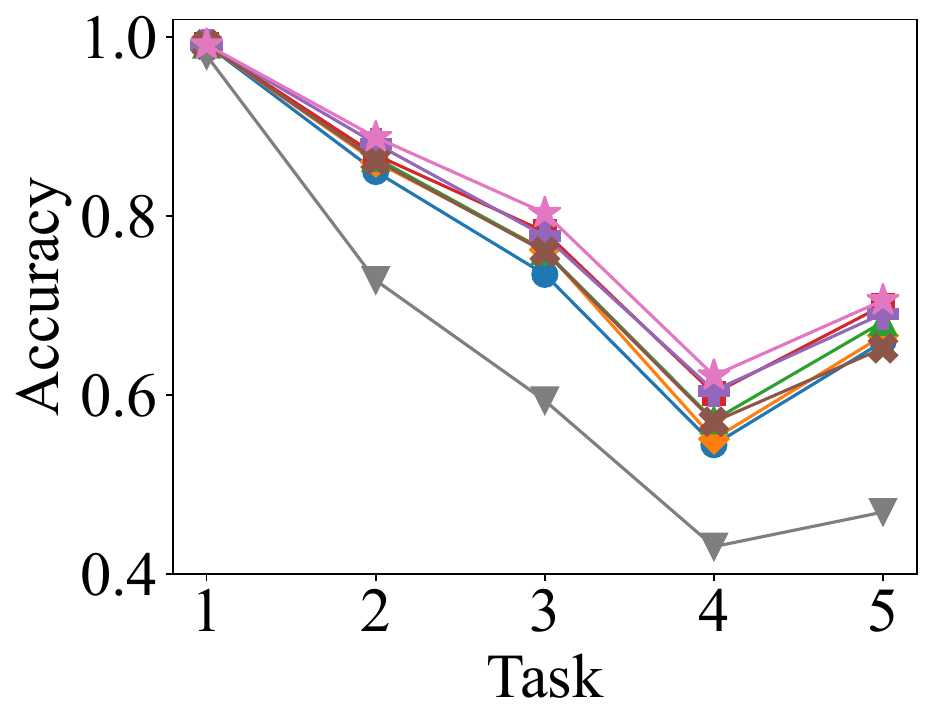}
        \vskip -0.1in
        \caption*{(b) FMNIST.}
        \label{fig:seq_fmnist}
    \end{minipage}
    \begin{minipage}[t]{0.235\textwidth}
        \centering
        \includegraphics[width=\textwidth]{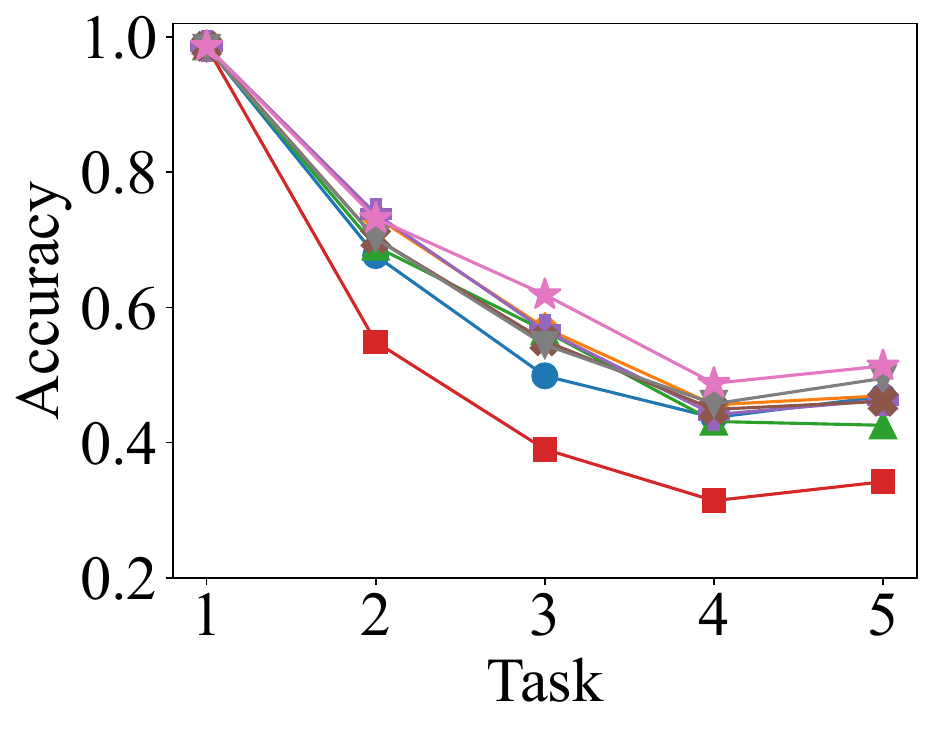}
        \vskip -0.1in
        \caption*{(c) CIFAR-10.}
        \label{fig:seq_cifar-10}
    \end{minipage}
    \vspace{0.01\textwidth}
    \begin{minipage}[t]{0.235\textwidth}
        \centering
        \includegraphics[width=\textwidth]{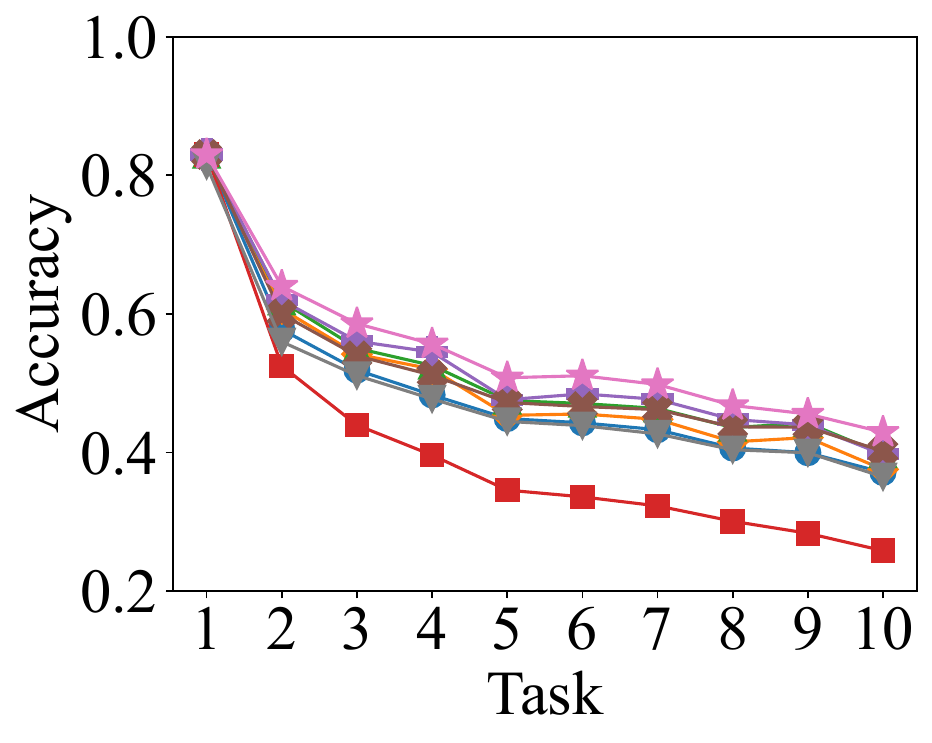}
        \vskip -0.1in
        \caption*{(d) CIFAR-100.}
        \label{fig:seq_cifar-100}
    \end{minipage}
    \vspace{-0.3cm}
    \caption{Sequential accuracy results of \method{} and baselines on the MNIST, FMNIST, CIFAR-10, and CIFAR-100 datasets as new tasks are learned.}
    \label{fig:seq_exp_main}
    \vspace{-0.4cm}
\end{figure}

\begin{figure}[t]
  \centering
  \begin{minipage}{0.475\textwidth}
    \centering
    \setlength{\tabcolsep}{8.3pt}
    \captionof{table}{Average accuracy results on the CIFAR-10, CIFAR-100, and Tiny ImageNet datasets. We use MIR, GSS, DER, FOSTER, and MEMO as basis methods of class-incremental learning and integrate \method{}.}
    \begin{tabular}{l|ccc}
      \toprule
      {Method} & {\sf CIFAR-10} & {\sf CIFAR-100} & {\sf Tiny ImageNet} \\
      \cmidrule{1-4}
      {MIR} & {0.620}\tiny{$\pm$0.009} & {0.493}\tiny{$\pm$0.021} & {0.386}\tiny{$\pm$0.009} \\
      {\bf + GradMix} & \textbf{{0.669}\tiny{$\pm$0.013}} & \textbf{{0.546}\tiny{$\pm$0.007}} & \textbf{{0.411}\tiny{$\pm$0.009}} \\
      \cmidrule{1-4}
      {GSS} & {0.627}\tiny{$\pm$0.009} & {0.512}\tiny{$\pm$0.013} & {0.397}\tiny{$\pm$0.008} \\
      {\bf + GradMix} & \textbf{{0.674}\tiny{$\pm$0.011}} & \textbf{{0.563}\tiny{$\pm$0.007}} & \textbf{{0.418}\tiny{$\pm$0.006}} \\
      \cmidrule{1-4}
      {DER} & {0.678}\tiny{$\pm$0.010} & {0.560}\tiny{$\pm$0.009} & {0.422}\tiny{$\pm$0.008} \\
      {\bf + GradMix} & \textbf{{0.714}\tiny{$\pm$0.009}} & \textbf{{0.604}\tiny{$\pm$0.006}} & \textbf{{0.452}\tiny{$\pm$0.006}} \\
      \cmidrule{1-4}
      {FOSTER} & {0.732}\tiny{$\pm$0.010} & {0.618}\tiny{$\pm$0.007} & {0.512}\tiny{$\pm$0.007} \\
      {\bf + GradMix} & \textbf{{0.769}\tiny{$\pm$0.009}} & \textbf{{0.661}\tiny{$\pm$0.006}} & \textbf{{0.547}\tiny{$\pm$0.006}} \\
      \cmidrule{1-4}
      {MEMO} & {0.751}\tiny{$\pm$0.009} & {0.634}\tiny{$\pm$0.008} & {0.520}\tiny{$\pm$0.007} \\
      {\bf + GradMix} & \textbf{{0.782}\tiny{$\pm$0.010}} & \textbf{{0.673}\tiny{$\pm$0.007}} & \textbf{{0.552}\tiny{$\pm$0.006}} \\
      \bottomrule
    \end{tabular}
    \label{tbl:compatibility_main}
  \end{minipage}
  \vspace{-0.55cm}
\end{figure}

Among the mixup-based methods, {\it Manifold Mixup} consistently shows robust results. However, another mixup-based method, {\it CutMix}, shows the worst performance on CIFAR-10, CIFAR-100, and Tiny ImageNet as it significantly reduces accuracy on previous tasks. We believe that cutting images into patches in the limited previous task data results in significant information loss. The two imbalance-aware mixup methods, {\it Remix} and {\it Balanced-MixUp}, achieve better results than {\it Mixup} as they address the data imbalance between previous and current tasks using disentangled mixing and balanced sampling techniques, respectively. The policy-based method, {\it RandAugment}, also shows underperforming results. For MNIST and FMNIST, elastic distortions across scale, position, and orientation are known to be effective, but other transformations in {\it RandAugment} like flipping and random cropping may have negative effects. For CIFAR-10, CIFAR-100, and Tiny ImageNet, the transformation technique in {\it RandAugment} that removes a region of an image is harmful to retaining the knowledge of the limited buffer data.

Considering the three types of data augmentation baselines, the two imbalance-aware mixup methods, {\it Remix} and {\it Balanced-MixUp}, show consistently solid performance against other baselines by more effectively exploiting the limited buffer data for previous tasks. However, the random selection of sample pairs for mixup leads to greater catastrophic forgetting of previous tasks and limits the improvement in average accuracy. In comparison, \method{} performs gradient-based selective mixup to minimize catastrophic forgetting of previous tasks, which leads to significantly better model accuracy results for all datasets. We also show that \method{} is effective on large-scale datasets (e.g., ImageNet-1K\,\cite{DBLP:conf/cvpr/DengDSLL009}) and variations in backbone models (e.g., ViT\,\cite{DBLP:conf/iclr/DosovitskiyB0WZ21}) in Sec.~\ref{appendix:large_scale} and Sec.~\ref{appendix:backbone_model}, respectively.

\begin{figure*}[t]
  \centering
  \begin{minipage}{1\textwidth}
    \centering
    \begin{minipage}[t]{0.24\textwidth}
      \centering
      \includegraphics[width=\linewidth]{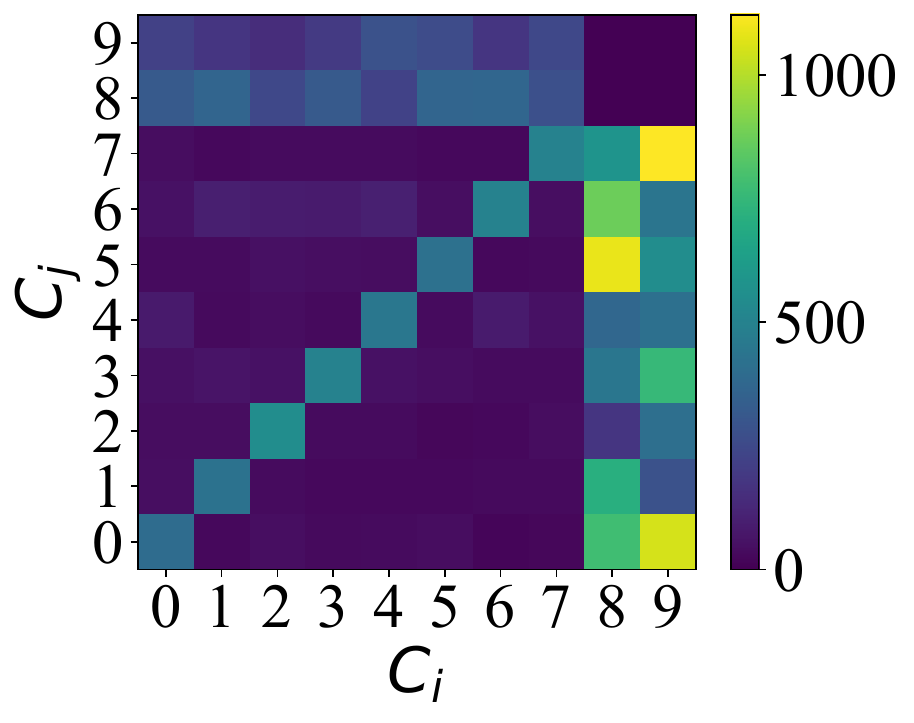}
      \vskip -0.1in
      \caption*{(a) MNIST.}
    \end{minipage}
    \begin{minipage}[t]{0.24\textwidth}
      \centering
      \includegraphics[width=\linewidth]{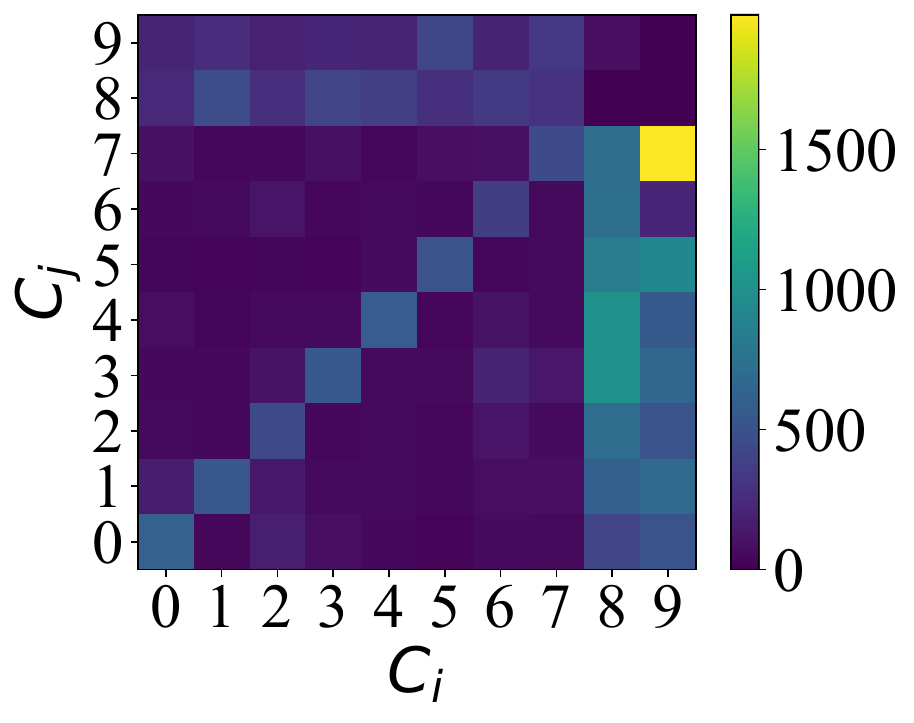}
      \vskip -0.1in
      \caption*{(b) FMNIST.}
    \end{minipage}
    \begin{minipage}[t]{0.24\textwidth}
      \centering
      \includegraphics[width=\linewidth]{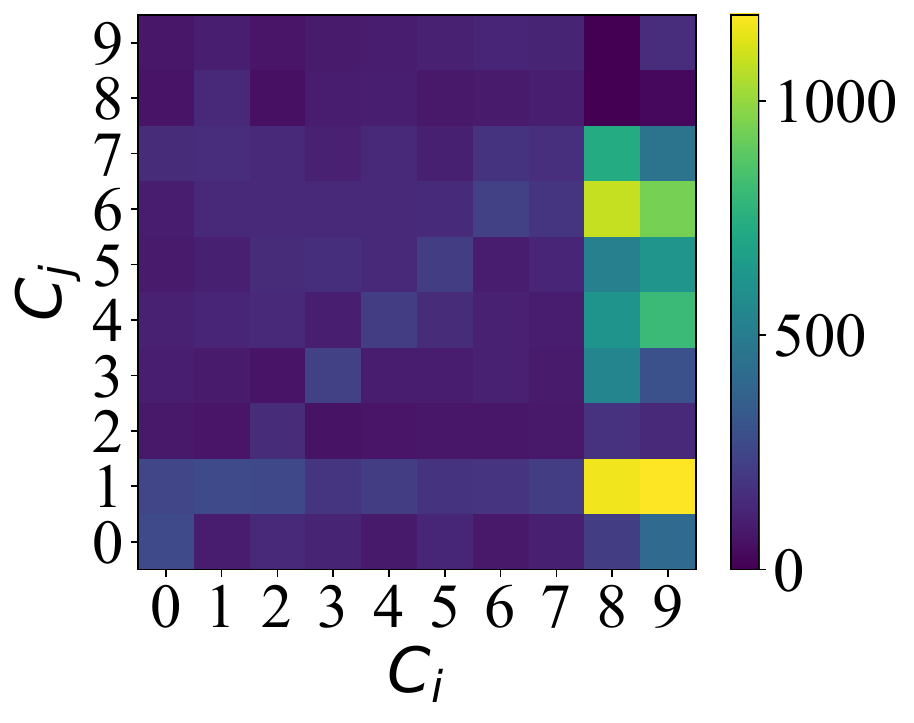}
      \vskip -0.1in
      \caption*{(c) CIFAR-10.}
    \end{minipage}
    \begin{minipage}[t]{0.23\textwidth}
      \centering
      \includegraphics[width=\linewidth]{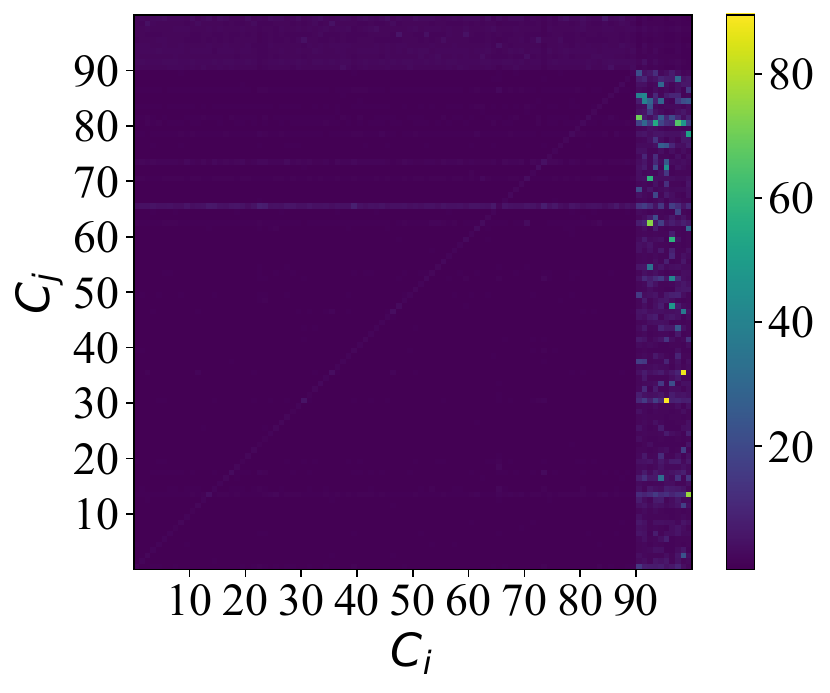}
      \vskip -0.1in
      \caption*{(d) CIFAR-100.}
    \end{minipage}
    \vskip -0.05in
    \caption{Selective mixup results of \method{} on the last task for the MNIST, FMNIST, CIFAR-10, and CIFAR-100 datasets.}
    \label{fig:heatmap_exp_main}
  \end{minipage}
\end{figure*}

\begin{table*}[t]
  \setlength{\tabcolsep}{20.2pt}
  \caption{Ablation study on the MNIST, FMNIST, CIFAR-10, CIFAR-100, and Tiny ImageNet datasets with two ablation scenarios when randomly selected sample pairs worsen catastrophic forgetting in Algorithm~\ref{alg:gradmix} (Steps 17--19): (1) using original samples without mixup; and (2) retaining random mixup. We use ER as a basis method.}
  \centering
  \begin{tabular}{l|ccccc}
  \toprule
    {Method} & {\sf MNIST} & {\sf FMNIST} & {\sf CIFAR-10} & {\sf CIFAR-100} & {\sf Tiny ImageNet} \\
    \midrule
    {Original} & {0.904}\tiny{$\pm$0.005} & {0.773}\tiny{$\pm$0.005} & {0.621}\tiny{$\pm$0.008} & {0.518}\tiny{$\pm$0.010} & {0.383}\tiny{$\pm$0.010} \\
    {Random} & {0.905}\tiny{$\pm$0.004} & {0.789}\tiny{$\pm$0.005} & {0.643}\tiny{$\pm$0.008} & {0.528}\tiny{$\pm$0.007} & {0.391}\tiny{$\pm$0.008} \\
    {\bf GradMix (ours)} & \textbf{{0.918}\tiny{$\pm$0.004}} & \textbf{{0.802}\tiny{$\pm$0.009}} & \textbf{{0.667}\tiny{$\pm$0.025}} & \textbf{{0.546}\tiny{$\pm$0.008}} & \textbf{{0.405}\tiny{$\pm$0.006}} \\
    \bottomrule
  \end{tabular}
  \label{tbl:ablation_study}
\end{table*}

\begin{table*}[t]
  \setlength{\tabcolsep}{3.65pt}
  \caption{Average accuracy results on the MNIST, FMNIST, CIFAR-10, CIFAR-100, and Tiny ImageNet datasets with respect to the buffer size per class. We use ER as a basis method of class-incremental learning and compare \method{} with various data augmentation baselines.}
  \centering
  \begin{tabular}{l|cc|cc|cc|cc|cc}
  \toprule
  {Method} 
  & \multicolumn{2}{c|}{\sf MNIST} 
  & \multicolumn{2}{c|}{\sf FMNIST} 
  & \multicolumn{2}{c|}{\sf CIFAR-10}
  & \multicolumn{2}{c|}{\sf CIFAR-100} 
  & \multicolumn{2}{c}{\sf Tiny ImageNet} \\
  \cmidrule{1-11}
  {} & {$N$=16} & {$N$=64} & {$N$=16} & {$N$=64} & {$N$=16} & {$N$=64} & {$N$=16} & {$N$=64} & {$N$=16} & {$N$=64} \\
  \midrule
  {ER} 
  & {0.843}\tiny{$\pm$0.011} & {0.934}\tiny{$\pm$0.001}
  & {0.716}\tiny{$\pm$0.006} & {0.792}\tiny{$\pm$0.004}
  & {0.546}\tiny{$\pm$0.022} & {0.682}\tiny{$\pm$0.005}
  & {0.412}\tiny{$\pm$0.021} & {0.555}\tiny{$\pm$0.021}
  & \textbf{{0.322}\tiny{$\pm$0.012}} & {0.444}\tiny{$\pm$0.009} \\
  \cmidrule{1-11}
  {+ Mixup} 
  & {0.807}\tiny{$\pm$0.016} & {0.921}\tiny{$\pm$0.002}
  & {0.733}\tiny{$\pm$0.006} & {0.811}\tiny{$\pm$0.005}
  & {0.569}\tiny{$\pm$0.022} & {0.718}\tiny{$\pm$0.010}
  & {0.413}\tiny{$\pm$0.014} & {0.587}\tiny{$\pm$0.016}
  & {0.281}\tiny{$\pm$0.009} & {0.446}\tiny{$\pm$0.011} \\
  {+ Manifold Mixup} 
  & {0.832}\tiny{$\pm$0.012} & {0.942}\tiny{$\pm$0.002}
  & {0.739}\tiny{$\pm$0.004} & {0.818}\tiny{$\pm$0.003}
  & {0.563}\tiny{$\pm$0.012} & {0.688}\tiny{$\pm$0.011}
  & {0.419}\tiny{$\pm$0.013} & {0.603}\tiny{$\pm$0.011}
  & {0.292}\tiny{$\pm$0.011} & {0.455}\tiny{$\pm$0.009} \\
  {+ CutMix} 
  & {0.798}\tiny{$\pm$0.014} & {0.933}\tiny{$\pm$0.004}
  & {0.743}\tiny{$\pm$0.011} & {0.830}\tiny{$\pm$0.003}
  & {0.475}\tiny{$\pm$0.009} & {0.602}\tiny{$\pm$0.016}
  & {0.347}\tiny{$\pm$0.007} & {0.505}\tiny{$\pm$0.006}
  & {0.289}\tiny{$\pm$0.009} & {0.414}\tiny{$\pm$0.010} \\
  \cmidrule{1-11}
  {+ Remix} 
  & {0.848}\tiny{$\pm$0.014} & {0.944}\tiny{$\pm$0.004}
  & {0.751}\tiny{$\pm$0.006} & {0.825}\tiny{$\pm$0.005}
  & {0.565}\tiny{$\pm$0.017} & {0.711}\tiny{$\pm$0.011}
  & {0.423}\tiny{$\pm$0.010} & {0.607}\tiny{$\pm$0.012}
  & {0.299}\tiny{$\pm$0.009} & {0.460}\tiny{$\pm$0.009} \\
  {+ Balanced-MixUp} 
  & {0.823}\tiny{$\pm$0.013} & {0.937}\tiny{$\pm$0.002}
  & {0.724}\tiny{$\pm$0.005} & {0.818}\tiny{$\pm$0.005}
  & {0.560}\tiny{$\pm$0.028} & {0.696}\tiny{$\pm$0.006}
  & {0.426}\tiny{$\pm$0.016} & {0.597}\tiny{$\pm$0.016}
  & {0.286}\tiny{$\pm$0.010} & {0.449}\tiny{$\pm$0.012} \\
  \cmidrule{1-11}
  {+ RandAugment} 
  & {0.689}\tiny{$\pm$0.016} & {0.833}\tiny{$\pm$0.003}
  & {0.600}\tiny{$\pm$0.008} & {0.696}\tiny{$\pm$0.005}
  & {0.584}\tiny{$\pm$0.008} & {0.703}\tiny{$\pm$0.010}
  & {0.415}\tiny{$\pm$0.013} & {0.560}\tiny{$\pm$0.009}
  & {0.304}\tiny{$\pm$0.007} & {0.450}\tiny{$\pm$0.010} \\
  \cmidrule{1-11}
  {\bf + GradMix (ours)} 
  & \textbf{{0.852}\tiny{$\pm$0.010}} & \textbf{{0.950}\tiny{$\pm$0.002}}
  & \textbf{{0.764}\tiny{$\pm$0.009}} & \textbf{{0.837}\tiny{$\pm$0.007}}
  & \textbf{{0.588}\tiny{$\pm$0.021}} & \textbf{{0.739}\tiny{$\pm$0.009}}
  & \textbf{{0.437}\tiny{$\pm$0.009}} & \textbf{{0.620}\tiny{$\pm$0.011}}
  & {0.299}\tiny{$\pm$0.008} & \textbf{{0.470}\tiny{$\pm$0.008}} \\
  \bottomrule
  \end{tabular}
  \label{tbl:buffer_size_main}
  \vspace{-0.1cm}
\end{table*}

\subsection{Selective Mixup Analysis}
\label{analysis}

We next analyze how \method{} selectively mixes different pairs of classes. We provide selective mixup results on the last task of the MNIST, FMNIST, CIFAR-10, and CIFAR-100 datasets as shown in Fig.~\ref{fig:heatmap_exp_main}, and the results for the Tiny ImageNet dataset are shown in Sec.~\ref{appendix:analysis}. Since the selective mixup results for other tasks are similar, we show the results for the last task without loss of generality. As the selective mixup criterion changes each epoch based on gradients, we sum the total number of mixup operations for each class pair combination, denoted as $(C_i, C_j)$, during training and present the results as a heatmap.


Since \method{} performs selective mixup, each class has preferred classes for mixup instead of mixing with random classes as shown in Fig.~\ref{fig:heatmap_exp_main}. For example, in MNIST, previous classes (0th to 7th) are more frequently mixed with the same or current classes than with different previous classes. For CIFAR-10, the current classes (8th and 9th), representing `ship' and `truck', are more likely to be mixed with the first class, `automobile'. The selective mixup results for the other datasets also demonstrate similar trends to those observed in MNIST and CIFAR-10. This result suggests that augmenting limited previous classes by mixing them with sufficient and similar current classes helps retain knowledge of the previous classes more effectively.

\subsection{Ablation Study}


We perform an ablation study to verify the effectiveness of using selective mixup to minimize catastrophic forgetting in \method{} as shown in Table~\ref{tbl:ablation_study}. We consider two ablation scenarios when randomly selected sample pairs worsen catastrophic forgetting in Algorithm~\ref{alg:gradmix} (Steps 17--19): (1) using the original samples without mixup, and (2) retaining random mixup. As a result, both scenarios cannot prevent the occurrence of catastrophic forgetting and lead to lower accuracy results. In comparison, \method{} matches appropriate mixing sample pairs that can minimize catastrophic forgetting and achieves the best accuracy performance.

\subsection{Varying the Buffer Size}
\label{buffer_size}


We compare the performance of \method{} with the baselines when varying the number of buffer samples per class, denoted as $N$, from 32 to 16 and 64 as shown in Table~\ref{tbl:buffer_size_main}. We observe that \method{} shows the best performance for all datasets regardless of the buffer size. In addition, as we store more buffer samples and use them for training, the average accuracy of all the baselines and \method{} increases. For the results on the Tiny ImageNet dataset with a buffer size of 16 samples per class, we believe this buffer size is too small to train the model sufficiently, as none of the data augmentation methods improve accuracy compared to ER.


\section{Conclusion}


We proposed a novel data augmentation method that is robust against catastrophic forgetting in class-incremental learning. We first theoretically and empirically identified that simply combining vanilla Mixup with experience replay methods is vulnerable to catastrophic forgetting. To address the problems of limited buffer data and catastrophic forgetting, our proposed \method{} uses gradient-based selective mixup to enhance the impact of buffer data while mitigating catastrophic forgetting. In the experiments, \method{} shows better accuracy results on various datasets compared to other data augmentation baselines by effectively minimizing catastrophic forgetting.

\begin{acks}
This work was supported by the Institute of Information \& Communications Technology Planning \& Evaluation\,(IITP) grant funded by the Korea government\,(MSIT) (No.\ RS-2022-II220157, Robust, Fair, Extensible Data-Centric Continual Learning). This work was supported by the Institute of Information \& Communications Technology Planning \& Evaluation\,(IITP) grant funded by the Korea government\,(MSIT) (No.\ RS-2024-00444862, Non-invasive near-infrared based AI technology for the diagnosis and treatment of brain diseases).
\end{acks}

\bibliographystyle{ACM-Reference-Format}
\bibliography{gradmix-sigconf}

\newpage
\appendix
\onecolumn

\section{Appendix -- Theory}
\label{appendix:theory}

\subsection{Theoretical Analysis of Applying Mixup in Class-Incremental Learning} 
\label{appendix:theoretical_analysis_mixup}

Continuing from Sec.~\ref{mixup_theoretical}, we prove the theorem stating the sufficient condition under which applying Mixup to an experience replay method worsens catastrophic forgetting in class-incremental learning.

\begin{theorem*}
Let $\tilde{d}_{ij}$ be a mixed sample by mixing an original training sample $d_i$ and a randomly selected sample $d_j$. If the gradient of the mixed sample satisfies the following condition: $\sum_{y \in \mathbb{Y}_p} \nabla_{\theta} \ell(f_{\theta}^{l-1}, G_y)^\top (\nabla_{\theta} \ell(f_{\theta}^{l-1}, d_i) - \nabla_{\theta} \ell(f_{\theta}^{l-1}, \tilde{d}_{ij})) > 0$, then training with the mixed sample leads to a higher average loss for previous tasks and worsens catastrophic forgetting compared to training with the original sample.
\end{theorem*}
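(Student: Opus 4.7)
The plan is to rely on straightforward algebraic manipulation of Eq.~\ref{eq:loss}. I would instantiate that first-order Taylor approximation twice, treating each candidate gradient step as a single-sample training step: once when the model is updated using the original sample $d_i$, and once when it is updated using the mixed sample $\tilde{d}_{ij}$. This yields two expressions for $\tilde{\ell}(f_\theta, G_y)$ that share the common term $\ell(f_\theta^{l-1}, G_y)$ and differ only in whether $\nabla_\theta \ell(f_\theta^{l-1}, d_i)$ or $\nabla_\theta \ell(f_\theta^{l-1}, \tilde{d}_{ij})$ appears in the inner product with $\nabla_\theta \ell(f_\theta^{l-1}, G_y)$.

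Next I would average each of the two expressions over the set of previous classes $y \in \mathbb{Y}_p$ to obtain the two candidate average losses on previous tasks, and form their difference $\Delta = \tfrac{1}{|\mathbb{Y}_p|}\sum_{y}\bigl(\tilde{\ell}_{\text{mix}}(f_\theta, G_y) - \tilde{\ell}_{\text{orig}}(f_\theta, G_y)\bigr)$. The baseline terms $\ell(f_\theta^{l-1}, G_y)$ cancel identically, and bilinearity of the inner product collapses the remaining sum to $\tfrac{\eta}{|\mathbb{Y}_p|}\sum_{y \in \mathbb{Y}_p}\nabla_\theta \ell(f_\theta^{l-1}, G_y)^\top\bigl(\nabla_\theta \ell(f_\theta^{l-1}, d_i) - \nabla_\theta \ell(f_\theta^{l-1}, \tilde{d}_{ij})\bigr)$. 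Since $\eta > 0$ and $|\mathbb{Y}_p| > 0$, the sign of $\Delta$ coincides with the sign of the bracketed summation, so the stated sufficient condition forces $\Delta > 0$. A positive $\Delta$ means the average previous-task loss after training with the mixed sample exceeds the average previous-task loss after training with the original sample, which is precisely the worsening of catastrophic forgetting, completing the proof.

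I do not anticipate any genuinely hard step: once Eq.~\ref{eq:loss} is accepted, the remainder is a one-line subtraction. The only subtlety worth flagging in the write-up is that Eq.~\ref{eq:loss} is stated with $T_l$ denoting current-task training data, whereas here I am instantiating it at the granularity of a single training point (or a single mixed point). I would briefly justify this by noting that the Taylor expansion used to derive Eq.~\ref{eq:loss} depends only on the direction and magnitude of the parameter update $\theta \gets \theta^{l-1} - \eta\,\nabla_\theta \ell(f_\theta^{l-1}, \cdot)$, so it applies verbatim regardless of how many samples contribute to the gradient being plugged into the second argument. Under that convention, the comparison in the theorem is exactly a sample-level specialization of Eq.~\ref{eq:loss}.
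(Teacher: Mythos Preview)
Your proposal is correct and follows essentially the same approach as the paper: instantiate Eq.~\ref{eq:loss} once with $d_i$ and once with $\tilde{d}_{ij}$, average over $y\in\mathbb{Y}_p$, subtract so the baseline $\ell(f_\theta^{l-1},G_y)$ terms cancel, and read off the sign from the stated condition together with $\eta>0$. The only cosmetic difference is that the paper computes $L_{\text{ER}}-L_{\text{Mixup}}$ while you compute the opposite difference $\Delta$, and your explicit remark about why the single-sample specialization of Eq.~\ref{eq:loss} is legitimate is a nice addition that the paper leaves implicit.
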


\begin{proof}
We first derive the updated average loss of the data from previous tasks during the training procedure of the experience replay method (ER) when training with the original sample $d_i$. By employing Eq.~\ref{eq:loss} and replacing the term for the average gradient of the training data from the current task, $\nabla_{\theta} \ell(f_{\theta}^{l-1}, T_l)$, with the gradient of the original training sample, $\nabla_{\theta} \ell(f_{\theta}^{l-1}, d_i)$, in the equation,
\begin{equation} \label{loss_er}
\begin{aligned}
L_{\text{ER}} &= \frac{1}{|\mathbb{Y}_p|} \sum_{y \in \mathbb{Y}_p} \tilde{\ell}(f_{\theta}, G_y) \\
&= \frac{1}{|\mathbb{Y}_p|} \sum_{y \in \mathbb{Y}_p} \Bigl(\ell(f_{\theta}^{l-1}, G_y) - \eta \nabla_{\theta} \ell(f_{\theta}^{l-1}, G_y)^\top \nabla_{\theta} \ell(f_{\theta}^{l-1}, d_i)\Bigr) \\
&= \frac{1}{|\mathbb{Y}_p|} \sum_{y \in \mathbb{Y}_p} \ell(f_{\theta}^{l-1}, G_y) - \eta \frac{1}{|\mathbb{Y}_p|} \sum_{y \in \mathbb{Y}_p} \nabla_{\theta} \ell(f_{\theta}^{l-1}, G_y)^\top \nabla_{\theta} \ell(f_{\theta}^{l-1}, d_i).
\end{aligned}
\end{equation}

We next consider applying Mixup within the framework of ER. By using the mixed sample $\tilde{d}_{ij}$ for training, the updated average loss of previous tasks is derived as follows:
\begin{equation} \label{loss_mixup}
L_{\text{Mixup}} = \frac{1}{|\mathbb{Y}_p|} \sum_{y \in \mathbb{Y}_p} \ell(f_{\theta}^{l-1}, G_y) - \eta \frac{1}{|\mathbb{Y}_p|} \sum_{y \in \mathbb{Y}_p} \nabla_{\theta} \ell(f_{\theta}^{l-1}, G_y)^\top \nabla_{\theta} \ell(f_{\theta}^{l-1}, \tilde{d}_{ij}).
\end{equation}

By subtracting Eq.~\ref{loss_mixup} from Eq.~\ref{loss_er}, 
\begin{equation} \label{loss_subtract}
\begin{aligned}
L_{\text{ER}} - L_{\text{Mixup}} &= - \eta \frac{1}{|\mathbb{Y}_p|} \sum_{y \in \mathbb{Y}_p} \Bigl(\nabla_{\theta} \ell(f_{\theta}^{l-1}, G_y)^\top \nabla_{\theta} \ell(f_{\theta}^{l-1}, d_i) - \nabla_{\theta} \ell(f_{\theta}^{l-1}, G_y)^\top \nabla_{\theta} \ell(f_{\theta}^{l-1}, \tilde{d}_{ij})\Bigr) \\
&= - \eta \frac{1}{|\mathbb{Y}_p|} \sum_{y \in \mathbb{Y}_p} \Bigl(\nabla_{\theta} \ell(f_{\theta}^{l-1}, G_y)^\top (\nabla_{\theta} \ell(f_{\theta}^{l-1}, d_i) - \nabla_{\theta} \ell(f_{\theta}^{l-1}, \tilde{d}_{ij}))\Bigr).
\end{aligned}
\end{equation}

If the gradient of the mixed sample satisfies the following condition: $\sum_{y \in \mathbb{Y}_p} \nabla_{\theta} \ell(f_{\theta}^{l-1}, G_y)^\top (\nabla_{\theta} \ell(f_{\theta}^{l-1}, d_i) - \nabla_{\theta} \ell(f_{\theta}^{l-1}, \tilde{d}_{ij})) > 0$, then Eq.~\ref{loss_subtract} becomes: 
\begin{equation*}
L_{\text{ER}} - L_{\text{Mixup}} < 0.
\end{equation*}

Therefore, if the gradient of the mixed sample satisfies the sufficient condition, training with the mixed sample leads to a higher average loss (i.e., lower average accuracy) for previous tasks and worsens catastrophic forgetting compared to training with the original sample.

\end{proof}

\subsection{Theoretical Analysis of the Gradients of Mixed Samples} 
\label{appendix:theoretical_analysis_gradient}

Continuing from Sec.~\ref{gradient_mixup}, we first prove the lemma on the approximated gradient vector of the original sample.

\begin{lemma*}
Let $w$ and $b$ denote the weights and biases of the last layer of the model. By using the last layer approximation, the gradient of the original sample $d = (x, y)$ is approximated as follows:
\begin{equation*}
g = (\nabla_{b} \ell(f_{\theta}^{l-1}, d), \nabla_{w} \ell(f_{\theta}^{l-1}, d))^\top = (\hat{y} - y, (\hat{y} - y)^\top X)^\top,
\end{equation*}
where $X$ is the feature embedding before the last layer and $\hat{y}$ is the model output of $x$ with respect to the model $f_{\theta}^{l-1}$.
\end{lemma*}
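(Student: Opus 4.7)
The plan is to carry out the standard softmax–cross-entropy gradient calculation restricted to the last linear layer, which is the content of the "last layer approximation" invoked in the statement. Concretely, I would fix notation at the last layer: write the pre-softmax logits as $z = wX + b$, where $X$ is the penultimate feature embedding of $x$ (treated as fixed for the purpose of the last-layer gradient), the predicted distribution as $\hat{y} = \mathrm{softmax}(z)$, and the per-sample cross-entropy as $\ell(f_\theta^{l-1}, d) = -\sum_k y_k \log \hat{y}_k$, with $y$ the one-hot (or soft) label vector.

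First I would compute $\partial \ell / \partial z$. This is the classical identity for softmax composed with cross-entropy: a direct calculation using $\partial \hat{y}_k / \partial z_j = \hat{y}_k(\delta_{kj} - \hat{y}_j)$ and $\sum_k y_k = 1$ collapses the sum into the well-known expression $\nabla_z \ell = \hat{y} - y$. I would present this computation briefly, since it is the only nontrivial piece.

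Next I would apply the chain rule through $z = wX + b$ to get the two blocks of $g$. For the bias, $\partial z / \partial b$ is the identity, so $\nabla_b \ell = \hat{y} - y$. For the weights, $z$ is bilinear in $w$ and $X$, so differentiating gives $\nabla_w \ell = (\hat{y} - y)^\top X$ in the outer-product sense (the $(k,j)$ entry equals $(\hat{y}_k - y_k) X_j$). Stacking the two blocks in the order used in the statement yields $g = (\hat{y} - y,\; (\hat{y} - y)^\top X)^\top$, as claimed.

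There is no real obstacle here beyond being careful about tensor shapes and the convention for writing the weight-gradient as an outer product; the argument is essentially bookkeeping on top of the softmax–cross-entropy derivative. I would conclude by noting that this lemma is the base case for the subsequent theorem on mixed-sample gradients, where the same chain-rule template is applied with $(x,y)$ replaced by $(\tilde{x}_{ij}, \tilde{y}_{ij})$ and $X$ replaced, under the linear-model/Fisher LDA assumption, by $X_{ij} = \lambda X_i + (1-\lambda) X_j$.
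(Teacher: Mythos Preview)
Your proposal is correct and essentially mirrors the paper's own proof: both set $z = wX + b$, derive the softmax--cross-entropy identity $\nabla_z \ell = \hat{y} - y$ (the paper computes $\partial \log \hat{y}_i / \partial z_j = \mathbbm{1}\{i=j\} - \hat{y}_j$ directly, you go via $\partial \hat{y}_k / \partial z_j = \hat{y}_k(\delta_{kj} - \hat{y}_j)$, which is the same calculation), and then chain-rule through $z$ to obtain $\nabla_b \ell = \hat{y} - y$ and $\nabla_w \ell = (\hat{y} - y)^\top X$. Your closing remark about this lemma serving as the base case for the mixed-sample theorem also matches how the paper uses it.
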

\begin{proof}
We employ the last layer approximation that approximates the overall gradients using the gradients with respect to the last layer's weights and biases. Let the weights and biases of the model's last linear layer be denoted by $w$ and $b$, respectively. Denoting the logit by $z$, the feature embedding before the last linear layer by $X$, and the softmax function by $\sigma$, the model output $\hat{y}$ is derived as follows:
\begin{equation*}
\hat{y} = \sigma(z) = \sigma(w \cdot X + b).
\end{equation*}

We denote the cross-entropy loss as $\ell(f_{\theta}^{l-1}, d) = - \sum_{i=1}^{c} y_i \log (\hat{y}_i)$, where $y$ is the true label and $\hat{y}$ is the model prediction for the sample $d = (x, y)$, and $c$ is the number of classes. We first compute the gradients with respect to the logit $z$ as follows:
\begin{equation} \label{grad_logit1}
\begin{aligned}
\frac{\partial \ell}{\partial z_j} = - \frac{\partial}{\partial z_j} \sum_{i=1}^{c} y_i \cdot \log (\hat{y}_i) = - \sum_{i=1}^{c} y_i \cdot \frac{\partial}{\partial z_j} \log (\hat{y}_i).
\end{aligned}
\end{equation}

Since there are $c$ classes, we can represent the model prediction as $\hat{y} = [\hat{y}_1, \ldots , \hat{y}_c]$, where $\hat{y}_i = \frac{e^{z_i}}{\sum_{n=1}^{c} e^{z_n}}$ for $\forall i \in \{1, \ldots, c\}$. By substituting this equation into Eq.~\ref{grad_logit1}, we rearrange the term $\frac{\partial}{\partial z_j} \log (\hat{y}_i)$ as follows:
\begin{equation} \label{grad_logit2}
\begin{aligned}
\frac{\partial}{\partial z_j} \log (\hat{y}_i) &= \frac{\partial}{\partial z_j} \log \biggl(\frac{e^{z_i}}{\sum_{n=1}^{c} e^{z_n}}\biggr) \\
&= \frac{\partial z_i}{\partial z_j} - \frac{\partial}{\partial z_j} \log \Biggl(\sum_{n=1}^{c} e^{z_n}\Biggr) \\
&= \mathbbm{1} \{i = j\} - \frac{1}{\sum_{n=1}^{c} e^{z_n}} \cdot \Biggl(\frac{\partial}{\partial z_j} \sum_{n=1}^{c} e^{z_n}\Biggr) \\
&= \mathbbm{1} \{i = j\} - \frac{e^{z_j}}{\sum_{n=1}^{c} e^{z_n}} \\
&= \mathbbm{1} \{i = j\} - \hat{y}_j,
\end{aligned}
\end{equation}

where $\mathbbm{1} \{ \cdot \}$ is an indicator function. By substituting Eq.~\ref{grad_logit2} into Eq.~\ref{grad_logit1}, we get
\begin{equation} \label{grad_logit3}
\begin{aligned}
\frac{\partial \ell}{\partial z_j} &= - \sum_{i=1}^{c} y_i \cdot (\mathbbm{1} \{i = j\} - \hat{y}_j) \\
&= \sum_{i=1}^{c} y_i \cdot \hat{y}_j - \sum_{i=1}^{c} y_i \cdot \mathbbm{1} \{i = j\} \\
&= \hat{y}_j \cdot \sum_{i=1}^{c} y_i - \sum_{i=1}^{c} y_i \cdot \mathbbm{1} \{i = j\}.
\end{aligned}
\end{equation}

Since the sum of the elements in the one-hot vector $y$ equals to 1, we have $\sum_{i=1}^{c} y_i = 1$. Next, the indicator function $\mathbbm{1} \{i = j\}$ is 1 if $i$ equals to $j$ and 0 otherwise. Then, $\sum_{i=1}^{c} y_i \cdot \mathbbm{1} \{i = j\} = y_j$. By rearranging the terms in Eq.~\ref{grad_logit3} using these equations, we get the followings: 
\begin{equation*}
\begin{aligned}
\frac{\partial \ell}{\partial z_j} = \hat{y}_j - y_j.
\end{aligned}
\end{equation*}

In the vector notation, we get:
\begin{equation*}
\frac{\partial \ell}{\partial z} = \hat{y} - y.
\end{equation*}

By applying the chain rule and using the equation $z = w \cdot X + b$, we derive the gradients with respect to the last layer's weights $w$ and biases $b$ as follows:
\begin{equation*}
\begin{aligned}
\frac{\partial \ell}{\partial b} &= \frac{\partial \ell}{\partial z} \cdot \frac{\partial z}{\partial b} = \hat{y} - y, \\
\frac{\partial \ell}{\partial w} &= \frac{\partial \ell}{\partial z} \cdot \frac{\partial z}{\partial w} = (\hat{y} - y) \cdot X.
\end{aligned}
\end{equation*}

Therefore, we derive the gradient vector of the original sample with respect to the last layer as follows:

\begin{equation} \label{last_layer_gradient}
g = (\nabla_{b} \ell(f_{\theta}^{l-1}, d), \nabla_{w} \ell(f_{\theta}^{l-1}, d))^\top = (\hat{y} - y, (\hat{y} - y)^\top X)^\top.
\end{equation}

\end{proof}

Using the Lemma, we next prove the theorem on the approximated gradient vector of a mixed sample.

\begin{theorem*}
If the model is linear, the gradient of the mixed sample $\tilde{d}_{ij} = (\tilde{x}_{ij}, \tilde{y}_{ij})$, generated by mixing two samples $d_i = (x_i, y_i)$ and $d_j = (x_j, y_j)$ with a mixing ratio $\lambda$, is approximated as follows:
\begin{equation*}
\begin{aligned}
\tilde{g}_{ij} &= (\nabla_{b} \ell(f_{\theta}^{l-1}, \tilde{d}_{ij}), \nabla_{w} \ell(f_{\theta}^{l-1}, \tilde{d}_{ij}))^\top \\
&= (\lambda (\hat{y}_{ij} - y_i) + (1 - \lambda) (\hat{y}_{ij} - y_j), (\lambda (\hat{y}_{ij} - y_i) + (1 - \lambda) (\hat{y}_{ij} - y_j))^\top X_{ij})^\top,
\end{aligned}
\end{equation*}
where $\hat{y}_{ij} = \biggl[\frac{p_1^{\lambda} q_1^{1 - \lambda}}{p_1^{\lambda} q_1^{1 - \lambda} + \cdots + p_c^{\lambda} q_c^{1 - \lambda}}, \ldots , \frac{p_c^{\lambda} q_c^{1 - \lambda}}{p_1^{\lambda} q_1^{1 - \lambda} + \cdots + p_c^{\lambda} q_c^{1 - \lambda}} \biggr]$ with $\hat{y}_i = [p_1, \ldots , p_c]$ and $\hat{y}_j = [q_1, \ldots , q_c]$ for $c$ classes. $X_{ij} = \lambda X_i + (1 - \lambda) X_j$, where $X_i$ and $X_j$ are the feature embeddings of $x_i$ and $x_j$ before the last layer.
\end{theorem*}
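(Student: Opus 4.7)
The plan is to start from the approximated gradient formula established in the previous Lemma and apply it directly to the mixed sample $\tilde{d}_{ij}$, then exploit the linearity of the model to reduce the resulting quantities to the explicit closed forms stated in the theorem.

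First, I would invoke the Lemma on $\tilde{d}_{ij} = (\tilde{x}_{ij}, \tilde{y}_{ij})$ to obtain
\begin{equation*}
\tilde{g}_{ij} = \bigl(\hat{y}_{ij} - \tilde{y}_{ij},\ (\hat{y}_{ij} - \tilde{y}_{ij})^\top \tilde{X}_{ij}\bigr)^\top,
\end{equation*}
where $\tilde{X}_{ij}$ is the feature embedding of $\tilde{x}_{ij}$ before the last layer and $\hat{y}_{ij}$ is the softmax output at $\tilde{x}_{ij}$. Next, substituting $\tilde{y}_{ij} = \lambda y_i + (1-\lambda) y_j$ and using the trivial identity $\lambda + (1-\lambda) = 1$, I would rewrite the residual as
\begin{equation*}
\hat{y}_{ij} - \tilde{y}_{ij} = \lambda(\hat{y}_{ij} - y_i) + (1-\lambda)(\hat{y}_{ij} - y_j),
\end{equation*}
which already matches the first block of the claimed formula. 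I would then invoke linearity of the feature map to conclude $\tilde{X}_{ij} = \lambda X_i + (1-\lambda) X_j = X_{ij}$, giving the second block.

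The remaining step, and the main technical piece, is identifying $\hat{y}_{ij}$ with the stated geometric-mean-of-softmaxes expression. Here I would again use linearity: the pre-softmax logit at the mixed input equals $w\tilde{X}_{ij} + b = \lambda(wX_i + b) + (1-\lambda)(wX_j + b) = \lambda z_i + (1-\lambda) z_j$. Applying the softmax componentwise yields
\begin{equation*}
\hat{y}_{ij,k} = \frac{e^{\lambda z_{i,k} + (1-\lambda) z_{j,k}}}{\sum_{n} e^{\lambda z_{i,n} + (1-\lambda) z_{j,n}}} = \frac{(e^{z_{i,k}})^{\lambda}(e^{z_{j,k}})^{1-\lambda}}{\sum_{n}(e^{z_{i,n}})^{\lambda}(e^{z_{j,n}})^{1-\lambda}}.
\end{equation*}
Writing $p_k = e^{z_{i,k}} / S_i$ and $q_k = e^{z_{j,k}} / S_j$ where $S_i = \sum_m e^{z_{i,m}}$ and $S_j = \sum_m e^{z_{j,m}}$, the common factor $S_i^{\lambda} S_j^{1-\lambda}$ appears in every term of both numerator and denominator and cancels, leaving
\begin{equation*}
\hat{y}_{ij,k} = \frac{p_k^{\lambda} q_k^{1-\lambda}}{\sum_{n} p_n^{\lambda} q_n^{1-\lambda}},
\end{equation*}
which is exactly the stated formula. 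Assembling these pieces produces the displayed expression for $\tilde{g}_{ij}$.

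The main obstacle I anticipate is this softmax manipulation: it is easy to mistakenly write the softmax of a convex combination of logits as the convex combination of softmaxes, whereas the correct object is their (normalized) geometric mean. The cleanest way to avoid this is to carry the manipulation at the logit level, as above, and only cancel the normalization constants at the very end. The other steps are essentially bookkeeping: the split of $\hat{y}_{ij} - \tilde{y}_{ij}$ is immediate, and the identity $\tilde{X}_{ij} = \lambda X_i + (1-\lambda) X_j$ is precisely the place where the linearity assumption on the feature map is used, which also explains why the statement is only an approximation in the general nonlinear case noted in the paragraph following the theorem.
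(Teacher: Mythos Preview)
Your proposal is correct and follows essentially the same route as the paper: apply the Lemma to $\tilde{d}_{ij}$, split the label residual via $\lambda + (1-\lambda)=1$, use linearity for both the embedding $X_{ij}$ and the logit $z_{ij}=\lambda z_i+(1-\lambda)z_j$, and then read off the softmax as a normalized geometric mean. Your treatment of the softmax step is in fact slightly cleaner than the paper's---you keep the partition sums $S_i,S_j$ explicit and cancel them, whereas the paper writes $z_i=[\log p_1,\ldots,\log p_c]$ directly, which is only correct up to an additive constant (harmless here, but your version makes the cancellation transparent).
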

\begin{proof}
Mixup generates a virtual sample $(\tilde{x}_{ij}, \tilde{y}_{ij})$ using two samples $(x_i, y_i)$ and $(x_j, y_j)$ as follows:
\begin{equation*}
\begin{aligned}
\tilde{x}_{ij} = \lambda x_i + (1 - \lambda) x_j, \ \tilde{y}_{ij} =  \lambda y_i + (1 - \lambda) y_j.
\end{aligned}
\end{equation*}

By employing Eq.~\ref{last_layer_gradient}, we get the last layer gradients of the two original samples with respect to the model $f_{\theta}^{l-1}$ as follows:
\begin{equation*}
g_i = (\hat{y}_i - y_i, (\hat{y}_i - y_i)^\top X_i)^\top, \ g_j = (\hat{y}_j - y_j, (\hat{y}_j - y_j)^\top X_j)^\top,
\end{equation*}
where $\hat{y}_i$ and $\hat{y}_j$ are the model predictions, and $X_i$ and $X_j$ are the feature embeddings before the last layer of the model for $x_i$ and $x_j$, respectively. 

We next derive the last layer gradient of the mixed sample $(\tilde{x}_{ij}, \tilde{y}_{ij})$ as follows:
\begin{equation} \label{eq:mixed_gradient}
\begin{aligned}
\tilde{g}_{ij} & = (\hat{y}_{ij} - \tilde{y}_{ij}, (\hat{y}_{ij} - \tilde{y}_{ij})^\top X_{ij})^\top \\
&= (\hat{y}_{ij} - (\lambda y_i + (1 - \lambda) y_j), (\hat{y}_{ij} - (\lambda y_i + (1 - \lambda) y_j))^\top X_{ij})^\top \\
&= ((\lambda \hat{y}_{ij} + (1 - \lambda) \hat{y}_{ij}) - (\lambda y_i + (1 - \lambda) y_j), ((\lambda \hat{y}_{ij} + (1 - \lambda) \hat{y}_{ij}) - (\lambda y_i + (1 - \lambda) y_j))^\top X_{ij})^\top \\
&= (\lambda (\hat{y}_{ij} - y_i) + (1 - \lambda) (\hat{y}_{ij} - y_j), (\lambda (\hat{y}_{ij} - y_i) + (1 - \lambda) (\hat{y}_{ij} - y_j))^\top X_{ij})^\top,
\end{aligned}
\end{equation}
where $\hat{y}_{ij}$ is the model prediction of $\tilde{x}_{ij}$ and $X_{ij}$ is the feature embedding of $\tilde{x}_{ij}$ before the last layer of the model. 

Let the linear model be denoted as $f_{\theta}^{l-1}(x) = \theta^\top x$, and the softmax function as $\sigma$. Then, the model prediction $\hat{y}$ is derived as follows:
\begin{equation*}
\hat{y} = \sigma(f_{\theta}^{l-1}(x)) = \sigma(\theta^\top x).
\end{equation*} 

Since there are $c$ classes, we can represent the model predictions for $x_i$ and $x_j$ using the probability values of $p$ and $q$ as,
\begin{equation*}
\begin{aligned}
\hat{y}_i &= \sigma(\theta^\top x_i) = [p_1, \ldots , p_c], \ \text{where} \ \sum_{n=1}^{c} p_n = 1 \ \text{and} \ 0 \leq p_n \leq 1, \\
\hat{y}_j &= \sigma(\theta^\top x_j) = [q_1, \ldots , q_c], \ \text{where} \ \sum_{n=1}^{c} q_n = 1 \ \text{and} \ 0 \leq q_n \leq 1.
\end{aligned}
\end{equation*}

Then, the logits for $x_i$ and $x_j$ become:
\begin{equation*}
z_i = \theta^\top x_i = [\log p_1, \ldots , \log p_c], \ z_j = \theta^\top x_j = [\log q_1, \ldots , \log q_c].
\end{equation*}

Since the mixed sample is generated using the equation $\tilde{x}_{ij} = \lambda x_i + (1 - \lambda) x_j$, we derive the logit of the mixed sample as follows:
\begin{equation*}
\begin{aligned}
z_{ij} &= \theta^\top \tilde{x}_{ij} \\
&= \theta^\top (\lambda x_i + (1 - \lambda) x_j) \\
&= \lambda \theta^\top x_i + (1 - \lambda) \theta^\top x_j \\
&= \lambda [\log p_1, \ldots , \log p_c] + (1 - \lambda) [\log q_1, \ldots , \log q_c] \\
&= [\lambda \log p_1 + (1 - \lambda) \log q_1, \ldots , \lambda \log p_c + (1 - \lambda) \log q_c] \\
&= [\log p_1^{\lambda} + \log q_1^{1 - \lambda}, \ldots , \log p_c^{\lambda} + \log q_c^{1 - \lambda}] \\
&= [\log (p_1^{\lambda} q_1^{1 - \lambda}), \ldots , \log (p_c^{\lambda} q_c^{1 - \lambda})].
\end{aligned}
\end{equation*}

By applying the softmax function to the logit, we get the model prediction of the mixed sample:
\begin{equation} \label{eq:output}
\begin{aligned}
\hat{y}_{ij} &= \sigma(z_{ij}) \\
&= \sigma([\log (p_1^{\lambda} q_1^{1 - \lambda}), \ldots , \log (p_c^{\lambda} q_c^{1 - \lambda})]) \\
&= \biggl[\frac{p_1^{\lambda} q_1^{1 - \lambda}}{p_1^{\lambda} q_1^{1 - \lambda} + \cdots + p_c^{\lambda} q_c^{1 - \lambda}}, \ldots , \frac{p_c^{\lambda} q_c^{1 - \lambda}}{p_1^{\lambda} q_1^{1 - \lambda} + \cdots + p_c^{\lambda} q_c^{1 - \lambda}} \biggr].
\end{aligned}
\end{equation}

Using the linearity of the model, the embedding of the mixed sample is computed as follows:
\begin{equation} \label{eq:embedding}
X_{ij} = \lambda X_i + (1 - \lambda) X_j.
\end{equation}

By substituting Eq.~\ref{eq:output} and Eq.~\ref{eq:embedding} into Eq.~\ref{eq:mixed_gradient}, we get the gradient vector of the mixed sample as follows:
\begin{equation*}
\begin{aligned}
\tilde{g}_{ij} &= (\nabla_{b} \ell(f_{\theta}^{l-1}, \tilde{d}_{ij}), \nabla_{w} \ell(f_{\theta}^{l-1}, \tilde{d}_{ij}))^\top \\
&= (\lambda (\hat{y}_{ij} - y_i) + (1 - \lambda) (\hat{y}_{ij} - y_j), (\lambda (\hat{y}_{ij} - y_i) + (1 - \lambda) (\hat{y}_{ij} - y_j))^\top X_{ij})^\top,
\end{aligned}
\end{equation*}
where $\hat{y}_{ij} = \biggl[\frac{p_1^{\lambda} q_1^{1 - \lambda}}{p_1^{\lambda} q_1^{1 - \lambda} + \cdots + p_c^{\lambda} q_c^{1 - \lambda}}, \ldots , \frac{p_c^{\lambda} q_c^{1 - \lambda}}{p_1^{\lambda} q_1^{1 - \lambda} + \cdots + p_c^{\lambda} q_c^{1 - \lambda}} \biggr]$ and $X_{ij} = \lambda X_i + (1 - \lambda) X_j$. 

\end{proof}

\newpage
\section{Appendix -- Experiments}
\label{appendix:experiments}

\subsection{More Details on Experimental Settings}
\label{appendix:exp-details}

Continuing from Sec.~\ref{parameters}, we provide more details on experimental settings. Regarding the buffer management strategy, other papers allocate the maximum buffer storage of $C \cdot N$ from the beginning and utilize it fully throughout tasks, where $C$ is the total number of classes and $N$ is the number of buffer samples per class. Then the actual number of buffer samples per class becomes larger than $N$ in the earlier tasks. However, we believe that this strategy is impractical because the total number of classes is unknown at the beginning in real-world scenarios. We thus adopt a more practical strategy that incrementally expands the buffer storage by $N$ samples for each new class. The relatively lower accuracy values reported in our paper are due to the adoption of a more practical and challenging buffer management strategy compared to other papers. If we adopt the strategy used in other papers, the results reported in our paper would be further improved.

We implement data augmentation baselines using the parameters specified in their original papers. For the mixup-based baselines, we set the $\alpha$ value of each method as follows: Mixup ($\alpha = 1$ for MNIST, FMNIST, CIFAR-10, and CIFAR-100, and $\alpha = 0.4$ for Tiny ImageNet), Manifold Mixup ($\alpha = 2$ for MNIST, FMNIST, CIFAR-10, and CIFAR-100, and $\alpha = 0.2$ for Tiny ImageNet), CutMix ($\alpha = 1$ for all datasets), Remix ($\alpha = 1$ for MNIST, FMNIST, CIFAR-10, and CIFAR-100, and $\alpha = 0.4$ for Tiny ImageNet), and Balanced-MixUp ($\alpha = 0.2$ for all datasets). For \method{}, we also use the same $\alpha$ with Mixup as $\alpha = 1$ for MNIST, FMNIST, CIFAR-10, and CIFAR-100, and $\alpha = 0.4$ for Tiny ImageNet. For RandAugment, we set the two hyperparameters for the number of applying transformations and their magnitudes to 1 and 14, respectively.

\subsection{More Details on Catastrophic Forgetting in Mixup}
\label{appendix:exp-forgetting-mixup}

Continuing from Sec.~\ref{emp_analysis}, we show the occurrence of catastrophic forgetting when simply applying Mixup to the experience replay-based method (ER) in class-incremental learning as shown in Fig.~\ref{fig:mnist_forget}--Fig.~\ref{fig:tiny_imagenet_forget}. We analyze catastrophic forgetting in both task-wise accuracy across all tasks and class-wise accuracy at the last task. For MNIST, applying Mixup decreases the model accuracy both in task-wise and class-wise evaluations. For the other datasets, although Mixup improves the overall task accuracy, the accuracy of some classes is exposed to catastrophic forgetting and results in lower accuracy at the last task. This phenomenon is not limited to the last task, but we provide the class accuracy of the last task as an example. We also provide the performance of \method{} for comparison, and our method shows robust accuracy in both task-wise and class-wise evaluations.

\vskip -0.1in

\begin{figure}[H]
    \centering
    \begin{minipage}[t]{0.4\textwidth}
        \centering
        \includegraphics[width=\textwidth]{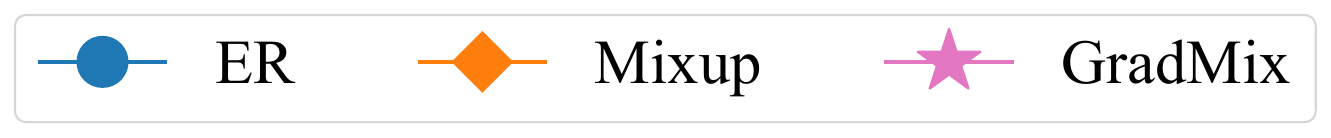}
    \end{minipage}
    \vspace{-0.6cm}
\end{figure}

\begin{figure}[H]
    \centering
    \begin{minipage}[t]{0.31\textwidth}
        \centering
        \includegraphics[width=\textwidth]{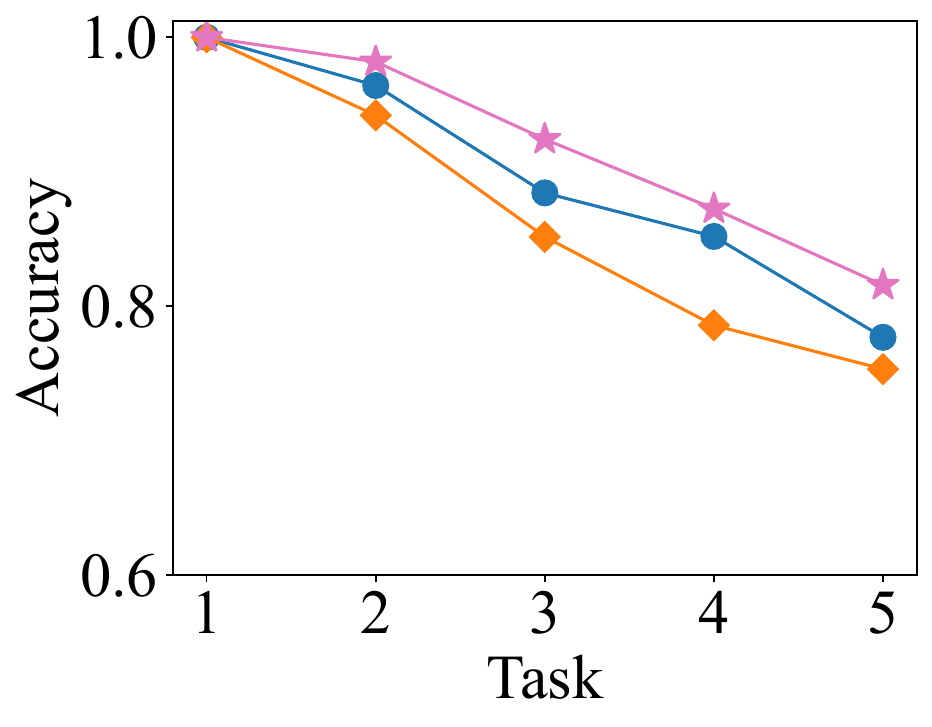}
        \vskip -0.1in
        \caption*{(a) Task-wise accuracy.}
        \label{fig:mnist_forget_task}
    \end{minipage}
    \hspace{0.05\textwidth}
    \begin{minipage}[t]{0.31\textwidth}
        \centering
        \includegraphics[width=\textwidth]{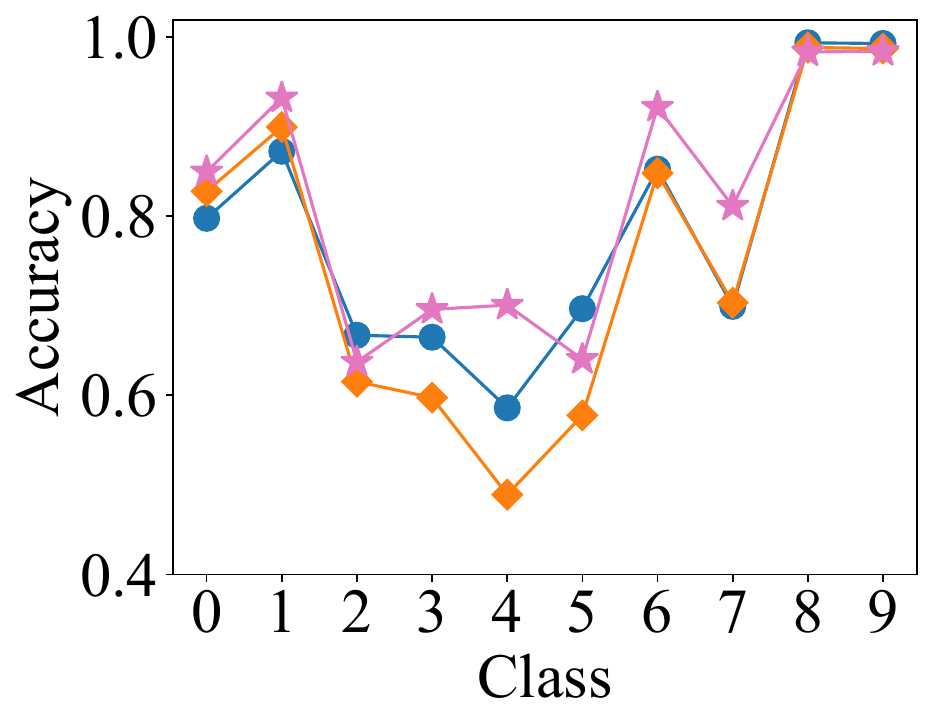}
        \vskip -0.1in
        \caption*{(b) Class-wise accuracy on the last task.}
        \label{fig:mnist_forget_class}
    \end{minipage}
    \vskip -0.1in
    \caption{Comparison between the performance of ER, Mixup, and \method{} on the MNIST dataset.}
    \label{fig:mnist_forget}
\end{figure}

\vspace{-0.3cm}


\begin{figure}[H]
    \centering
    \begin{minipage}[t]{0.31\textwidth}
        \centering
        \includegraphics[width=\textwidth]{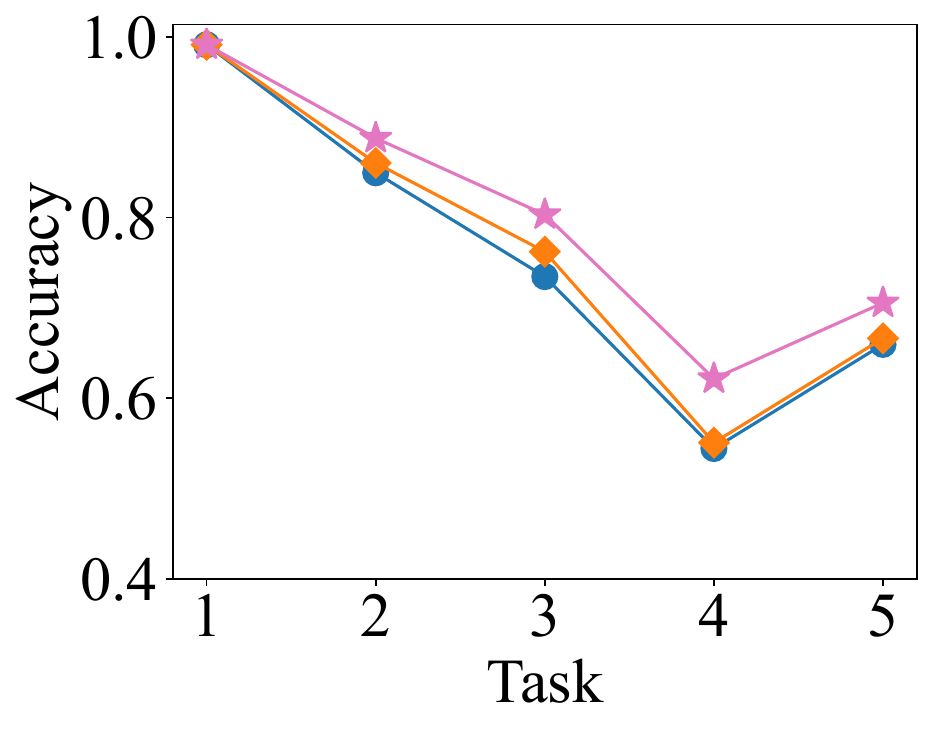}
        \vskip -0.1in
        \caption*{(a) Task-wise accuracy.}
        \label{fig:fmnist_forget_task}
    \end{minipage}
    \hspace{0.05\textwidth}
    \begin{minipage}[t]{0.31\textwidth}
        \centering
        \includegraphics[width=\textwidth]{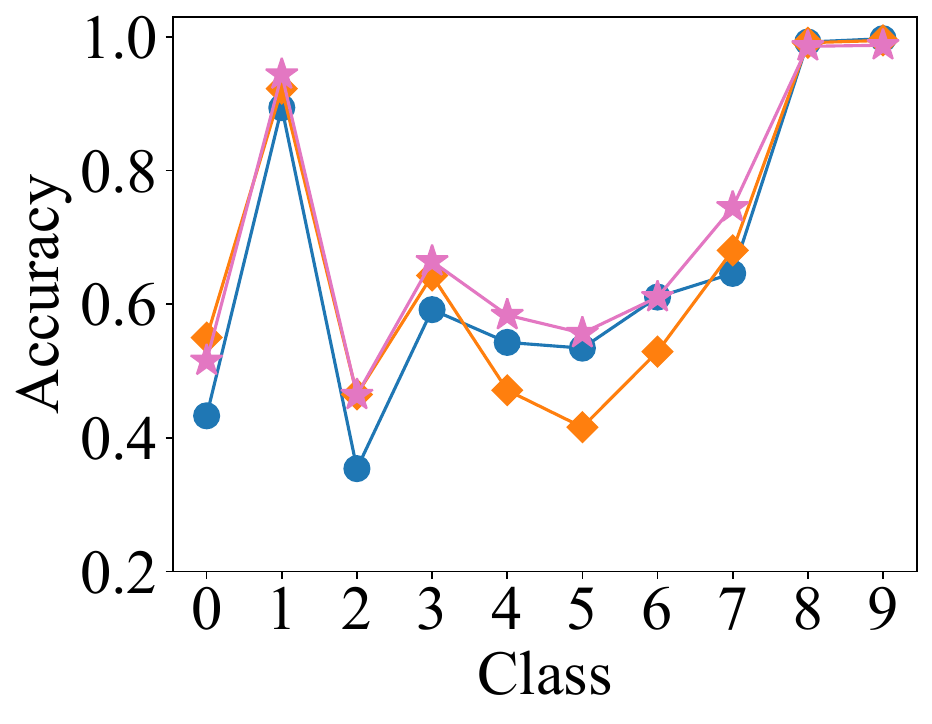}
        \vskip -0.1in
        \caption*{(b) Class-wise accuracy on the last task.}
        \label{fig:fmnist_forget_class}
    \end{minipage}
    \vskip -0.1in
    \caption{Comparison between the performance of ER, Mixup, and \method{} on the Fashion-MNIST dataset.}
    \label{fig:fmnist_forget}
\end{figure}



\begin{figure}[H]
    \centering
    \begin{minipage}[t]{0.31\textwidth}
        \centering
        \includegraphics[width=\textwidth]{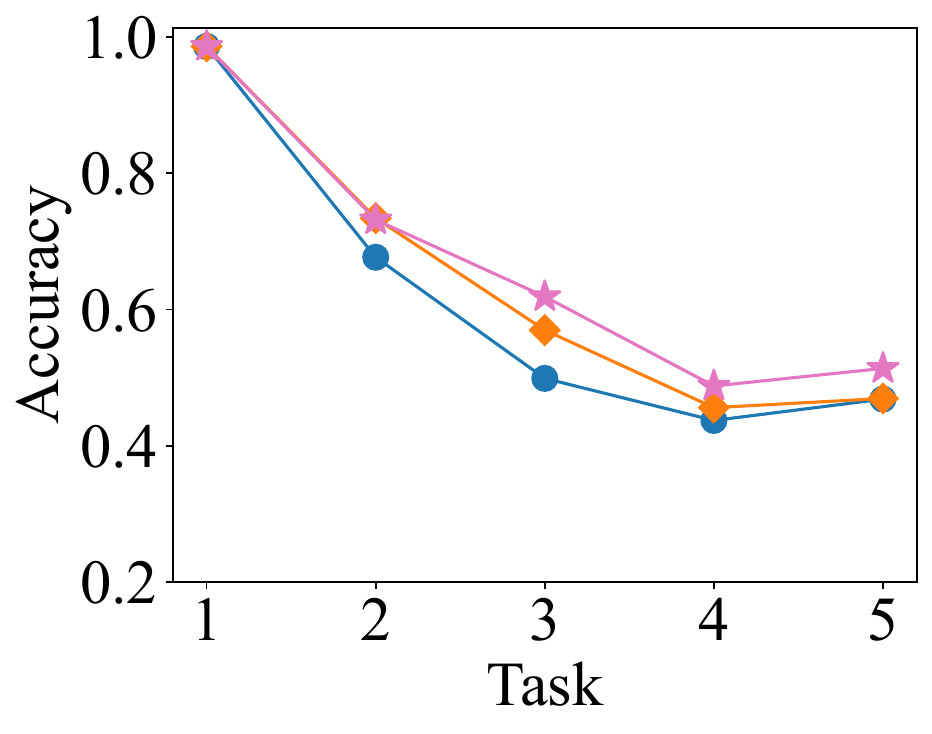}
        \vskip -0.1in
        \caption*{(a) Task-wise accuracy.}
        \label{fig:cifar10_forget_task}
    \end{minipage}
    \hspace{0.05\textwidth}
    \begin{minipage}[t]{0.31\textwidth}
        \centering
        \includegraphics[width=\textwidth]{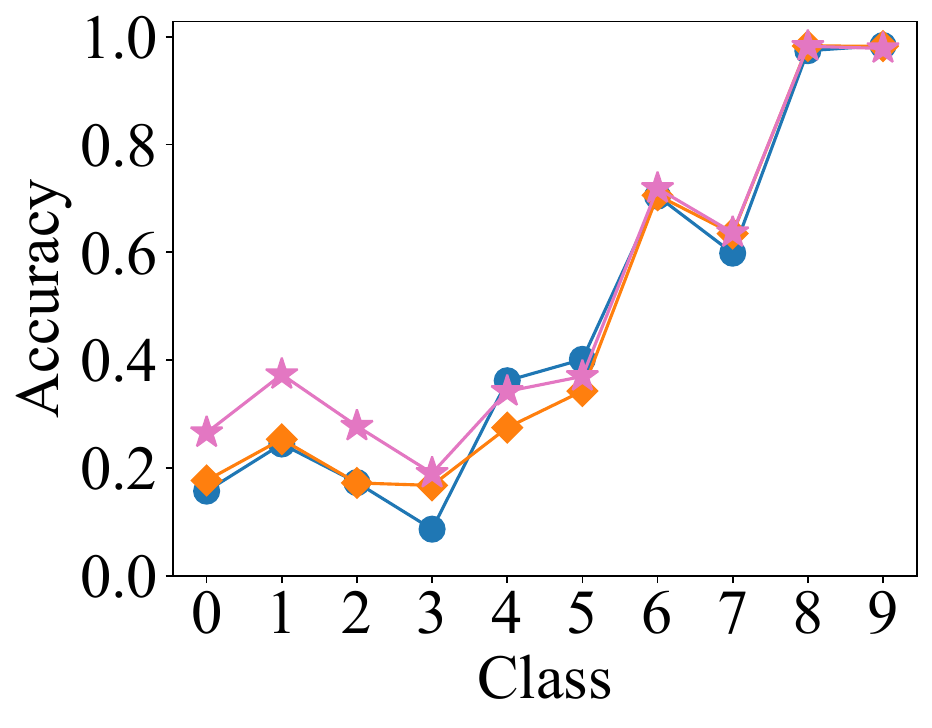}
        \vskip -0.1in
        \caption*{(b) Class-wise accuracy on the last task.}
        \label{fig:cifar10_forget_class}
    \end{minipage}
    \vskip -0.1in
    \caption{Comparison between the performance of ER, Mixup, and \method{} on the CIFAR-10 dataset.}
    \label{fig:cifar10_forget}
    \vspace{-0.8cm}
\end{figure}


\begin{figure}[H]
    \centering
    \begin{minipage}[t]{0.31\textwidth}
        \centering
        \includegraphics[width=\textwidth]{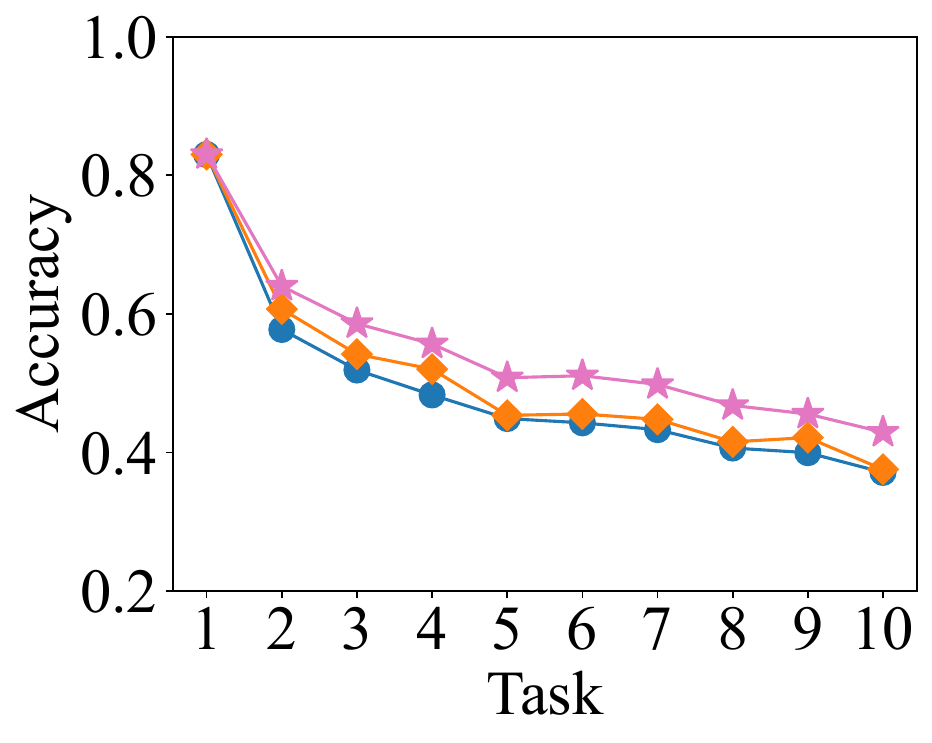}
        \vskip -0.1in
        \caption*{(a) Task-wise accuracy.}
        \label{fig:cifar100_forget_task}
    \end{minipage}
    \hspace{0.05\textwidth}
    \begin{minipage}[t]{0.315\textwidth}
        \centering
        \includegraphics[width=\textwidth]{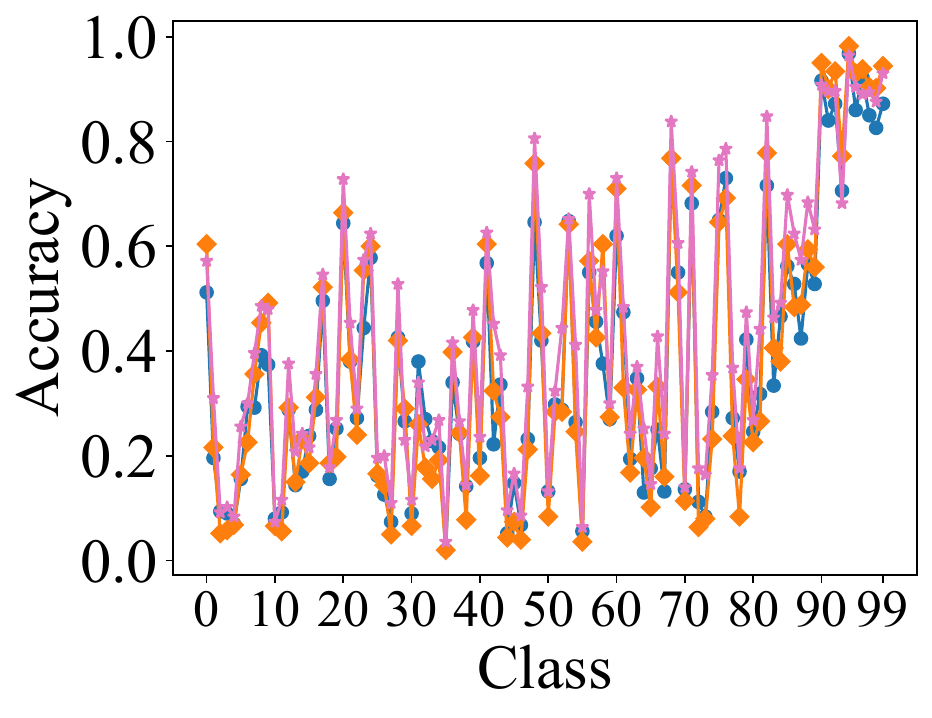}
        \vskip -0.1in
        \caption*{(b) Class-wise accuracy on the last task.}
        \label{fig:cifar100_forget_class}
    \end{minipage}
    \vskip -0.1in
    \caption{Comparison between the performance of ER, Mixup, and \method{} on the CIFAR-100 dataset.}
    \label{fig:cifar100_forget}
    \vspace{-0.8cm}
\end{figure}


\begin{figure}[H]
    \centering
    \begin{minipage}[t]{0.31\textwidth}
        \centering
        \includegraphics[width=\textwidth]{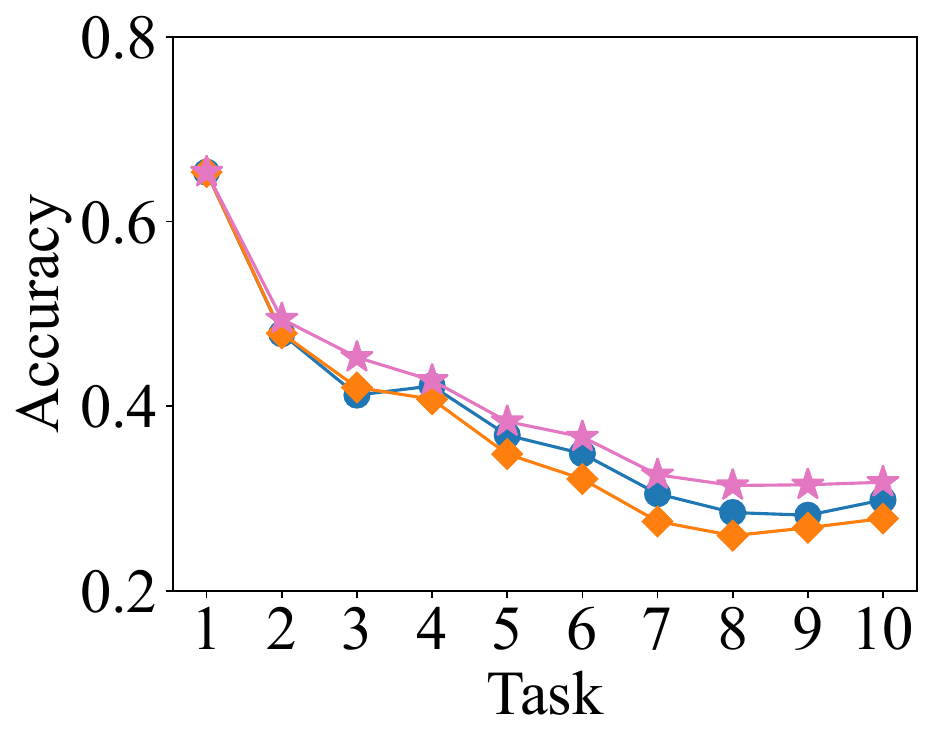}
        \vskip -0.1in
        \caption*{(a) Task-wise accuracy.}
        \label{fig:tiny_imagenet_forget_task}
    \end{minipage}
    \hspace{0.05\textwidth}
    \begin{minipage}[t]{0.32\textwidth}
        \centering
        \includegraphics[width=\textwidth]{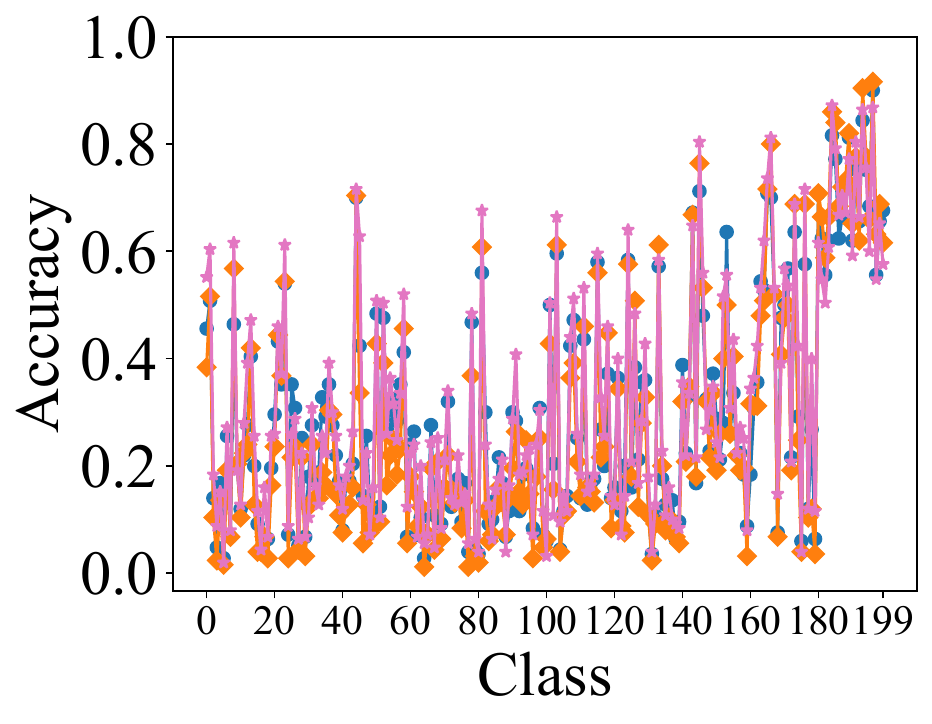}
        \vskip -0.1in
        \caption*{(b) Class-wise accuracy on the last task.}
        \label{fig:tiny_imagenet_forget_class}
    \end{minipage}
    \vskip -0.1in
    \caption{Comparison between the performance of ER, Mixup, and \method{} on the Tiny ImageNet dataset.}
    \label{fig:tiny_imagenet_forget}
\end{figure}

In addition, we perform additional experiments to analyze the rate of detrimental mixed samples that cause greater catastrophic forgetting when applying Mixup, compared to the experience replay method (ER). We compute the rate for each task and report the average rate across all tasks as shown in Table~\ref{tbl:neg_pair_rate}. As a result, an average of 47.3\% of randomly mixed samples can cause greater catastrophic forgetting than the original samples across five datasets, leading to a degradation in the accuracy of previous tasks.


\begin{table*}[h]
  \setlength{\tabcolsep}{12pt}
  \caption{Average rate of detrimental mixed samples on catastrophic forgetting when applying Mixup to the experience replay method (ER) on the MNIST, Fashion-MNIST (FMNIST), CIFAR-10, CIFAR-100, and Tiny ImageNet datasets.}
  \centering
  \begin{tabular}{l|ccccc}
  \toprule
    {Metric} & {\sf MNIST} & {\sf FMNIST} & {\sf CIFAR-10} & {\sf CIFAR-100} & {\sf Tiny ImageNet} \\
    \midrule
    {Average Rate} & 0.514 & 0.489 & 0.472 & 0.402 & 0.488 \\
    \bottomrule
  \end{tabular}
  \label{tbl:neg_pair_rate}
\end{table*}

\subsection{More Details on the Error when using Approximated Gradients of Mixed Samples}
\label{appendix:exp-error}

Continuing from Sec.~\ref{gradient_mixup}, we present approximation errors in inner products of the gradients of buffer data and mixed training data as shown in Fig.~\ref{fig:error_exp}. We compute both the approximated and true inner products when using the corresponding approximated and true gradients for the mixed samples and compare them using Root Mean Squared Error (RMSE) as a metric. A lower RMSE value means better approximation. If $\lambda$ is 0 or 1, the approximation error is 0 since the approximated gradient becomes the same as the gradient of the original sample. As the $\lambda$ value gets closer to 0.5, mixed samples are generated in the middle of original samples and the resulting approximation error increases. If $\lambda$ follows a uniform distribution between 0 and 1, the average approximation errors are 0.136, 0.110, 0.074, 0.035, and 0.037 for the MNIST, FMNIST, CIFAR-10, CIFAR-100, and Tiny ImageNet datasets, respectively. We note that the actual effect of the approximation errors on the selective mixup criterion is negligible because the sign of inner products is a key measure in \method{} rather than their exact values.
\vspace{-0.3cm}
\begin{figure}[H]
    \centering
    \begin{minipage}[t]{0.31\textwidth}
        \centering
        \includegraphics[width=\textwidth]{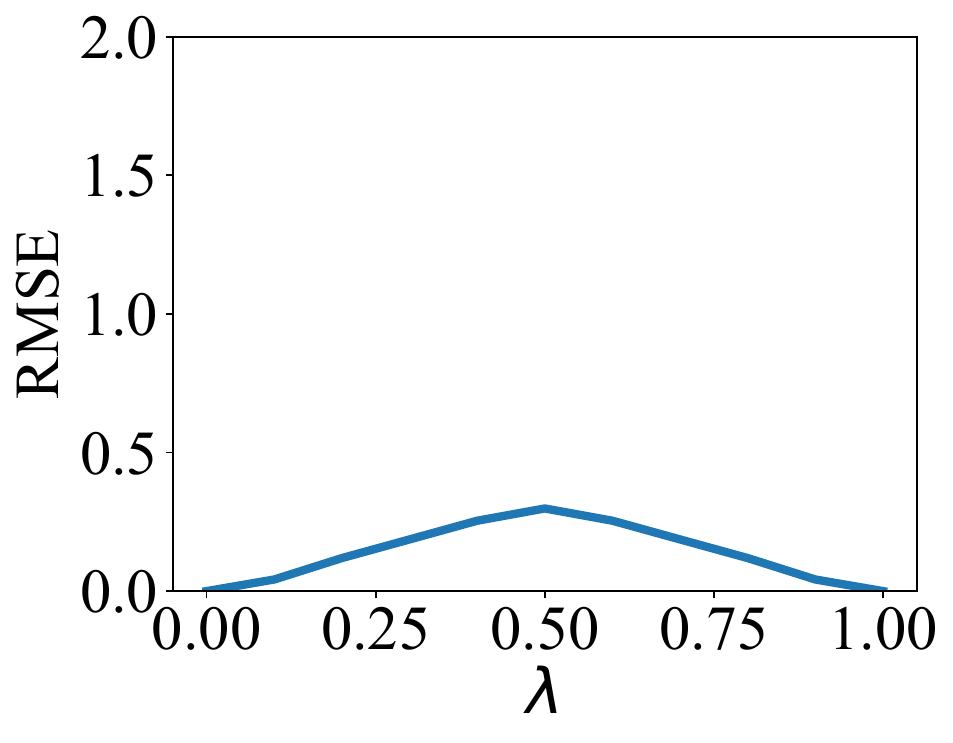}
        \vskip -0.1in
        \caption*{(a) MNIST.}
        \label{fig:error_mnist}
    \end{minipage}
    \begin{minipage}[t]{0.31\textwidth}
        \centering
        \includegraphics[width=\textwidth]{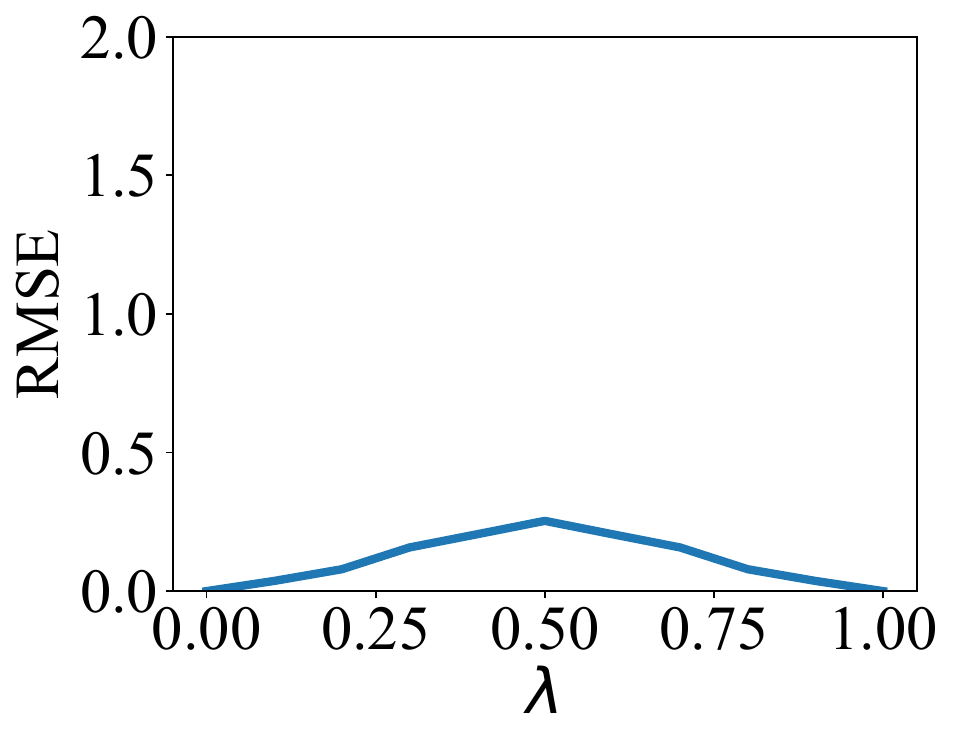}
        \vskip -0.1in
        \caption*{(b) FMNIST.}
        \label{fig:error_fmnist}
    \end{minipage}
    \begin{minipage}[t]{0.31\textwidth}
        \centering
        \includegraphics[width=\textwidth]{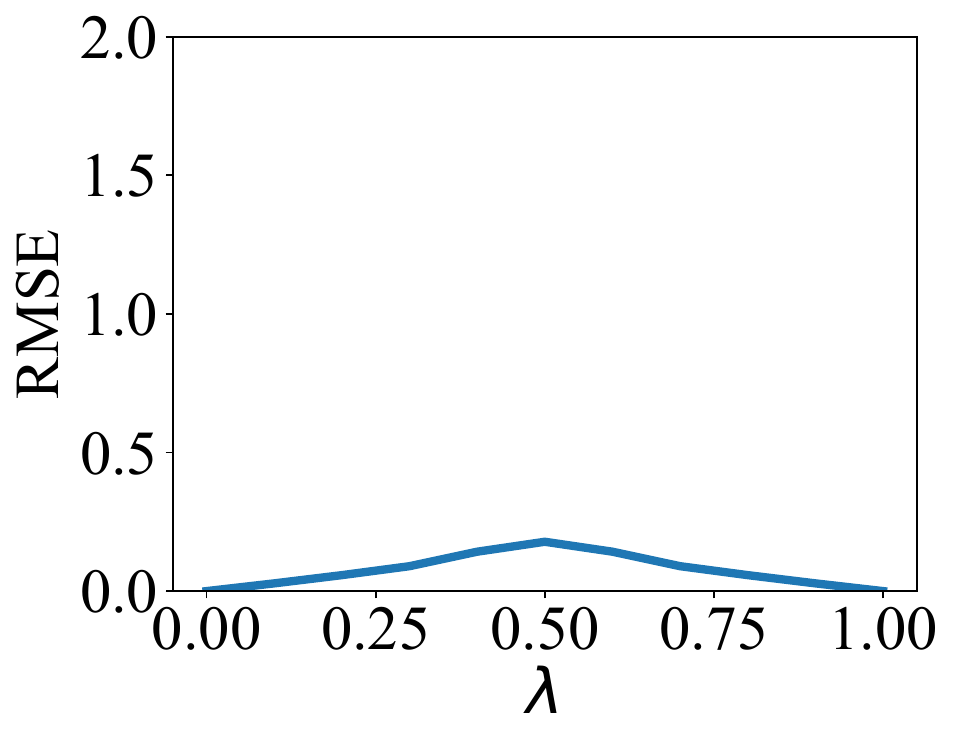}
        \vskip -0.1in
        \caption*{(c) CIFAR-10.}
        \label{fig:error_cifar-10}
    \end{minipage}
    \begin{minipage}[t]{0.31\textwidth}
        \centering
        \includegraphics[width=\textwidth]{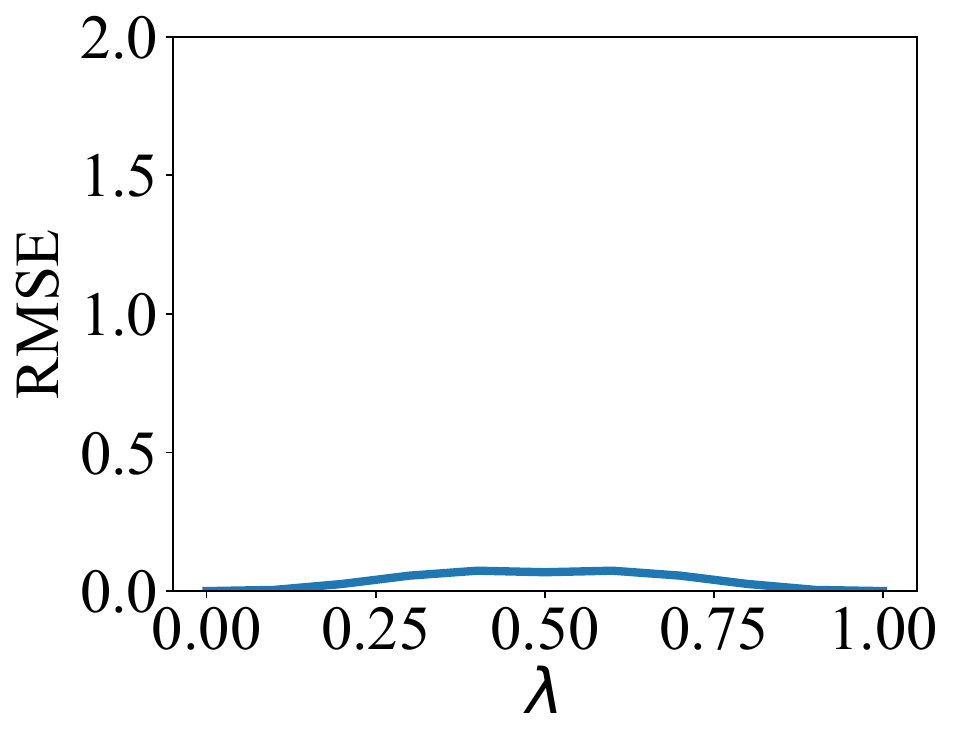}
        \vskip -0.1in
        \caption*{(d) CIFAR-100.}
        \label{fig:error_cifar-100}
    \end{minipage}
    \begin{minipage}[t]{0.31\textwidth}
        \centering
        \includegraphics[width=\textwidth]{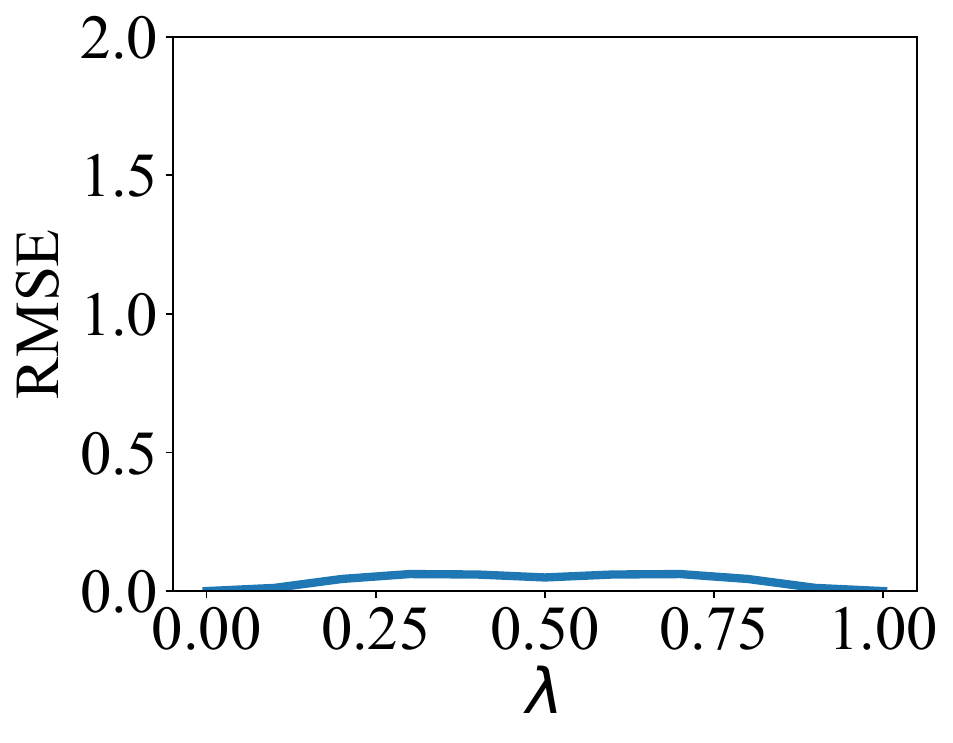}
        \vskip -0.1in
        \caption*{(e) Tiny ImageNet.}
        \label{fig:error_tiny-imagenet}
    \end{minipage}
    \vskip -0.1in
    \caption{Root Mean Squared Error (RMSE) between approximated and true inner products with respect to mixing ratio $\lambda$ on the MNIST, FMNIST, CIFAR-10, CIFAR-100, and Tiny ImageNet datasets.}
    \label{fig:error_exp}
    \vskip -0.1in
\end{figure}

In addition, we perform additional experiments to analyze the precision and recall of the optimal sample pairs derived by using the approximated gradients of mixed samples as shown in Table~\ref{tbl:precision_recall}. We define the optimal sample pairs derived by using the true gradients of mixed samples as true labels. As a result, the optimal sample pairs derived by using the approximated gradients have high precision and recall values for all datasets. This result suggests that using approximated gradients does not change the selective mixup criterion much.
\vspace{-0.1cm}

\begin{table}[H]
  \setlength{\tabcolsep}{13.3pt}
  \caption{Average precision and recall results for determining the optimal sample pairs using approximated gradients of mixed samples on the MNIST, FMNIST, CIFAR-10, CIFAR-100, and Tiny ImageNet datasets.} 
  \vspace{-0.2cm}
  \centering
  \begin{tabular}{l|ccccc}
  \toprule
    {Metric} & {\sf MNIST} & {\sf FMNIST} & {\sf CIFAR-10} & {\sf CIFAR-100} & {\sf Tiny ImageNet} \\
    \midrule
    {Precision} & 0.987 & 0.937 & 0.982 & 0.854 & 0.812 \\
    {Recall} & 0.847 & 0.852 & 0.911 & 0.827 & 0.835 \\
    \bottomrule
  \end{tabular}
  \label{tbl:precision_recall}
  \vspace{-0.1cm}
\end{table}

\subsection{More Details on Class-based Selective Mixup}
\label{appendix:exp-class}

Continuing from Sec.~\ref{alg}, we provide empirical results to show that training samples within the same class share similar gradients as shown in Fig.~\ref{fig:class_distance}. We use the cosine distance as a metric and define intra-class and inter-class distances as the average distances between gradients of samples within the same class and from different classes, respectively. As a result, intra-class distances are smaller than inter-class distances across all datasets, indicating that training samples within the same class share similar gradients. We believe that a difference between intra-class and inter-class distances still exists on large datasets because intra-class gradients are derived from similar embeddings within the same class, unlike the inter-class case.

\begin{figure}[H]
    \centering
    \includegraphics[width=0.6\textwidth]{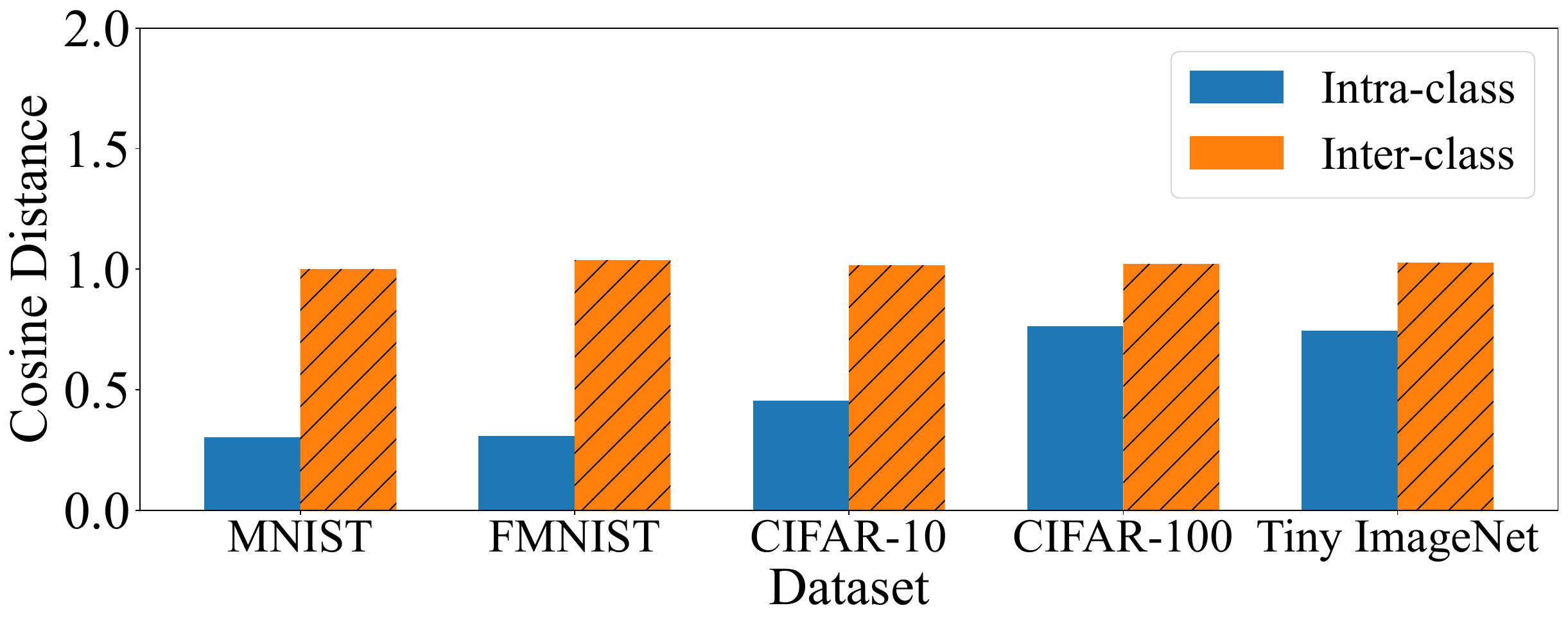}
    \vskip -0.1in
    \caption{Average cosine distances between gradients of data within the same class (intra-class) and from different classes (inter-class) on the MNIST, FMNIST, CIFAR-10, CIFAR-100, and Tiny ImageNet datasets.}
    \label{fig:class_distance}
    \vspace{-0.1cm}
\end{figure}

\subsection{More Details on Computational Complexity and Runtime Results of \method{}}
\label{appendix:exp-runtime}

Continuing from Sec.~\ref{alg}, we provide more details on an analysis of the computational complexity and runtime of the \method{} algorithm. The computational complexity of \method{} is quadratic to the number of classes when solving the optimization problem to obtain the optimal class pairs for each class. However, since we transform the data selection optimization problem into a class selection problem, the overhead is significantly reduced as the number of classes is much smaller than the number of samples. If the number of classes keeps increasing with consecutive tasks, the overhead of \method{} may also increase correspondingly. If there are too many classes, we can transform the repeated gradient inner product process into a nearest neighbor search problem and use locality-sensitive hashing (LSH)\,\cite{DBLP:conf/stoc/IndykM98, DBLP:conf/vldb/GionisIM99} to efficiently find an approximate solution with a time complexity of $\mathcal{O}(C \log C)$, where $C$ is the number of classes. 

We also provide runtime results for solving the optimization problem in \method{} per epoch with respect to the increasing number of tasks as shown in Fig.~\ref{fig:runtime_exp}. Since \method{} is not applied to the first task, we present the results starting from the second task. As the number of tasks increases, the number of classes also increases, which leads to a gradual increase in the runtime. In practice, solving the optimization problem per epoch takes a few seconds, resulting in a total overhead of a few minutes during training on the MNIST, FMNIST, CIFAR-10, CIFAR-100, and Tiny ImageNet datasets. If the number of classes becomes too large, we suggest clustering similar classes and assigning the selective mixup criterion based on clusters rather than classes as a potential solution to reduce the computational overhead.

\vspace{-0.2cm}

\begin{figure}[H]
    \centering
    \begin{minipage}[t]{0.3\textwidth}
        \centering
        \includegraphics[width=\textwidth]{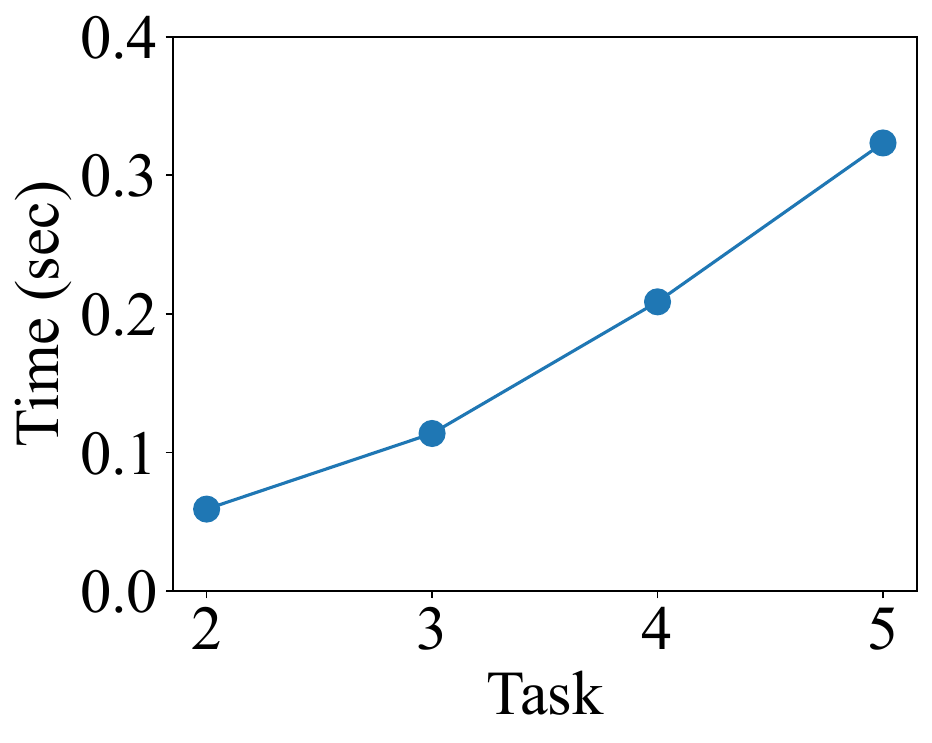}
        \vskip -0.1in
        \caption*{(a) MNIST.}
        \label{fig:runtime_mnist}
    \end{minipage}
    \begin{minipage}[t]{0.3\textwidth}
        \centering
        \includegraphics[width=\textwidth]{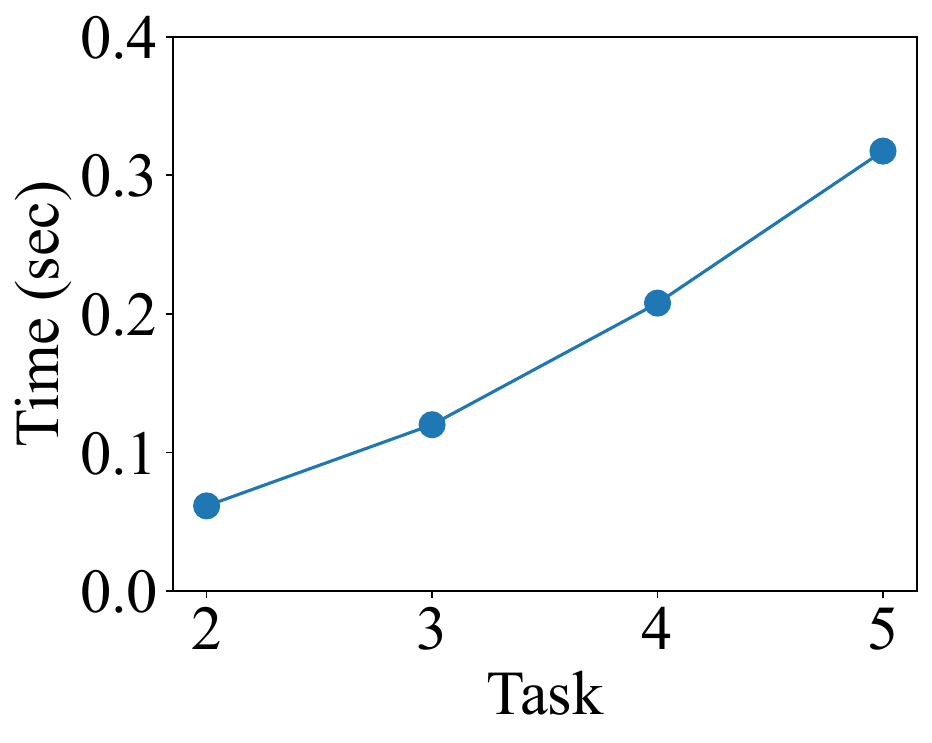}
        \vskip -0.1in
        \caption*{(b) FMNIST.}
        \label{fig:runtime_fmnist}
    \end{minipage}
    \begin{minipage}[t]{0.3\textwidth}
        \centering
        \includegraphics[width=\textwidth]{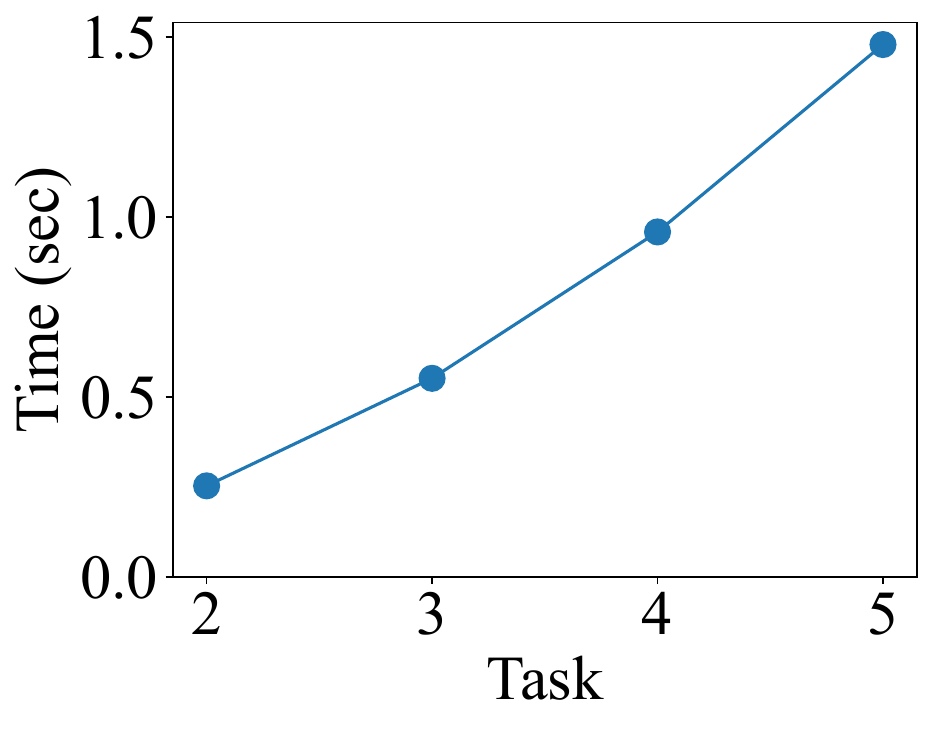}
        \vskip -0.1in
        \caption*{(c) CIFAR-10.}
        \label{fig:runtime_cifar-10}
    \end{minipage}
    \begin{minipage}[t]{0.3\textwidth}
        \centering
        \includegraphics[width=\textwidth]{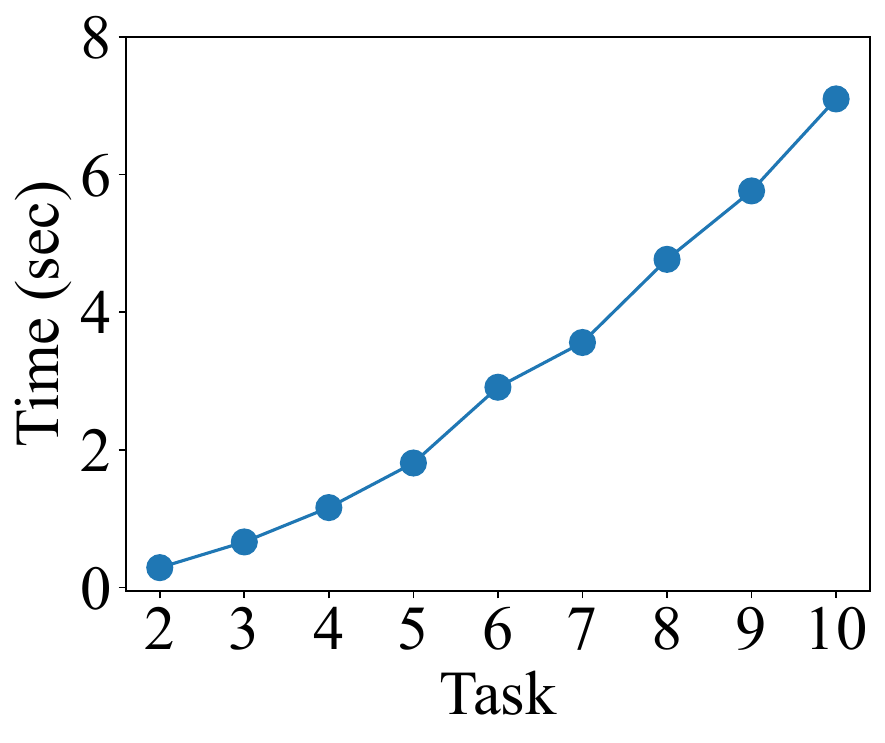}
        \vskip -0.1in
        \caption*{(d) CIFAR-100.}
        \label{fig:runtime_cifar-100}
    \end{minipage}
    \begin{minipage}[t]{0.31\textwidth}
        \centering
        \includegraphics[width=\textwidth]{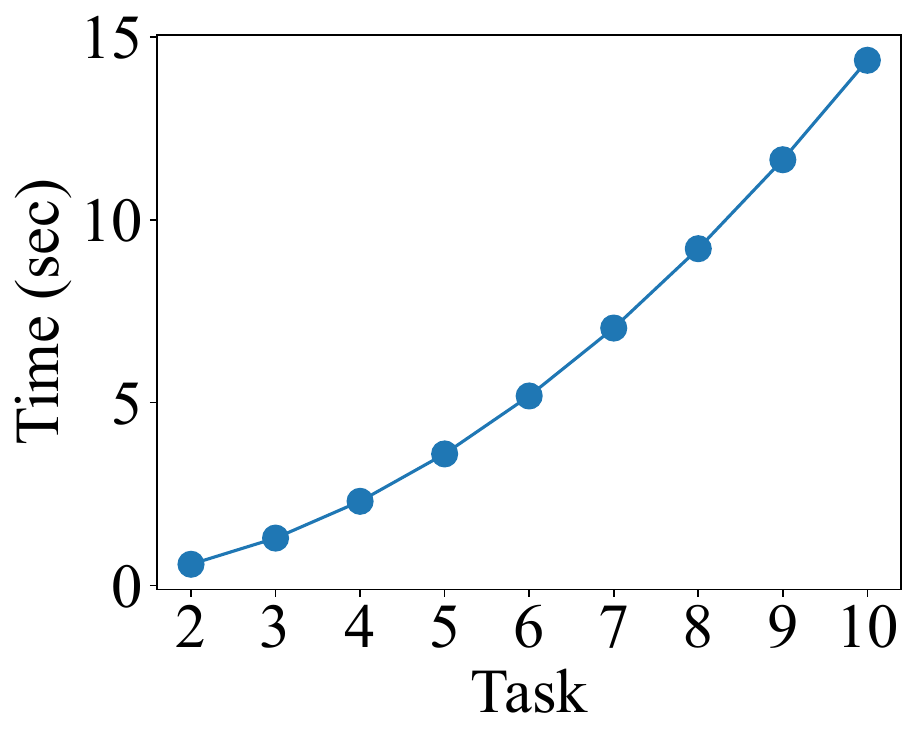}
        \vskip -0.1in
        \caption*{(e) Tiny ImageNet.}
        \label{fig:runtime_tiny-imagenet}
    \end{minipage}
    \vskip -0.1in
    \caption{Runtime results for solving the optimization problem in \method{} per epoch with respect to the number of tasks on the MNIST, FMNIST, CIFAR-10, CIFAR-100, and Tiny ImageNet datasets.}
    \label{fig:runtime_exp}
\end{figure}

In addition, we provide the average training time results over all tasks, both without and with GradMix, based on the experience replay method (ER) across all datasets as shown in Table~\ref{tbl:total_runtime_exp}. Note that other data augmentation baselines incur negligible computational overhead, resulting in training times similar to the method without GradMix. The computational overhead of GradMix depends on the last layer gradient computation, but our method computes gradients of only a small exemplar set for each class, not the whole training data. While GradMix incurs some additional computational cost relative to conventional training, we believe this is a worthwhile trade-off as it consistently leads to improved overall accuracy.

\begin{table*}[h]
  \caption{Average training time results over all tasks, both without and with \method{}, based on the experience replay method (ER) on the MNIST, FMNIST, CIFAR-10, CIFAR-100, and Tiny ImageNet datasets.}
  \centering
  \begin{tabular}{l|ccccc}
  \toprule
    {Method} & {\sf MNIST} & {\sf FMNIST} & {\sf CIFAR-10} & {\sf CIFAR-100} & {\sf Tiny ImageNet} \\
    \midrule
    {W/o GradMix} & 16 s & 19 s & 634 s & 1689 s & 3683 s \\
    {W/ GradMix} & 20 s & 24 s & 695 s & 2675 s & 4850 s \\
    \bottomrule
  \end{tabular}
  \label{tbl:total_runtime_exp}
\end{table*}

\subsection{More Details on Sensitivity of \method{} to the Mixup Parameter $\alpha$}
\label{appendix:exp-alpha}

Continuing from Sec.~\ref{parameters}, we perform additional experiments to analyze the sensitivity of \method{} to the Mixup parameter $\alpha$ as shown in Table~\ref{tbl:sensitivity_alpha}. As a result, the average accuracy of \method{} is not sensitive to $\alpha$. We believe that mixing the helpful pairs of classes that do not cause catastrophic forgetting might be beneficial to stabilize its performance regardless of $\alpha$. For Tiny ImageNet, a small value of $\alpha$ shows better accuracy results as in the Mixup paper\,\cite{DBLP:conf/iclr/ZhangCDL18}, and our selective mixup algorithm struggles to overcome the data characteristics.

\begin{table*}[h]
  \caption{Average accuracy results of \method{} when varying the $\alpha$ value to 0.4, 1.0, and 2.0 on the MNIST, FMNIST, CIFAR-10, CIFAR-100, and Tiny ImageNet datasets.} 
  \centering
  \begin{tabular}{l|ccccc}
  \toprule
    {Method} & {\sf MNIST} & {\sf FMNIST} & {\sf CIFAR-10} & {\sf CIFAR-100} & {\sf Tiny ImageNet} \\
    \midrule
    {GradMix ($\alpha = 0.4$)} & {0.919}\tiny{$\pm$0.005} & {0.795}\tiny{$\pm$0.006} & {0.657}\tiny{$\pm$0.010} & {0.548}\tiny{$\pm$0.006} & {0.405}\tiny{$\pm$0.006} \\
    {GradMix ($\alpha = 1.0$)} & {0.918}\tiny{$\pm$0.004} & {0.802}\tiny{$\pm$0.009} & {0.667}\tiny{$\pm$0.025} & {0.546}\tiny{$\pm$0.008} & {0.386}\tiny{$\pm$0.006} \\
    {GradMix ($\alpha = 2.0$)} & {0.917}\tiny{$\pm$0.001} & {0.809}\tiny{$\pm$0.007} & {0.660}\tiny{$\pm$0.020} & {0.543}\tiny{$\pm$0.009} & {0.372}\tiny{$\pm$0.005}
    \\
    \bottomrule
  \end{tabular}
  \label{tbl:sensitivity_alpha}
\end{table*}

\subsection{More Details on Compatibility of \method{} to Advanced Experience Replay Methods}
\label{appendix:exp-comp}

Continuing from Sec.~\ref{main-results}, we perform additional experiments when using MIR\,\cite{DBLP:conf/nips/AljundiBTCCLP19}, GSS\,\cite{DBLP:conf/nips/AljundiLGB19}, DER\,\cite{DBLP:conf/nips/BuzzegaBPAC20}, and FOSTER\,\cite{DBLP:conf/eccv/WangZYZ22} as basis experience replay methods for class-incremental learning, and the results are shown in Table~\ref{tbl:main_mir}--Table~\ref{tbl:main_foster}. As a result, \method{} still outperforms other data augmentation baselines when based on advanced experience replay methods. We believe that \method{} also can be extended to other experience replay methods including ASER\,\cite{DBLP:conf/aaai/ShimMJSKJ21} and SCR\,\cite{DBLP:conf/cvpr/MaiLKS21}.


\begin{table*}[h]
  \caption{Average accuracy results on the MNIST, FMNIST, CIFAR-10, CIFAR-100, and Tiny ImageNet datasets. We use MIR as a basis method of class-incremental learning and compare \method{} with three types of data augmentation baselines: (1) mixup-based methods: {\it Mixup}, {\it Manifold Mixup}, and {\it CutMix}; (2) imbalance-aware mixup methods: {\it Remix} and {\it Balanced-MixUp}; and (3) a policy-based method: {\it RandAugment}.} 
  \centering
  \begin{tabular}{c|l|ccccc}
  \toprule
    {Type} & {Method} & {\sf MNIST} & {\sf FMNIST} & {\sf CIFAR-10} & {\sf CIFAR-100} & {\sf Tiny ImageNet} \\
    \midrule
    {Experience Replay} & {MIR} & {0.900}\tiny{$\pm$0.006} & {0.752}\tiny{$\pm$0.006} & {0.620}\tiny{$\pm$0.009} & {0.493}\tiny{$\pm$0.021} & {0.386}\tiny{$\pm$0.009} \\
    \cmidrule{1-7}
    \multirow{3}{*}{Mixup-based} & {+ Mixup} & {0.878}\tiny{$\pm$0.010} & {0.770}\tiny{$\pm$0.006} & {0.644}\tiny{$\pm$0.008} & {0.504}\tiny{$\pm$0.011} & {0.372}\tiny{$\pm$0.009} \\
    & {+ Manifold Mixup} & {0.900}\tiny{$\pm$0.007} & {0.776}\tiny{$\pm$0.005} & {0.619}\tiny{$\pm$0.005} & {0.522}\tiny{$\pm$0.009} & {0.388}\tiny{$\pm$0.008} \\
    & {+ CutMix} & {0.884}\tiny{$\pm$0.005} & {0.787}\tiny{$\pm$0.006} & {0.514}\tiny{$\pm$0.010} & {0.406}\tiny{$\pm$0.010} & {0.349}\tiny{$\pm$0.006} \\
    \cmidrule{1-7}
    \multirow{2}{*}{Imbalance-aware} & {+ Remix} & {0.911}\tiny{$\pm$0.006} & {0.789}\tiny{$\pm$0.009} & {0.638}\tiny{$\pm$0.009} & {0.525}\tiny{$\pm$0.009} & {0.393}\tiny{$\pm$0.007} \\
    & {+ Balanced-MixUp} & {0.887}\tiny{$\pm$0.006} & {0.771}\tiny{$\pm$0.008} & {0.627}\tiny{$\pm$0.018} & {0.515}\tiny{$\pm$0.013} & {0.380}\tiny{$\pm$0.008} \\
    \cmidrule{1-7}
    {Policy-based} & {+ RandAugment} & {0.770}\tiny{$\pm$0.007} & {0.648}\tiny{$\pm$0.010} & {0.636}\tiny{$\pm$0.005} & {0.485}\tiny{$\pm$0.008} & {0.366}\tiny{$\pm$0.010} \\
    \cmidrule{1-7}
    {Selective Mixup} & {\bf + GradMix (ours)} & \textbf{{0.916}\tiny{$\pm$0.003}} & \textbf{{0.797}\tiny{$\pm$0.003}} & \textbf{{0.669}\tiny{$\pm$0.013}} & \textbf{{0.546}\tiny{$\pm$0.007}} & \textbf{{0.411}\tiny{$\pm$0.009}} \\
    \bottomrule
  \end{tabular}
  \label{tbl:main_mir}
\end{table*}

\begin{table*}[h]
  \caption{Average accuracy results on the MNIST, FMNIST, CIFAR-10, CIFAR-100, and Tiny ImageNet datasets. We use GSS as a basis method of class-incremental learning, and other experimental settings are the same as in Table~\ref{tbl:main_mir}.} 
  \centering
  \begin{tabular}{c|l|ccccc}
  \toprule
    {Type} & {Method} & {\sf MNIST} & {\sf FMNIST} & {\sf CIFAR-10} & {\sf CIFAR-100} & {\sf Tiny ImageNet} \\
    \midrule
    {Experience Replay} & {GSS} & {0.911}\tiny{$\pm$0.005} & {0.772}\tiny{$\pm$0.007} & {0.627}\tiny{$\pm$0.009} & {0.512}\tiny{$\pm$0.013} & {0.397}\tiny{$\pm$0.008} \\
    \cmidrule{1-7}
    \multirow{3}{*}{Mixup-based} & {+ Mixup} & {0.884}\tiny{$\pm$0.008} & {0.777}\tiny{$\pm$0.010} & {0.653}\tiny{$\pm$0.010} & {0.532}\tiny{$\pm$0.011} & {0.387}\tiny{$\pm$0.007} \\
    & {+ Manifold Mixup} & {0.912}\tiny{$\pm$0.006} & {0.781}\tiny{$\pm$0.006} & {0.628}\tiny{$\pm$0.009} & {0.543}\tiny{$\pm$0.007} & {0.404}\tiny{$\pm$0.007} \\
    & {+ CutMix} & {0.892}\tiny{$\pm$0.007} & {0.793}\tiny{$\pm$0.006} & {0.523}\tiny{$\pm$0.011} & {0.417}\tiny{$\pm$0.011} & {0.348}\tiny{$\pm$0.006} \\
    \cmidrule{1-7}
    \multirow{2}{*}{Imbalance-aware} & {+ Remix} & {0.904}\tiny{$\pm$0.007} & {0.797}\tiny{$\pm$0.008} & {0.647}\tiny{$\pm$0.010} & {0.549}\tiny{$\pm$0.008} & {0.406}\tiny{$\pm$0.006} \\
    & {+ Balanced-MixUp} & {0.886}\tiny{$\pm$0.007} & {0.776}\tiny{$\pm$0.007} & {0.642}\tiny{$\pm$0.012} & {0.538}\tiny{$\pm$0.010} & {0.393}\tiny{$\pm$0.007} \\
    \cmidrule{1-7}
    {Policy-based} & {+ RandAugment} & {0.768}\tiny{$\pm$0.009} & {0.659}\tiny{$\pm$0.008} & {0.643}\tiny{$\pm$0.008} & {0.497}\tiny{$\pm$0.008} & {0.381}\tiny{$\pm$0.009} \\
    \cmidrule{1-7}
    {Selective Mixup} & {\bf + GradMix (ours)} & \textbf{{0.927}\tiny{$\pm$0.005}} & \textbf{{0.808}\tiny{$\pm$0.007}} & \textbf{{0.674}\tiny{$\pm$0.011}} & \textbf{{0.563}\tiny{$\pm$0.007}} & \textbf{{0.418}\tiny{$\pm$0.006}} \\
    \bottomrule
  \end{tabular}
  \label{tbl:main_gss}
\end{table*}

\begin{table*}[h]
  \caption{Average accuracy results on the MNIST, FMNIST, CIFAR-10, CIFAR-100, and Tiny ImageNet datasets. We use DER as a basis method of class-incremental learning, and other experimental settings are the same as in Table~\ref{tbl:main_mir}.} 
  \centering
  \begin{tabular}{c|l|ccccc}
  \toprule
    {Type} & {Method} & {\sf MNIST} & {\sf FMNIST} & {\sf CIFAR-10} & {\sf CIFAR-100} & {\sf Tiny ImageNet} \\
    \midrule
    {Experience Replay} & {DER} & {0.927}\tiny{$\pm$0.004} & {0.794}\tiny{$\pm$0.007} & {0.678}\tiny{$\pm$0.010} & {0.560}\tiny{$\pm$0.009} & {0.422}\tiny{$\pm$0.008} \\
    \cmidrule{1-7}
    \multirow{3}{*}{Mixup-based} & {+ Mixup} & {0.902}\tiny{$\pm$0.006} & {0.801}\tiny{$\pm$0.009} & {0.698}\tiny{$\pm$0.011} & {0.574}\tiny{$\pm$0.008} & {0.413}\tiny{$\pm$0.007} \\
    & {+ Manifold Mixup} & {0.927}\tiny{$\pm$0.005} & {0.810}\tiny{$\pm$0.005} & {0.687}\tiny{$\pm$0.008} & {0.580}\tiny{$\pm$0.006} & {0.429}\tiny{$\pm$0.007} \\
    & {+ CutMix} & {0.913}\tiny{$\pm$0.005} & {0.819}\tiny{$\pm$0.007} & {0.655}\tiny{$\pm$0.010} & {0.445}\tiny{$\pm$0.009} & {0.392}\tiny{$\pm$0.006} \\
    \cmidrule{1-7}
    \multirow{2}{*}{Imbalance-aware} & {+ Remix} & {0.918}\tiny{$\pm$0.006} & {0.819}\tiny{$\pm$0.007} & {0.693}\tiny{$\pm$0.009} & {0.588}\tiny{$\pm$0.007} & {0.431}\tiny{$\pm$0.008} \\
    & {+ Balanced-MixUp} & {0.910}\tiny{$\pm$0.007} & {0.800}\tiny{$\pm$0.008} & {0.685}\tiny{$\pm$0.012} & {0.575}\tiny{$\pm$0.010} & {0.426}\tiny{$\pm$0.007} \\
    \cmidrule{1-7}
    {Policy-based} & {+ RandAugment} & {0.880}\tiny{$\pm$0.008} & {0.770}\tiny{$\pm$0.010} & {0.692}\tiny{$\pm$0.007} & {0.552}\tiny{$\pm$0.008} & {0.410}\tiny{$\pm$0.008} \\
    \cmidrule{1-7}
    {Selective Mixup} & {\bf + GradMix (ours)} & \textbf{{0.949}\tiny{$\pm$0.004}} & \textbf{{0.835}\tiny{$\pm$0.006}} & \textbf{{0.714}\tiny{$\pm$0.009}} & \textbf{{0.604}\tiny{$\pm$0.006}} & \textbf{{0.452}\tiny{$\pm$0.006}} \\
    \bottomrule
  \end{tabular}
  \label{tbl:main_der}
\end{table*}

\begin{table*}[h]
  \caption{Average accuracy results on the MNIST, FMNIST, CIFAR-10, CIFAR-100, and Tiny ImageNet datasets. We use FOSTER as a basis method of class-incremental learning, and other experimental settings are the same as in Table~\ref{tbl:main_mir}.} 
  \centering
  \begin{tabular}{c|l|ccccc}
  \toprule
    {Type} & {Method} & {\sf MNIST} & {\sf FMNIST} & {\sf CIFAR-10} & {\sf CIFAR-100} & {\sf Tiny ImageNet} \\
    \midrule
    {Dynamic Networks} & {FOSTER} & {0.950}\tiny{$\pm$0.003} & {0.835}\tiny{$\pm$0.005} & {0.732}\tiny{$\pm$0.010} & {0.618}\tiny{$\pm$0.007} & {0.512}\tiny{$\pm$0.007} \\
    \cmidrule{1-7}
    \multirow{3}{*}{Mixup-based} & {+ Mixup} & {0.928}\tiny{$\pm$0.005} & {0.837}\tiny{$\pm$0.007} & {0.746}\tiny{$\pm$0.011} & {0.631}\tiny{$\pm$0.009} & {0.505}\tiny{$\pm$0.006} \\
    & {+ Manifold Mixup} & {0.951}\tiny{$\pm$0.004} & {0.843}\tiny{$\pm$0.006} & {0.735}\tiny{$\pm$0.009} & {0.640}\tiny{$\pm$0.006} & {0.519}\tiny{$\pm$0.007} \\
    & {+ CutMix} & {0.936}\tiny{$\pm$0.004} & {0.852}\tiny{$\pm$0.006} & {0.701}\tiny{$\pm$0.010} & {0.502}\tiny{$\pm$0.008} & {0.486}\tiny{$\pm$0.006} \\
    \cmidrule{1-7}
    \multirow{2}{*}{Imbalance-aware} & {+ Remix} & {0.943}\tiny{$\pm$0.004} & {0.853}\tiny{$\pm$0.007} & {0.741}\tiny{$\pm$0.010} & {0.649}\tiny{$\pm$0.007} & {0.525}\tiny{$\pm$0.007} \\
    & {+ Balanced-MixUp} & {0.932}\tiny{$\pm$0.005} & {0.831}\tiny{$\pm$0.007} & {0.729}\tiny{$\pm$0.012} & {0.633}\tiny{$\pm$0.008} & {0.510}\tiny{$\pm$0.006} \\
    \cmidrule{1-7}
    {Policy-based} & {+ RandAugment} & {0.905}\tiny{$\pm$0.006} & {0.803}\tiny{$\pm$0.008} & {0.738}\tiny{$\pm$0.007} & {0.603}\tiny{$\pm$0.007} & {0.496}\tiny{$\pm$0.008} \\
    \cmidrule{1-7}
    {Selective Mixup} & {\bf + GradMix (ours)} & \textbf{{0.973}\tiny{$\pm$0.003}} & \textbf{{0.878}\tiny{$\pm$0.004}} & \textbf{{0.769}\tiny{$\pm$0.009}} & \textbf{{0.661}\tiny{$\pm$0.006}} & \textbf{{0.547}\tiny{$\pm$0.006}} \\
    \bottomrule
  \end{tabular}
  \label{tbl:main_foster}
\end{table*}

\subsection{More Details on Sequential Performance Results}
\label{appendix:exp-seq}

Continuing from Sec.~\ref{main-results}, we provide sequential accuracy results with respect to tasks as shown in Fig.~\ref{fig:seq_exp}. As a result, \method{} shows better accuracy performance than other data augmentation baselines at each task. 

\begin{figure}[H]
    \centering
    \begin{minipage}[t]{0.7\textwidth}
        \centering
        \includegraphics[width=\textwidth]{icml2024/legend_seq_acc.pdf}
    \end{minipage}
    \begin{minipage}[t]{0.3\textwidth}
        \centering
        \includegraphics[width=\textwidth]{icml2024/seq_acc_mnist.pdf}
        \vskip -0.1in
        \caption*{(a) MNIST.}
        \label{fig:seq_mnist}
    \end{minipage}
    \begin{minipage}[t]{0.3\textwidth}
        \centering
        \includegraphics[width=\textwidth]{icml2024/seq_acc_fmnist.pdf}
        \vskip -0.1in
        \caption*{(b) FMNIST.}
        \label{fig:seq_fmnist}
    \end{minipage}
    \begin{minipage}[t]{0.3\textwidth}
        \centering
        \includegraphics[width=\textwidth]{icml2024/seq_acc_cifar10.pdf}
        \vskip -0.1in
        \caption*{(c) CIFAR-10.}
        \label{fig:seq_cifar-10}
    \end{minipage}
    \vspace{0.01\textwidth}
    \begin{minipage}[t]{0.3\textwidth}
        \centering
        \includegraphics[width=\textwidth]{icml2024/seq_acc_cifar100.pdf}
        \vskip -0.1in
        \caption*{(d) CIFAR-100.}
        \label{fig:seq_cifar-100}
    \end{minipage}
    \begin{minipage}[t]{0.3\textwidth}
        \centering
        \includegraphics[width=\textwidth]{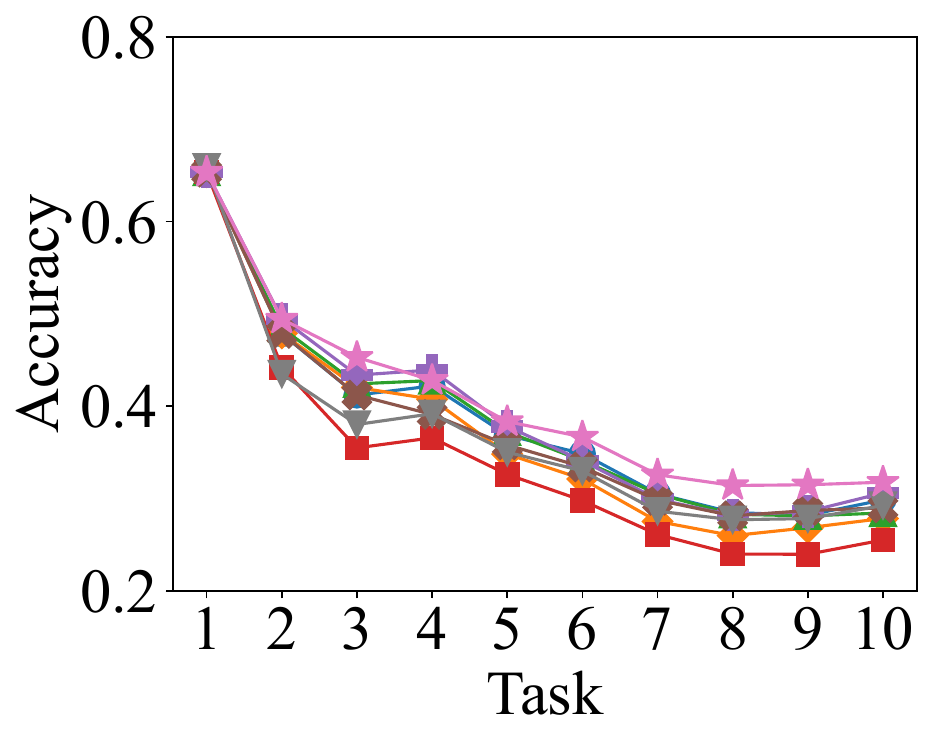}
        \vskip -0.1in
        \caption*{(e) Tiny ImageNet.}
        \label{fig:seq_tiny-imagenet}
    \end{minipage}
    \vskip -0.1in
    \caption{Sequential accuracy results of \method{} and baselines on the five datasets as new tasks are learned.}
    \label{fig:seq_exp}
\end{figure}


\subsection{More Details on the Effectiveness of \method{} on Large-Scale Datasets}
\label{appendix:large_scale}

Continuing from Sec.~\ref{main-results}, we perform additional experiments to demonstrate the effectiveness of GradMix on large-scale datasets. We use the ImageNet-1K\,\cite{DBLP:conf/cvpr/DengDSLL009} dataset with 10 incremental tasks and provide the average accuracy results in Table~\ref{tbl:main_large}. Using ER as a basis method, the experimental results show that GradMix continues to outperform other data augmentation baselines on the large-scale dataset. In general, GradMix is scalable by only computing the last layer gradients of a small exemplar set for each class.

\begin{table}[H]
  \caption{Average accuracy results on the ImageNet-1K dataset. We use ER as a basis method of class-incremental learning, and other experimental settings are the same as in Table~\ref{tbl:main_mir}.}
  \centering
  \begin{tabular}{c|l|c}
  \toprule
    {Type} & {Method} & {\sf ImageNet-1K} \\
    \midrule
    {Experience Replay} & {ER} & {0.326}\tiny{$\pm$0.012} \\
    \cmidrule{1-3}
    \multirow{3}{*}{Mixup-based} & {+ Mixup} & {0.313}\tiny{$\pm$0.013} \\
    & {+ Manifold Mixup} & {0.328}\tiny{$\pm$0.015} \\
    & {+ CutMix} & {0.281}\tiny{$\pm$0.016} \\
    \cmidrule{1-3}
    \multirow{2}{*}{Imbalance-aware} & {+ Remix} & {0.335}\tiny{$\pm$0.014} \\
    & {+ Balanced-MixUp} & {0.322}\tiny{$\pm$0.012} \\
    \cmidrule{1-3}
    {Policy-based} & {+ RandAugment} & {0.305}\tiny{$\pm$0.013} \\
    \cmidrule{1-3}
    {Selective Mixup} & {\bf + GradMix (ours)} & \textbf{{0.356}\tiny{$\pm$0.010}} \\
    \bottomrule
  \end{tabular}
  \label{tbl:main_large}
\end{table}

\subsection{More Details on Varying the Backbone Model}
\label{appendix:backbone_model}

Continuing from Sec.~\ref{main-results}, we perform additional experiments when using an initialized ViT\,\cite{DBLP:conf/iclr/DosovitskiyB0WZ21} as the backbone model on the CIFAR-100 and Tiny ImageNet datasets. The average accuracy results are shown in Table~\ref{tbl:main_backbone}, demonstrating the effectiveness of GradMix compared to the baselines when combined with ViT. However, the overall performance with ViT is lower than that with ResNet-18, which indicates that ViT may not be well-suited for continual learning settings with limited buffer data due to constraints in memory capacity, computational efficiency, and privacy. Nevertheless, transformer architectures specifically designed for continual learning, such as DyTox\,\cite{DBLP:conf/cvpr/DouillardRCC22}, have demonstrated strong performance by dynamically expanding task-specific tokens, suggesting the potential of tailored transformer designs in overcoming the limitations observed with standard ViT backbones. In addition, we believe that using a pre-trained model as the initial model would further improve overall performance, while GradMix would still be effectively adopted, as it relies on a gradient-based criterion adaptively derived from both data and the model.

\begin{table}[H]
  \caption{Average accuracy results on the CIFAR-100 and Tiny ImageNet datasets with ResNet-18 and ViT backbone models. We use ER as a basis method of class-incremental learning, and other experimental settings are the same as in Table~\ref{tbl:main_mir}.}
  \centering
  \begin{tabular}{c|l|cc|cc}
  \toprule
  {Type} & {Method} & \multicolumn{2}{c|}{{\sf CIFAR-100}} & \multicolumn{2}{c}{{\sf Tiny ImageNet}} \\
  \cmidrule{1-6}
  {} & {} & {ResNet-18} & {ViT} & {ResNet-18} & {ViT} \\
  \midrule
  {Experience Replay} & {ER} & {0.491}\tiny{$\pm$0.019} & {0.371}\tiny{$\pm$0.015} & {0.385}\tiny{$\pm$0.009} & {0.275}\tiny{$\pm$0.010} \\
  \cmidrule{1-6}
  \multirow{3}{*}{Mixup-based} & {+ Mixup} & {0.507}\tiny{$\pm$0.010} & {0.402}\tiny{$\pm$0.012} & {0.371}\tiny{$\pm$0.008} & {0.284}\tiny{$\pm$0.009} \\
  & {+ Manifold Mixup} & {0.521}\tiny{$\pm$0.008} & {0.410}\tiny{$\pm$0.011} & {0.386}\tiny{$\pm$0.009} & {0.298}\tiny{$\pm$0.011} \\
  & {+ CutMix} & {0.404}\tiny{$\pm$0.009} & {0.315}\tiny{$\pm$0.014} & {0.344}\tiny{$\pm$0.010} & {0.236}\tiny{$\pm$0.011} \\
  \cmidrule{1-6}
  \multirow{2}{*}{Imbalance-aware} & {+ Remix} & {0.528}\tiny{$\pm$0.007} & {0.426}\tiny{$\pm$0.010} & {0.391}\tiny{$\pm$0.008} & {0.305}\tiny{$\pm$0.009} \\
  & {+ Balanced-MixUp} & {0.516}\tiny{$\pm$0.009} & {0.396}\tiny{$\pm$0.013} & {0.379}\tiny{$\pm$0.008} & {0.281}\tiny{$\pm$0.010} \\
  \cmidrule{1-6}
  {Policy-based} & {+ RandAugment} & {0.484}\tiny{$\pm$0.007} & {0.353}\tiny{$\pm$0.012} & {0.368}\tiny{$\pm$0.007} & {0.263}\tiny{$\pm$0.008} \\
  \cmidrule{1-6}
  {Selective Mixup} & {\bf + GradMix (ours)} & \textbf{{0.546}\tiny{$\pm$0.008}} & \textbf{{0.460}\tiny{$\pm$0.012}} & \textbf{{0.405}\tiny{$\pm$0.006}} & \textbf{{0.327}\tiny{$\pm$0.010}} \\
  \bottomrule
  \end{tabular}
  \label{tbl:main_backbone}
\end{table}


\subsection{More Details on Selective Mixup Analysis}
\label{appendix:analysis}

Continuing from Sec.~\ref{analysis}, we provide more details on the selective mixup analysis as shown in Fig.~\ref{fig:heatmap_exp}.


\begin{figure}[H]
    \centering
    \begin{minipage}[t]{0.31\textwidth}
        \centering
        \includegraphics[width=\textwidth]{icml2024/heatmap_mnist.pdf}
        \vskip -0.1in
        \caption*{(a) MNIST.}
        \label{fig:heatmap_mnist}
    \end{minipage}
    \begin{minipage}[t]{0.31\textwidth}
        \centering
        \includegraphics[width=\textwidth]{icml2024/heatmap_fmnist.pdf}
        \vskip -0.1in
        \caption*{(b) FMNIST.}
        \label{fig:heatmap_fmnist}
    \end{minipage}
    \begin{minipage}[t]{0.31\textwidth}
        \centering
        \includegraphics[width=\textwidth]{icml2024/heatmap_cifar10.pdf}
        \vskip -0.1in
        \caption*{(c) CIFAR-10.}
        \label{fig:heatmap_cifar-10}
        \vspace{+0.3cm}
    \end{minipage}
    \begin{minipage}[t]{0.31\textwidth}
        \centering
        \includegraphics[width=\textwidth]{icml2024/heatmap_cifar100.pdf}
        \vskip -0.1in
        \caption*{(d) CIFAR-100.}
        \label{fig:heatmap_cifar-100}
    \end{minipage}
    \begin{minipage}[t]{0.31\textwidth}
        \centering
        \includegraphics[width=\textwidth]{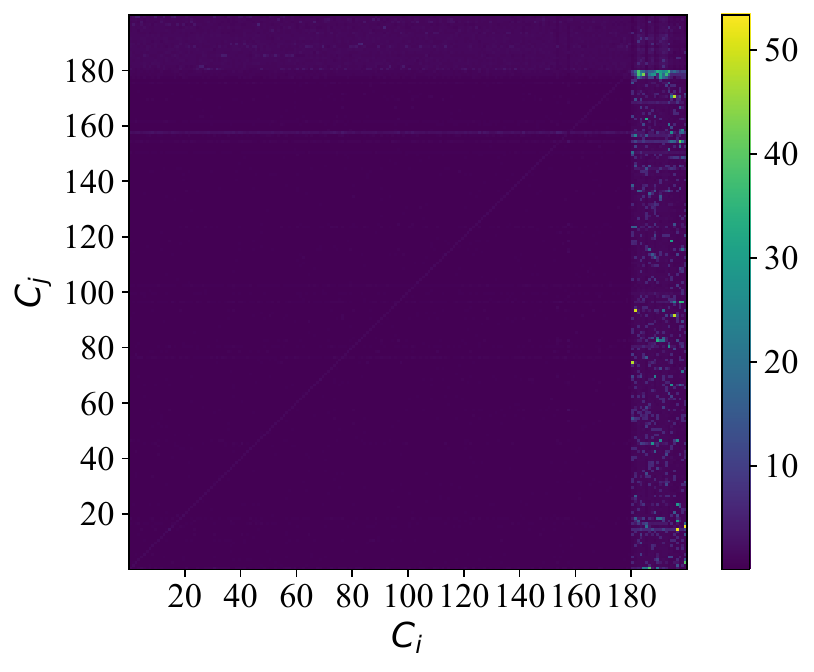}
        \vskip -0.1in
        \caption*{(e) Tiny ImageNet.}
        \label{fig:heatmap_tiny-imagenet}
    \end{minipage}
    \vskip -0.1in
    \caption{Selective mixup results of \method{} on the last task of the five datasets.}
    \label{fig:heatmap_exp}
\end{figure}

\section{Appendix -- More Related Work}
\label{appendix:related_work}

Continuing from Sec.~\ref{related_work}, we provide a more detailed discussion of the related work.

\paragraph{Data Augmentation}

Data augmentation is an effective method to enhance the generalization and robustness of models by generating additional training data. Traditional data augmentation techniques include image transformations such as random rotation, flipping, and cropping, which introduce variations to the input images\,\cite{DBLP:journals/jbd/ShortenK19, DBLP:journals/corr/abs-2204-08610}. Recent works propose an automatic strategy to find an effective data augmentation policy\,\cite{DBLP:conf/cvpr/CubukZMVL19, DBLP:conf/nips/CubukZS020, DBLP:conf/iccv/MullerH21}. However, these policy-based approaches do not consider continual learning scenarios, which may result in severe catastrophic forgetting as we show in our experiments.

Another line of research is Mixup\,\cite{DBLP:conf/iclr/ZhangCDL18}, which generates new synthetic data by linearly interpolating between pairs of features and their corresponding labels, and there are variant mixup-based methods\,\cite{DBLP:conf/icml/VermaLBNMLB19, DBLP:conf/iccv/YunHCOYC19, harris2021fmixenhancingmixedsample}. In addition, imbalance-aware mixup methods have been proposed to improve model robustness on imbalanced datasets. A recent study\,\cite{DBLP:conf/eccv/ChouCPWJ20} relaxes the mixing factor of labels to assign higher weights to the minority class while features are mixed in the same way as vanilla Mixup. Another study\,\cite{DBLP:conf/miccai/GaldranCB21} selects sample pairs for mixup using both instance-based and class-based sampling in order to be robust for learning highly imbalanced medical image datasets. Furthermore, selective mixup is introduced by selecting sample pairs for mixup based on specific criteria rather than random, and it improves the model's ability for domain adaptation\,\cite{DBLP:conf/aaai/XuZNLWTZ20} and out-of-distribution (OOD) generalization\,\cite{DBLP:conf/icml/Yao0LZL0F22}.

A recent line of research applies existing data augmentation methods in class-incremental learning to further improve accuracy. The first work\,\cite{DBLP:conf/cvpr/MiKLYF20} simply combines vanilla Mixup to experience replay methods. Another work\,\cite{DBLP:conf/cvpr/BangKY0C21} employs the combination of existing data augmentation methods to enhance the diversity of samples in a buffer. In addition, a recent work\,\cite{DBLP:conf/nips/KumariW0B22} proposes adversarial perturbation technique that moves buffer data towards the current task data and utilizes Mixup within buffer data before adding perturbation for more diversity. Extending this line of work,\,\cite{DBLP:conf/aaai/QiangH0L0Z23} improves the sampling and mixing factor strategies in Mixup to address distribution discrepancies between the imbalanced training set and the balanced test set in continual learning, which is similar to existing imbalance-aware mixup methods\,\cite{DBLP:conf/eccv/ChouCPWJ20, DBLP:conf/miccai/GaldranCB21}. Another work\,\cite{DBLP:conf/nips/KimXKK022} addresses class-incremental learning by using out-of-distribution detection methods that utilize image rotation as a data augmentation technique. Another line of research is using generative models to generate data for previous tasks\,\cite{DBLP:conf/nips/ShinLKK17, DBLP:conf/icml/GaoL23a, DBLP:conf/cvpr/KimCKTB24}, but this approach incurs additional costs for continuously training generative models across tasks. Also, there is a work\,\cite{DBLP:conf/nips/ZhuCZL21} to achieve comparable performance with experience replay methods without storing buffer data by utilizing class augmentation and semantic augmentation techniques. In the setting of online continual learning, a recent study\,\cite{DBLP:conf/nips/ZhangPFBLJ22} repeatedly trains on buffer data for each incoming task data and uses random augmentation to alleviate overfitting caused by the repeated data. However, all these works simply use existing data augmentation techniques for further improvements and do not propose any novel data augmentation methods specifically designed for continual learning scenarios. In comparison, \method{} is a robust data augmentation method designed to minimize catastrophic forgetting in class-incremental learning. Although adversarial attack-based methods are studied for feature augmentation in class-incremental learning\,\cite{DBLP:conf/aaai/KimPH24, DBLP:conf/icml/Zheng0YZ24}, they are generally regarded as a distinct line of research compared to Mixup-based approaches in the literature\,\cite{DBLP:conf/iclr/ZhangCDL18, DBLP:conf/icml/VermaLBNMLB19, DBLP:conf/iccv/YunHCOYC19, harris2021fmixenhancingmixedsample}. Since our work focuses specifically on Mixup techniques for class-incremental learning, we do not include adversarial attack-based methods in this study.

\section{Appendix -- Limitations}
\label{appendix:limitations}

The computational overhead of GradMix inevitably grows as the number of classes increases over time in class-incremental learning. Despite this limitation, we believe this is a worthwhile trade-off as GradMix consistently achieves improved overall accuracy. However, GradMix may not be effective in tasks where mixing samples is impractical, such as semantic segmentation or tasks involving discrete symbolic features, where interpolating either the feature or the label may lead to semantically inconsistent samples.

\end{document}